\documentclass{article}
\usepackage{amsthm}
\usepackage[utf8]{inputenc} 
\usepackage{geometry}
\usepackage{graphicx} 
\usepackage{booktabs} 
\usepackage{array} 
\usepackage{paralist} 
\usepackage{verbatim} 
\usepackage{subfig}
\usepackage{amsmath}
\usepackage{amsfonts}
\usepackage{amssymb}
\usepackage{amsthm}
\usepackage{authblk}
\usepackage{appendix}

\usepackage{fancyhdr}
\usepackage[numbers,sort&compress]{natbib}

\usepackage{sectsty}
\allsectionsfont{\sffamily\mdseries\upshape} 
\usepackage{url}
\usepackage[colorlinks=true, linkcolor=blue, citecolor=blue, urlcolor=blue]{hyperref}
\usepackage{thmtools}
\declaretheoremstyle[headfont=\bfseries, bodyfont=\normalfont]{proposition}

\usepackage[nottoc,notlof,notlot]{tocbibind} 
\usepackage[titles,subfigure]{tocloft}

\theoremstyle{remark}
\newtheorem{remark}{Remark}
\theoremstyle{plain}
\newtheorem{theorem}{Theorem}
\theoremstyle{proposition}

\theoremstyle{plain}
\newtheorem{lemma}{Lemma}
\theoremstyle{plain}

\theoremstyle{definition}
\newtheorem{defin}{Definition}
\theoremstyle{remark}
\newtheorem{example}{Example}
\theoremstyle{plain}
\newtheorem{ass}{Assumption}

\numberwithin{equation}{section}

\title{Optimal Non-Asymptotic Edgeworth Expansions for Multivariate Neural Network Outputs}
\author{Lucia Celli}
\date{} 
\affil[]{Department of Mathematics, University of Luxembourg}

\begin{document}

\maketitle
\begin{abstract}
    
Finite-width fully connected neural networks with Gaussian-initialized weights deviate from their infinite-width Gaussian limit, exhibiting non-vanishing higher-order cumulants. We approximate these deviations, for a neural network evaluated in a finite number of inputs, using multidimensional Edgeworth expansions of arbitrary order $4m-1$, with $m\in\mathbb{N}$. Assuming that the corresponding Gaussian limit has an invertible covariance matrix and that the activation function is polynomially bounded, we establish a bound of order $n^{-m}$ on the total variation distance between the law of the true network output and its Edgeworth approximation, with matching lower bounds. As an application, we quantify the error in Bayesian posterior distributions when the prior is replaced by its Edgeworth expansion. Our results are more general and also apply to sequences of conditionally Gaussian vectors converging to a Gaussian vector with invertible covariance.

\noindent{\bf Keywords:} {Edgeworth expansion; Neural networks; Limit theorems; Conditionally Gaussian Random variables; Gaussian initialization; Total variation distance; Bayesian supervised learning.}

\noindent{\bf AMS classification:} 60E10, 60F05, 60G15, 60G60, 68T07, 62C10
\end{abstract}

\section{Introduction}

It is well established that the output of a fully connected neural network with 
appropriately randomized initialization converges in distribution, as the layer 
width $n \to \infty$, to a Gaussian process 
(see, e.g., \cite{BR25,BT24,CP25,FHMNP,Trev,Torr23,Neal96,Klu22,BFF24,Han_Gas,Han22}). 
At finite width, however, networks deviate from this limit in systematic ways: their outputs exhibit non-Gaussian corrections whose structure and scaling can be explicitly quantified \cite{Han_Gas}.

{
A classical tool for approximating a random object that satisfies a Central Limit 
Theorem while accounting for its non-Gaussian fluctuations is the 
{Edgeworth expansion} (see e.g.\ \cite{H92,Mc87,BR86,Ko06,We13}). 
Its construction relies on the notion of {cumulants} (see e.g.\ \cite{NP12,PT11}), 
quantities given by the coefficients of the Taylor's expansion of the log-characteristic function {of a given random variable}. {These quantities provide a description of the shape of a distribution that is substantially finer than the sole use of mean and variance: for instance,} the third cumulant measures skewness, the fourth measures excess kurtosis, and higher-order cumulants capture increasingly subtle departures from Gaussianity.

In the CLT setting, under suitable moment conditions, {the relevant cumulants} 
decay in a controlled manner as the sample size grows, and the Edgeworth expansion 
organizes the correction to the Gaussian approximation as a power series in 
$n^{-1/2}$, where $n$ denotes the {sample size}.

Concretely, let $\mathbb{P}_n$ denote the law of the standardized sum $\sum_{i=1}^n\frac{X_n}{\sqrt{n}}$ for $\{X_i\}_i$ i.i.d. and let $\phi_1$ 
denote the standard Gaussian density on $\mathbb{R}$. Referring to \cite{MPS25,BR86}, the Edgeworth expansion takes 
the form
\begin{equation*}\label{eq:edgeworth}
    \int h(x)d\mathbb{P}_n(x) \;\approx\;\int h(x) \phi_1(x)\left(1 + \sum_{k=1}^{r} n^{-k/2}\, P_k(x)\right)dx,
\end{equation*}
for $h\in\mathcal{H}$, a class of test functions, and denoting, for each $k \geq 1$, $P_k$ is an explicit polynomial expressed in terms of 
Hermite polynomials (defined in \eqref{herm_def}), with coefficients determined 
by the cumulants of the summands. Truncating the series at order $r$ yields an 
approximation whose accuracy improves with both $r$ and $n$, and whose error can 
be rigorously controlled via suitable probability metrics 
(see e.g.\ \cite{MPS25,BR86}).
}

In recent years, several works have applied this Edgeworth-type of approximation to Bayesian neural networks. For instance, \cite{A19,NO24} introduce finite-width corrections of order $1/n$ to the Gaussian approximation for, respectively, single-hidden-layer and deep randomly initialized networks, explicitly exhibiting fourth-order Hermite terms. In particular, \cite{NO24} extend these results to the case of two inputs, introducing multidimensional Hermite polynomials. Building on these developments, \cite{Lu23} and \cite{NDSR20} construct a multivariate Edgeworth expansion (see, e.g., \cite{Mc87,H92}) up to the fourth cumulant for the joint distribution of finitely wide Bayesian neural-network outputs. Using this perturbed neural network as an approximation for the prior, the authors derive the corresponding non-Gaussian posterior distribution and apply it to Bayesian regression. These works thus propose a valid alternative to the Student's $t$ prior suggested, for example, by \cite{SWG14,PAPGGR23}, and to the infinite-width Gaussian prior, which has been shown to yield inferior performance (see, e.g., \cite{Ai20,PC21}).

These analyses strongly suggest that multidimensional Edgeworth expansions provide an accurate and tractable framework for capturing finite-width effects.
Nevertheless, what remains largely open is the derivation of non-asymptotic bounds (e.g. for the Total Variation distance between finite dimensional marginals) between the true law of a finite-width, multidimensional network output and its truncated Edgeworth expansion, including optimal rates and ideally matching lower bounds. In particular, while the works above show the plausibility of a multivariate Edgeworth scheme for fully connected neural networks, they do not deliver general, high-order error bounds of order $n^{-m}$ (for arbitrary $m\in\mathbb{N}$) together with matching lower bounds.

The aim of this paper is to fill this gap.  As an application, we use these results to quantify, in distributional metrics, the error incurred when approximating the posterior distribution of a neural network by replacing the prior with its Edgeworth expansion.

{
The code used to generate the figures in this paper is publicly available at~\cite{repo_edg_sim}.
}

\subsection{Structure of the paper}
In Section \ref{sec:tech_set} we introduce the formal framework of the paper, defining fully connected neural networks and conditionally Gaussian vectors. We also specify the Edgeworth expansion considered throughout and the total variation distance used to quantify approximation errors.

Section \ref{sec:our_contr} presents the main results, establishing upper and lower bounds on the total variation distance between the law of the neural network (respectively, the conditionally Gaussian vector) and its Edgeworth expansion in Theorem \ref{edg_nn} (respectively, Theorem \ref{edg_TV_generic}).

In Section \ref{sec:bayesian} we apply these results to a Bayesian supervised learning problem. In particular, we approximate the prior (given by the law of the neural network at initialization) by its Edgeworth expansion and estimate the resulting error, again in total variation distance, in the computation of the posterior distribution.

Appendix \ref{proof_th_edg_TV_gen} contains the proof of Theorem \ref{edg_TV_generic}, while Appendix \ref{proof_th_edg_TV} is devoted to the proof of Theorem \ref{edg_nn}. Additional technical lemmas are collected in the Appendix \ref{add_lemma}. Finally, in Appendix \ref{app:edg_exp} we briefly review the Edgeworth expansion following \cite{Mc87} and justify the choice of \eqref{edg_measure} as the appropriate Edgeworth expansion for conditionally Gaussian laws.

\section{Technical setup}\label{sec:tech_set}
\subsection{Notations}
For any random vector $X$ with values in $\mathbb{R}^d$ we write $X\sim\mathcal{N}_d(\mu,\Sigma)$ to denote that $X$ has a Gaussian distribution with expectation $\mu$ and with covariance matrix (resp. variance if $d=1$) $\Sigma$. When $\Sigma$ is definite positive (resp. different from zero if $d=1$) and $\mu=0$, we denote the density of $X$ as 
\[
x\in \mathbb{R}^d\to\phi_\Sigma(x).
\]

For any matrix $M\in\mathbb{R}^{d\times d}$ we denote by $\|M\|_{op}$ its operator norm and by $\|M\|_{HS}$ its Hilbert-Schmidt norm. For $n\in\mathbb{N}$ we write $M^{\otimes n}$ to denote the matrix in $\mathbb{R}^{(d\times d)^n}$ with components
\[
(M^{\otimes n})_{(i_1,\dots,i_{n}),(j_1,\dots,j_{n})}=M_{i_1,j_1}\dots M_{i_n,j_n}
\]
for $i_1,\dots,i_n,j_1,\dots,j_n=1,\dots,d$. We also denote $M^{\oplus n}$ the matrix in $\mathbb{R}^{nd\times nd}$, which is a diagonal block matrix with $n$ blocks that are equal to the matrix $M$.

\subsection{Fully connected neural networks and conditionally Gaussian matrices}
Let $\mathcal{X}:=\{x^{(1)},\dots,x^{(d)}\}\subseteq \mathbb{R}^{n_0}\setminus \{0\}$ be a collection of distinct inputs, and define
\begin{multline}\label{vec_z}
z^{(L+1)}(\mathcal{X})
:=\bigl(z_1^{(L+1)}(\mathcal{X}),\dots,z_{n_{L+1}}^{(L+1)}(\mathcal{X})\bigr)\\
:=\left(z_1^{(L+1)}(x^{(1)}),\dots,z_{1}^{(L+1)}(x^{(d)}),\dots, z_{n_{L+1}}^{(L+1)}(x^{(1)}),\dots,z_{n_{L+1}}^{(L+1)}(x^{(d)})\right)
\in \mathbb{R}^{n_{L+1}d}
\end{multline}
as the output vector of a fully connected neural network evaluated on $\mathcal{X}$.
For each $x\in\mathbb{R}^{n_0}$, the network is defined recursively (as in \cite{Han_Gas}) by
\begin{equation}\label{def_NN}
\begin{cases}
z_i^{(\ell)}(x)
= b_i^{(\ell)} + \displaystyle\sum_{j=1}^{n_{\ell-1}}
\sqrt{\frac{C_W}{n_{\ell-1}}}\, W_{i,j}^{(\ell)}\,
\sigma\!\left(z_j^{(\ell-1)}(x)\right),
& \ell=2,\dots,L+1,\\[1.2em]
z_i^{(1)}(x)
= b_i^{(1)} + \displaystyle\sum_{j=1}^{n_0}
\sqrt{\frac{C_W}{n_0}}\, W_{i,j}^{(1)}\, x_j,
& \ell=1,
\end{cases}
\end{equation}
for $i=1,\dots,n_\ell$. Here $\{b_i^{(\ell)}\}_{i,\ell}$ denote the biases, $\{W_{i,j}^{(\ell)}\}_{i,j,\ell}$ the weights, $\sigma:\mathbb{R}\to\mathbb{R}$ the activation function, $L$ the depth of the network, and $n_1,\dots,n_L$ its hidden-layer widths.

Following \cite{Han_Gas,CP25,FHMNP}, we impose the following assumptions.

\begin{ass}\label{bW}
At initialization, the biases $\{b_i^{(\ell)}\}_{i,\ell}$ are i.i.d.\ random variables with
\[
b_i^{(\ell)} \sim \mathcal{N}(0,C_b),
\]
for some constant $C_b\ge 0$, for all $\ell=1,\dots,L+1$ and $i=1,\dots,n_\ell$.
Moreover, the weights $\{W_{i,j}^{(\ell)}\}_{i,j,\ell}$ are i.i.d.\ random variables with
\[
W_{i,j}^{(\ell)} \sim \mathcal{N}(0,1),
\]
for all $\ell=1,\dots,L+1$, $i=1,\dots,n_\ell$, and $j=1,\dots,n_{\ell-1}$.
We further assume that the family of biases $\{b_i^{(\ell)}\}_{i,\ell}$ is independent of the family of weights $\{W_{i,j}^{(\ell)}\}_{i,j,\ell}$.
\end{ass}

\begin{ass}\label{hip_s}
    The activation function $\sigma$ is not constant and there exists an integer $r\ge 1$ such that $\sigma$ is either $r$ times continuously differentiable or it is $r-1$ times continuously differentiable and the $(r-1)$-derivative is a piece-wise linear function with a finite number of points of discontinuity for its derivative.

    Moreover the $r$-derivative of $\sigma$ is polynomially bounded, i.e. there exists $k\ge 1$ s.t.
    \[
    \sup_{x\in\mathbb{R}}\left|(1+|x|)^{-k}\frac{d^r}{dx^r}\sigma(x)\right|<\infty.
    \]
\end{ass}
Under Assumption \ref{bW}, the following Lemma \ref{gaus_stut_NN} is an immediate consequence of standard arguments, as observed, for example, in \cite{Han_Gas,FHMNP,CP25,Trev,BT24}, and constitutes a key ingredient of the present work. Before stating Lemma \ref{gaus_stut_NN}, we recall the definition of conditionally Gaussian vectors.

\begin{defin}[Conditional Gaussianity]\label{cond_Ga_def}
Let $Z$ be a square-integrable random vector
defined in $(\Omega,\mathcal A,\mathbb P)$ with values in $\mathbb R^{p}$, for $p\in\mathbb{N}$,
and such that $\mathbb E[Z]=0$.
We say that $Z$ is \emph{conditionally Gaussian} with respect to a $\sigma$-field
$\mathcal F\subseteq\mathcal A$ if there exists a positive semi-definite and
$\mathcal F$-measurable matrix
\[
A \in \mathbb R^{p\times p},
\]
called the \emph{conditional covariance matrix}, such that, $\mathbb P$-a.s.,
\begin{equation}\label{cond_Ga_def_tensor}
\mathbb E\!\left[
e^{i \langle y, Z\rangle}
\middle| \mathcal F
\right]
=
e^{-\frac12\langle y,Ay\rangle},
\end{equation}
for every $y\in\mathbb R^{p}$.
\end{defin}

\begin{lemma}[Lemma 7.1 in \cite{Han_Gas}, Lemma 1 in \cite{CP25}]\label{gaus_stut_NN}
    For every $\ell=1,\dots,L+1$ define $\mathcal{F}_\ell$ as the $\sigma$-field generated by $\{b_i^{(k)},W_{i,j}^{(k)}\}^{k=1,\dots,\ell}_{i=1,\dots, n_{k},j=1,\dots,n_{k-1}}$. Then, for every $\ell=2,\dots,L+1$, the following properties hold for the neural network evaluated in a finite number of inputs $($defined in \eqref{vec_z}$)$:
    \begin{enumerate}
        \item [(i)] conditionally on $\mathcal{F}_{\ell-1}$, the random vectors 
        \[
        \left\{z_i^{(\ell)}(\mathcal{X}):=\left(z_i^{(\ell)}(x^{(1)}),\dots,z_i^{(\ell)}(x^{(d)})\right)\right\}_{i=1,\dots,n_{\ell}}
        \] 
        are stochastically independent;
        \item[(ii)] the random vector $z^{(\ell)}(\mathcal{X})$ defined in \eqref{vec_z} is a conditionally Gaussian vector $($as in Definition \ref{cond_Ga_def}$)$ with respect to the $\sigma$-field $\mathcal{F}_{\ell-1}$ and its conditional covariance matrix is given by $(A^{(\ell)})^{\oplus n_{\ell}}$, defined as a diagonal block matrix with $n_\ell$ blocks and with the same matrix $A^{(\ell)}$ on every block of the diagonal, where $A^{(\ell)}$ is a random matrix with values in $\mathbb{R}^{d\times d}$ and components defined as
        \begin{equation}\label{cov_z}
        A^{(\ell)}_{i,j}:=
C_b+\frac{C_W}{n_{\ell-1}}\sum_{k=1}^{n_{\ell-1}}\sigma(z_k^{(\ell-1)}(x^{(i)}))\sigma(z_k^{(\ell-1)}(x^{(j)})).
        \end{equation}
        \end{enumerate}
\end{lemma}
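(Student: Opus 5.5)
The plan is to compute the conditional characteristic function of $z^{(\ell)}(\mathcal{X})$ given $\mathcal{F}_{\ell-1}$ directly from the recursion \eqref{def_NN}. The key structural fact is this: conditionally on $\mathcal{F}_{\ell-1}$, every post-activation $\sigma(z_j^{(\ell-1)}(x^{(a)}))$ is fixed, because $z^{(\ell-1)}$ is a measurable function of the parameters of layers $1,\dots,\ell-1$. On the other hand, the layer-$\ell$ parameters $\{b_i^{(\ell)},W_{i,j}^{(\ell)}\}$ are independent of $\mathcal{F}_{\ell-1}$ by Assumption \ref{bW}. Hence, for $i=1,\dots,n_\ell$ and $a=1,\dots,d$, we can write
\[
z_i^{(\ell)}(x^{(a)})=b_i^{(\ell)}+\sum_{j=1}^{n_{\ell-1}}\sqrt{\tfrac{C_W}{n_{\ell-1}}}\,W_{i,j}^{(\ell)}\,\sigma_j^{a},
\qquad \sigma_j^a:=\sigma\bigl(z_j^{(\ell-1)}(x^{(a)})\bigr).
\]
This is a linear map, with $\mathcal{F}_{\ell-1}$-measurable coefficients, applied to the Gaussian vector $(b^{(\ell)},W^{(\ell)})$, which is independent of $\mathcal{F}_{\ell-1}$.

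First I would fix $y=(y_{i,a})\in\mathbb{R}^{n_\ell d}$ and write
\[
\langle y,z^{(\ell)}(\mathcal{X})\rangle=\sum_i\Bigl(b_i^{(\ell)}\sum_a y_{i,a}+\sum_j W^{(\ell)}_{i,j}\sqrt{\tfrac{C_W}{n_{\ell-1}}}\sum_a y_{i,a}\sigma^a_j\Bigr).
\]
I would then invoke the standard freezing lemma for conditional expectations: if $U$ is independent of $\mathcal{G}$ and $V$ is $\mathcal{G}$-measurable, then $\mathbb{E}[f(U,V)\mid\mathcal{G}]=g(V)$ with $g(v)=\mathbb{E}[f(U,v)]$. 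Applying it with $\mathcal{G}=\mathcal{F}_{\ell-1}$ and using that the $b_i^{(\ell)}$ and $W_{i,j}^{(\ell)}$ are independent centred Gaussians with variances $C_b$ and $1$, the conditional characteristic function factorises as
\[
\prod_{i=1}^{n_\ell}\exp\Bigl(-\tfrac12 C_b\bigl(\textstyle\sum_a y_{i,a}\bigr)^2-\tfrac12\tfrac{C_W}{n_{\ell-1}}\sum_j\bigl(\sum_a y_{i,a}\sigma_j^a\bigr)^2\Bigr).
\]
Expanding the squares, the $i$-th factor equals $\exp(-\frac12\langle y_i,A^{(\ell)}y_i\rangle)$, where $y_i=(y_{i,1},\dots,y_{i,d})$ and $A^{(\ell)}$ is exactly as in \eqref{cov_z}.

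This single computation gives both claims. Part (ii) holds because the product equals $\exp(-\frac12\langle y,(A^{(\ell)})^{\oplus n_\ell}y\rangle)$. The matrix $A^{(\ell)}$ is $\mathcal{F}_{\ell-1}$-measurable, and it is positive semidefinite since it equals $C_b\mathbf{1}\mathbf{1}^\top+\frac{C_W}{n_{\ell-1}}\sum_k s_ks_k^\top$ with $s_k=(\sigma_k^1,\dots,\sigma_k^d)$. Part (i) holds because the conditional characteristic function factorises over $i$ into functions of $y_i$ alone, which is equivalent to conditional independence of the blocks $z_i^{(\ell)}(\mathcal{X})$. To make this rigorous I would use a regular conditional distribution and the uniqueness theorem for characteristic functions, restricting first to $y$ in a countable dense set so that the a.s. identities hold simultaneously. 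Alternatively, the freezing lemma applied to products of bounded test functions $\prod_i h_i(z_i^{(\ell)})$ gives conditional independence directly, since for frozen $\sigma^a_j$ the rows $(b_i^{(\ell)},W_{i,\cdot}^{(\ell)})$ are independent across $i$.

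The remaining hypotheses of Definition \ref{cond_Ga_def} must also be checked: square integrability and zero mean. The mean is zero because $\mathbb{E}[z^{(\ell)}\mid\mathcal{F}_{\ell-1}]=0$. Square integrability follows from $\mathbb{E}|z^{(\ell)}|^2=\mathbb{E}[\operatorname{tr}(A^{(\ell)})^{\oplus n_\ell}]$ together with finiteness of $\mathbb{E}[\sigma(z^{(\ell-1)}_k(x))^2]$. That finiteness follows by induction on $\ell$ from the polynomial growth of $\sigma$, implied by Assumption \ref{hip_s}, and Gaussian moments at each layer, which are finite by conditional Gaussianity.

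The main obstacle is only measure-theoretic bookkeeping: justifying the passage from almost sure identities for each fixed $y$ to statements about conditional laws, and the integrability induction. The Gaussian computation itself is routine.
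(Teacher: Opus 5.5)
Your proposal is correct and is the standard argument the paper relies on. The paper does not reprove the lemma; it cites Lemma 7.1 of \cite{Han_Gas} and Lemma 1 of \cite{CP25}, whose proof is exactly this: freeze the $\mathcal{F}_{\ell-1}$-measurable post-activations, use that the layer-$\ell$ Gaussian weights and biases are independent of $\mathcal{F}_{\ell-1}$, and read off the block-diagonal conditional characteristic function, which gives (i) and (ii) together. Your checks of positive semidefiniteness and square integrability fill in details the paper leaves implicit. Two small points: the integrability induction has to carry finiteness of \emph{all} moments of $z^{(\ell-1)}$, since polynomial growth of $\sigma$ needs them, and this step genuinely uses Assumption~\ref{hip_s}, although the paper presents the lemma as a consequence of Assumption~\ref{bW} alone.
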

Thanks to Lemma \ref{gaus_stut_NN}, we can focus on conditionally Gaussian vectors and derive general results that can be applied to the neural network defined in \eqref{def_NN} under Assumptions \ref{bW} and \ref{hip_s}.

\subsection{Edgeworth expansion for a conditionally Gaussian vector}
Let $Z$ be a conditionally Gaussian vector (with respect to a $\sigma$-field $\mathcal{F}$) taking values in $\mathbb{R}^{n d}$, for $n,d\in\mathbb{N}$, and with conditional covariance matrix $A^{\oplus n}$ (see Definition \ref{cond_Ga_def}), where $A^{\oplus n}$ denotes the block-diagonal matrix with $n$ diagonal blocks, each equal to a matrix $A\in\mathbb{R}^{d\times d}$.

{ Assuming that
\begin{equation}\label{hyp_mom_A}
\mathbb{E}[\|A\|_{HS}^{2m-1}]<\infty,
\end{equation}
}
out goal is to approximate the law of $Z$ by perturbing an initial Gaussian distribution
\begin{equation}\label{gaus_limit}
 \mathcal{N}_{n d}\!\left(0, K^{\oplus n}\right),
\end{equation}
where $K^{\oplus n}$ denotes the block-diagonal matrix with $n$ identical blocks equal to $K\in\mathbb{R}^{d\times d}$, which is assumed to be invertible. {The approximation is obtained via the corresponding Edgeworth expansion, as developed in \cite{Mc87}, and subsequently specialized to the case of conditional Gaussian vectors, following the approach detailed in Appendix~\ref{app:edg_exp}.
}

We define $\gamma_{Z,G,m}$ as the signed measure on $\mathbb{R}^{n d}$ whose density is given by the multivariate Edgeworth expansion of order $(4m-1)$ associated with $Z$:
{
\begin{multline}\label{edg_measure}
x\in\mathbb{R}^{nd}\longmapsto 
\prod_{i=1}^{n}\phi_K(\tilde{x}_i)
+\sum_{k=1}^{2m-1}\frac{1}{k!\,2^k}
\sum_{J\in S_{nd}^{(2k)}}\sum_{\alpha\in \mathcal{A}_J}
\mathbb{E}\!\left[(Q^{\oplus n})_{\alpha_1,\alpha_2}\cdots
(Q^{\oplus n})_{\alpha_{2k-1},\alpha_{2k}}\right]\cdot\\
\cdot \prod_{i=0}^{n-1}
\left(
H_{j_{di+1}}\!\left(u^{(i)}_1\right)\cdots
H_{j_{di+d}}\!\left(u^{(i)}_d\right)
\right)_{|_{\,u^{(i)}=\sqrt{K}^{-1}\tilde{x}_{i+1}}}\,
\phi_K\!\left(\tilde{x}_{i+1}\right),
\end{multline}
}
where $\tilde{x}_i:=(x_{d(i-1)+1},\dots,x_{d(i-1)+d})\in\mathbb{R}^d$,
\[
Q:=\sqrt{K}^{-1}(A-K)\sqrt{K}^{-1},
\]
and $\phi_K$ denotes the density of the Gaussian law defined in \eqref{gaus_limit}. Moreover, as in \cite{CP25}, we have 
\begin{equation}\label{def_S_k}
S_{nd}^{(2k)}:=\Bigl\{J:=(j_1,\dots,j_{nd})\in(\mathbb{N}\cup\{0\})^{nd}:
j_1+\cdots+j_{nd}=2k\Bigr\},
\end{equation}
{
i.e. the set of all multi-indices $ J = (j_1,\dots,j_{nd}) $ whose entries are non-negative integers and such that the sum of all components is equal to $ 2k $,
}
and
\begin{equation}\label{def_A_j}
\mathcal{A}_J:=
\Bigl\{\alpha=(\alpha_1,\dots,\alpha_{2k})\in\{1,\dots,nd\}^{2k}:
\sum_{r=1}^{2k}\mathbf{1}_{\{\alpha_r=s\}}=j_s
\ \forall\, s=1,\dots,nd\Bigr\},
\end{equation}
for $J\in S_{nd}^{(k)}$,
{
i.e. the set of all sequences $\alpha = (\alpha_1,\dots,\alpha_{2k}) $ whose entries take values in $ {1,\dots,nd} $ and such that each value $s \in \{1,\dots,nd\} $ appears exactly $j_s$ times in the sequence.
}
Finally, for every $j\in\mathbb{N}\cup\{0\}$, $H_j$ denotes the Hermite polynomial of degree $j$ (see e.g. \cite{NP12}), defined via the derivatives of the standard Gaussian density $\phi_1$ by
\begin{equation}\label{herm_def}
H_j(x):=\frac{(-1)^j}{\phi_1(x)}\frac{\mathrm{d}^j}{\mathrm{d}x^j}\phi_1(x).
\end{equation}
{

\begin{remark}
    For $J=(j_1,\dots,j_{nd})\in S_{nd}^{(k)}$, it is easy to observe that 
    \[
    \mathcal{A}_J=\left\{\eta\bigl(\underbrace{1,\dots,1}_{j_1},\dots,\underbrace{nd,\dots,nd}_{j_{nd}}\bigr), \eta\in \Pi_{nd}\right\},
    \]
    using the same notation as in Example \ref{ex_edg_dens}. In other words, the set $\mathcal{A}_J$ is given by all the possible permutations of a fixed sequence in which each value $s \in \{1,\dots,nd\}$ appears exactly $j_s$ times.

\end{remark}

\begin{remark}
    The measure associated with the density $\gamma_{Z,G,m}$ defined in \eqref{edg_measure} is a finite measure since
    \begin{multline*}
    \int_{\mathbb{R}^{nd}}|\gamma_{Z,G,m}(x)|dx\le 1+\sum_{k=1}^{2m-1}\frac{1}{k!2^k}\sum_{J\in S_{nd}^{(2k)}}\sum_{\alpha\in \mathcal{A}_J}
\mathbb{E}\!\left[\|Q\|_{op}^k\right]\cdot\\
\cdot \prod_{i=0}^{n-1}\mathbb{E}\left[
\left|
H_{j_{di+1}}\!\left(N^{(i)}_1\right)\cdots
H_{j_{di+d}}\!\left(N^{(i)}_d\right)
\right|\right],
    \end{multline*}
    where $\{N^{(i)}\}_{i}$ are independent and identically distributed as $\mathcal{N}(0,I_d)$ and $|Q_{i,j}|\le \|Q\|_{op}$ for every $i,j\in\{1,\dots,d\}$. Hence, using H\"older's inequality and the fact, presented in Proposition 1.4.2 in \cite{NP12}, that
    \[
    \mathbb{E}\left[H_j(N)^2\right]=j!\quad \text{for every $j\in \mathbb{N}$ and with $N\sim\mathcal{N}(0,1)$}, 
    \]
    it results that
    \[
    \int_{\mathbb{R}^{nd}}|\gamma_{Z,G,m}(x)|dx\le 1+\sum_{k=1}^{2m-1}\frac{1}{k!2^k}\sum_{J\in S_{nd}^{(2k)}}\sum_{\alpha\in \mathcal{A}_J}
\mathbb{E}\!\left[\|Q\|_{op}^k\right] \prod_{i=0}^{n-1}(j_{di+1}!\dots j_{di+d}!)^{1/2}<\infty,
    \]
    since all the sums involved have a finite number of terms.
\end{remark}

\begin{example}\label{ex_edg_dens}
For $m=1$, we have that the density $\gamma_{Z,G,m}$ defined in \eqref{edg_measure} reads as
\begin{multline*}
 \gamma_{Z,G,m}(x)=  \prod_{i=1}^{n}\phi_K(\tilde{x}_i)
+\frac{1}{2}
\sum_{J\in S_{nd}^{(2)}}\sum_{\alpha\in \mathcal{A}_J}
\mathbb{E}\!\left[(Q^{\oplus n})_{\alpha_1,\alpha_2}\right]\cdot\\
\cdot \prod_{i=0}^{n-1}
\left(
H_{j_{di+1}}\!\left(u^{(i)}_1\right)\cdots
H_{j_{di+d}}\!\left(u^{(i)}_d\right)
\right)_{|_{\,u^{(i)}=\sqrt{K}^{-1}\tilde{x}_{i+1}}}\,
\phi_K\!\left(\tilde{x}_{i+1}\right)
\end{multline*}
Moreover, denoting with $\Pi_{nd}$ the sets of all the permutations of $nd$ elements and writing $\eta(a_1,\dots,a_{nd}):=(a_{\eta(1)},\dots a_{\eta(nd)})$ for every vector $(a_1,\dots,a_{nd})\in\mathbb{R}^{nd}$ and $\eta \in \Pi_{nd}$, we obtain that
\[
S_{nd}^{(2)}=\left\{\eta\left(2,0,\dots,0\right)\right\}_{\eta\in\Pi_{nd}}\cup\left\{\eta\left(1,1,0\dots,0\right)\right\}_{\eta\in\Pi_{nd}}.
\] 
For any $J\in S_{nd}^{(2)}$ of the type $\eta(2,0,\dots,0)$ for $\eta \in \Pi_{nd}$, one has
\[
\mathcal{A}_J=\{(\eta(1),\eta(1))\}
\]
and for any $\bar{J}\in S_{nd}^{(2)}$ of the type $\eta(1,1,0,\dots,0)$ for $\eta \in \Pi_{nd}$, we have
\[
\mathcal{A}_{\bar{J}}=\{(\eta(1),\eta(2))\}.
\]
Therefore, 
\begin{multline*}
 \gamma_{Z,G,m}(x)=   \prod_{i=1}^{n}\phi_K(\tilde{x}_i)
+\frac{1}{2}
\sum_{i=0}^{n-1}\sum_{r=1}^d\sum_{s=1, s\ne r}^d
\mathbb{E}\!\left[Q_{r,s}\right]
\left(
u^{(i)}_r u^{(i)}_s\right)
_{|_{\,u^{(i)}=\sqrt{K}^{-1}\tilde{x}_{i+1}}}\,
\phi_K\!\left(\tilde{x}_{i+1}\right)\\
+\frac{1}{2}\sum_{i=0}^{n-1}\sum_{r=1}^d
\mathbb{E}\!\left[Q_{r,r}\right]
\left((u^{(i)}_r)^2-1\right)
_{|_{\,u^{(i)}=\sqrt{K}^{-1}\tilde{x}_{i+1}}}\,
\phi_K\!\left(\tilde{x}_{i+1}\right),
\end{multline*}
since 
\[
H_1(x)=x\quad\text{and}\quad H_2(x)=x^2-1\quad\text{for all $x\in\mathbb{R}$}.
\]

\end{example}
\begin{example}\label{es_edg_11}
    In the simple case of $n=1$ and $d=1$, the Edgeworth expansion can be written explicitly as follows:
    \[
    x\in\mathbb{R}\to \gamma_{Z,G,m}(x)=\phi_K(x)\left(1+\sum_{k=1}^{2m-1}\frac{\mathbb{E}[Q^k]}{k!2^k}H_{2k}\left(\sqrt{K}^{-1}x\right)\right),
    \]
    since in this case $Q=Q_{1,1}\in\mathbb{R}$ and 
    \[
    S_1^{(2k)}=\{(2k)\}\quad\text{and}\quad\mathcal{A}_J=\{(1,\dots,1)\}\subseteq\mathbb{N}^{2k}\quad\text{for $J\in S_1^{(2k)}$}.
    \]
    \end{example}
    \begin{example}\label{es_NN_edg}
    Take $Z=z^{(2)}_1(1)$, according to the definition for a fully connected neural network \eqref{def_NN} with $L=1,d=1,n_0=1,$ $x=1$ and parameters $C_W=\sqrt{2},C_b=0$ and $\sigma(x)=ReLU(x)=x1_{\{x\ge 0\}}$. Then take $G\sim\mathcal{N}(0,1)$ (observe that the variance $K^{(2)}$ defined in the following Theorem \ref{fin_th_n} satisfies $K^{(2)}=1$ in this particular setting). Then, thanks to Example \ref{es_edg_11}, one can explicitly compute the following approximation to the distribution of the neural network defined above:
    \begin{itemize}
    \item \begin{equation}\label{gas_dens_es}
    y\in\mathbb{R}\mapsto \phi_1(y),
    \end{equation}
        \item \begin{equation}\label{para_gauss_1_es}
        y\in\mathbb{R}\mapsto \gamma_{Z,G,1}(y)=\phi_{1}(y)\left(1+\frac{5}{8n_1}H_4\left({y}\right)\right),
        \end{equation}
        \item \begin{equation}\label{para_gauss_2_es}
        y\in\mathbb{R}\mapsto \phi_{1}(y)\left(1+\frac{5}{8n_1}H_4\left({y}\right)+\frac{11}{12n_1^2}H_6\left({y}\right)\right),
        \end{equation}
        \item \begin{multline}\label{para_gauss_3_es}
            y\in\mathbb{R}\mapsto \gamma_{Z,G,2}(y)\\
            =\phi_{1}(y)\left(1+\frac{5}{8n_1}H_4\left({y}\right)+\frac{11}{12n_1^2}H_6\left({y}\right)+\left(\frac{1573}{192n_1^2}
+\frac{25(n_1-1)}{64n_1^2}\right)H_8\left({y}\right)\right),
\end{multline}
    \end{itemize}
    where $\phi_1$ is the density of the standard Gaussian measure. In Figure \ref{fig:compare_NN_para} one can see the plots of the functions defined above and compare them with an approximation of the neural network's density.
    \begin{figure}[htbp]
    \centering
    \includegraphics[width=0.7\textwidth]{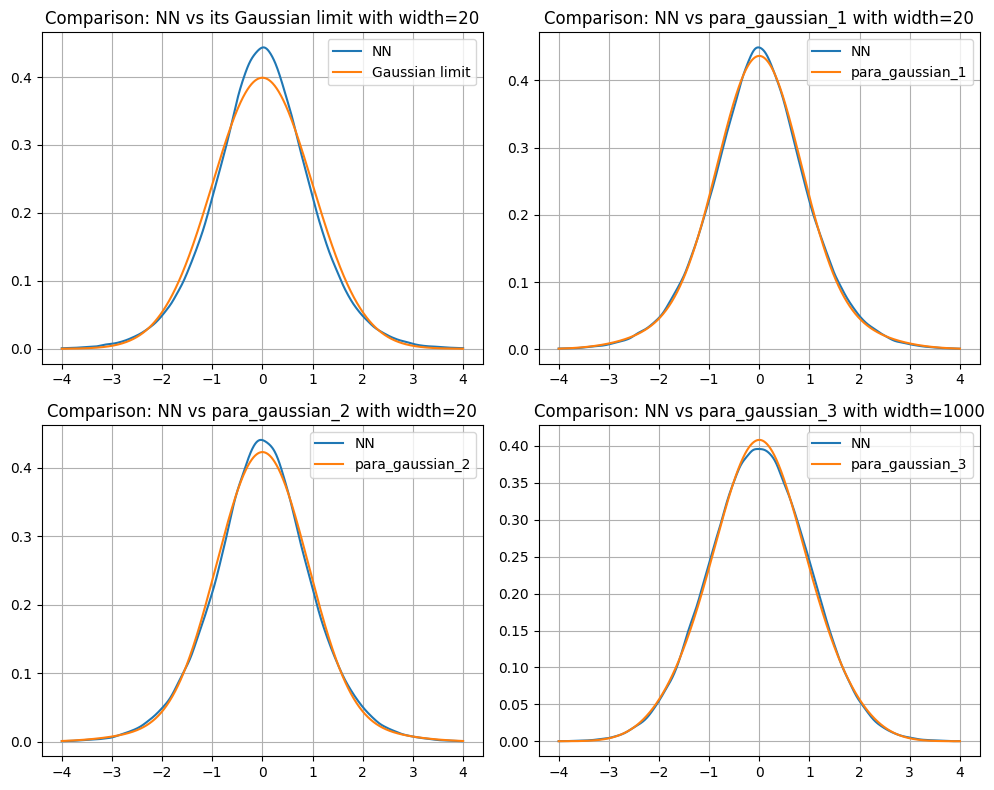}
    \caption{ Comparison between the estimated density of a neural network output (NN) as in Example \ref{es_NN_edg} and its approximations defined in \eqref{gas_dens_es}, \eqref{para_gauss_1_es}, \eqref{para_gauss_2_es},
 \eqref{para_gauss_3_es}. The top-left panel shows the comparison with the infinite-width Gaussian limit (approximated at width 20). The two panels on the right compare the NN with the Edgeworth expansion with $m=1$ and width equal to 20 (see \eqref{para_gauss_1_es}, panel on top), and with the Edgeworth expansion with $m=2$ and width equal to 1000 (see \eqref{para_gauss_3_es}, panel on the bottom). The panel on the left-bottom compare the distribution of the NN with the para-Gaussian defined in \eqref{para_gauss_2_es}, following the terminology of \cite{NO24}. The choice of increasing the width in the bottom-right panel is due to the fact that a lower value of the width would not compensate the high oscillations of the Hermite polynomial of order 8.}
    \label{fig:compare_NN_para}
\end{figure}
\end{example}

}

In the next section, we quantify the accuracy of the Edgeworth approximation of the neural network output by estimating the discrepancy between the law of the network and its Edgeworth expansion in total variation distance, defined below.

{
\begin{defin}[{Total variation distance}, see e.g. Appendix C in \cite{NP12}]
    Given two finite signed measures $\mu,\nu$  with values in $\mathbb{R}^{p}$, for $p\in\mathbb{N}$, the Total Variation distance between them is defined as
\begin{equation}\label{dTV_def}
d_{TV}(\mu,\nu):=\sup_{B\in\mathcal{B}(\mathbb{R}^{p})}\Big|\mu(B)-\nu(B)\Big|
=\frac{1}{2}\sup_{h\in \mathcal{M}_1}\Big|\int_{\mathbb{R}^p}h(x)d\mu(x)-\int_{\mathbb{R}^p}h(x)d\nu(x)\Big|,
\end{equation}
where $\mathcal{B}(\mathbb{R}^{p})$ is the Borel $\sigma$-field of $\mathbb{R}^{p}$ and {
\[
\mathcal{M}_1:=\{h:\mathbb{R}^{p}\to\mathbb{R}\quad\text{Borel measurable with}\quad \|h\|_{\infty}\le 1\}.
\]
}
\end{defin}
}
\section{Our contribution}\label{sec:our_contr}
Our first main result provides an upper bound on the Total Variation distance between the law of a conditionally Gaussian vector and its Edgeworth expansion. Additionally, we establish a lower bound in the case where the Edgeworth expansion is performed with respect to a specific reference Gaussian law. We prove the following Theorem in Section \ref{proof_th_edg_TV_gen}.

{
\begin{theorem}\label{edg_TV_generic}
    Let $Z$ be a centered conditionally Gaussian vector in $\mathbb{R}^{n d}$ with conditional covariance matrix $A^{\oplus n}$ and law $P_Z$, as in Definition \ref{cond_Ga_def}, and let $G$ be a centered Gaussian vector in $\mathbb{R}^{n d}$ with covariance matrix $K^{\oplus n}$ and $K\in\mathbb{R}^{d\times d}$ invertible. Assume that $\mathbb{E}[\|A\|_{op}^{4m}]<\infty$ and let $\gamma_{Z,G,m}$ be the signed measure defined in \eqref{edg_measure}. Then the Total variation distance (see Definition \ref{dTV_def}) between the law of the conditionally Gaussian vector $Z$ and its Edgeworth approximation $\gamma_{Z,G,m}$  is upper bounded as follows:
    \begin{equation}\label{bound_dTV_gen_no_exp}
    d_{TV}(P_Z,\gamma_{Z,G,m})\le  C\mathbb{E}\left[\|A-K\|_{\text{HS}}^{4m}\right]^{\frac{1}{2}},
\end{equation}
where 
$C>0$ is an explicit constant that depends only on the dimensions $d$ and $n$, on the order of approximation $m$ and on the minimum eigenvalue of $K$ $($see the upper bound \eqref{expl_bound_edg_TV_gen}$)$.

If $\mathbb{E}[A]=K$, then, for every index $j\in\{1,\dots,d\}$, we also have that 
\begin{equation}\label{lower_bound_egd_gen}
d_{TV}(P_Z,\gamma_{Z,G,m})\ge  \frac{\mathbb{E}\left[\left(A_{j,j}-K_{j,j}\right)^{2m}\right]e^{-\frac{K_{j,j}}{2}}}{2^{2m}(2m)!}-\frac{\mathbb{E}\left[\left|A_{j,j}-K_{j,j}\right|^{2m+1}\right]}{2^{2m+1}(2m)!}.
\end{equation}
\end{theorem}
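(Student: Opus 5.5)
Conditionally on $\mathcal F$, $Z$ is Gaussian with covariance $A^{\oplus n}$, so $P_Z=\mathbb E[\mathcal N_{nd}(0,A^{\oplus n})]$. The plan is to Taylor-expand the conditional law around $K$ and take expectations. The key identity is for a Gaussian density with covariance $K+E$, where $E=A-K$. Writing $E=\sqrt K\,Q\,\sqrt K$ and using the heat-equation representation
\[
\phi_{K+tE}(x)=\exp\Big(\tfrac t2\sum_{a,b}E_{a,b}\partial_a\partial_b\Big)\phi_K(x),
\]
valid formally and rigorously for small $t$ via characteristic functions, the $k$-th $t$-derivative at $t=0$ is $2^{-k}\big(\sum E_{ab}\partial_a\partial_b\big)^k\phi_K$. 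After the change of variables $u=\sqrt K^{-1}x$, the derivatives of $\phi_K$ become products of Hermite polynomials $H_{j_1}(u_1)\cdots H_{j_d}(u_d)\phi_K$ with coefficients given by products of entries of $Q$. Collecting multi-indices reproduces exactly the $k$-th summand in \eqref{edg_measure} (with $Q^{\oplus n}$, using block structure). Hence $\gamma_{Z,G,m}$ is $\mathbb E$ of the Taylor polynomial of degree $2m-1$ in $t$ of $t\mapsto\phi_{K+tE}^{\otimes}$ evaluated at $t=1$.

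For the upper bound, write the remainder in integral form:
\[
P_Z-\gamma_{Z,G,m}=\mathbb E\Big[\int_0^1\frac{(1-t)^{2m-1}}{(2m-1)!}\,\partial_t^{2m}\phi_{(K+tE)^{\oplus n}}\,dt\Big].
\]
Here $\partial_t^{2m}\phi_{\Sigma_t}=2^{-2m}(\sum E_{ab}\partial_a\partial_b)^{2m}\phi_{\Sigma_t}$. Its $L^1$ norm is controlled by $\|E\|_{HS}^{2m}$ times $\lambda_{\min}(\Sigma_t)^{-2m}$ times Hermite $L^1$ norms, as in the finiteness remark. The main obstacle is that $K+tE=(1-t)K+tA$ has minimal eigenvalue at least $(1-t)\lambda_{\min}(K)$, which degenerates as $t\to1$. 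To handle this, I would split on the event $\{\|E\|_{op}\le \lambda_{\min}(K)/2\}$. On that event $\lambda_{\min}(\Sigma_t)\ge\lambda_{\min}(K)/2$ uniformly, giving a bound $C\|E\|_{HS}^{2m}$. On the complement I bound the total variation crudely: the TV of the probability part is at most $1$, and the Taylor polynomial terms have mass at most $C\sum_{k<2m}\|Q\|^k$. Each of these is bounded by $C\|E\|_{HS}^{2m}\lambda_{\min}^{-2m}$ on this event via Markov-type comparisons, since there $\|E\|\gtrsim\lambda_{\min}$. The degree bound is where I need care; the $\mathbb E\|A\|^{4m}$ assumption guarantees integrability. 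Then Cauchy--Schwarz or Jensen turns $\mathbb E[\|E\|^{2m}]$ into $\mathbb E[\|E\|_{HS}^{4m}]^{1/2}$, matching \eqref{bound_dTV_gen_no_exp}. Since this is a total-variation bound on an $L^1$ difference of densities, pulling the expectation outside is justified by Fubini.

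For the lower bound, use $d_{TV}\ge\frac12|\int h\,d(P_Z-\gamma)|$ with a bounded test function depending only on coordinate $x_j$. The natural choice is $h(x)=\cos(x_j)$, or a characteristic function $e^{i x_j}$ with real part taken. Then $\int h\,dP_Z=\mathbb E[e^{-A_{jj}/2}]$. Against $\gamma$, only diagonal $Q$-terms survive in this marginal, and Hermite integration gives $e^{-K_{jj}/2}\sum_{k<2m}\mathbb E[(-(A_{jj}-K_{jj})/2)^k]/k!$. The odd/first moment terms are consistent with $\mathbb E[A]=K$. Hence the difference is the Taylor remainder of $e^{-a/2}$ at $a=K_{jj}$ to order $2m$:
\[
e^{-K_{jj}/2}\,\mathbb E\big[(A_{jj}-K_{jj})^{2m}\big]/(2^{2m}(2m)!)+R,
\]
with $|R|\le \mathbb E|A_{jj}-K_{jj}|^{2m+1}/(2^{2m+1}(2m+1)!)$. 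This uses $e^{-s/2}\le1$ for $s\ge0$, and is bounded by the stated $(2m)!$ denominator. Adjusting the factor $\tfrac12$ from the TV definition against the $\sup$ over sets (taking $h$ with values in $[0,1]$, e.g. $(1+\cos x_j)/2$), one gets \eqref{lower_bound_egd_gen}.
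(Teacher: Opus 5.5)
Your upper-bound argument is the paper's. You split on the event $\{\|A-K\|_{op}\le\lambda(K)/2\}$ and identify $\gamma_{Z,G,m}$ with the expectation of the degree-$(2m-1)$ Taylor polynomial of $t\mapsto\phi_{\Gamma_t^{\oplus n}}$ at $t=0$ (Lemma~\ref{lemma_der_phi_K_in_0}). On the event you control the integral remainder through $\|\Gamma_t^{-1}\|_{op}\le 2/\lambda(K)$, and on the complement you use Markov. Your pointwise comparison $\|Q\|_{op}^k\le 2^{2m-k}\lambda(K)^{-2m}\|A-K\|_{op}^{2m}$ on $\{\|A-K\|_{op}>\lambda(K)/2\}$ is, if anything, tidier than the paper's Cauchy--Schwarz step. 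Your lower-bound computation also follows the paper: test function $\cos(x_j)$, Hermite integration by parts, and Taylor expansion of $a\mapsto e^{-a/2}$ at $K_{j,j}$.

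The gap is your final sentence. Every Hermite term in \eqref{edg_measure} integrates to zero, so $P_Z$ and $\gamma_{Z,G,m}$ both have total mass one. Hence for $h=(1+\cos x_j)/2$ you get $\int h\,d(P_Z-\gamma_{Z,G,m})=\frac12\int\cos(x_j)\,d(P_Z-\gamma_{Z,G,m})$. The factor $\frac12$ is therefore not absorbed, and your sharper $(2m+1)!$ in the remainder only improves the subtracted term. What you have actually proved is \eqref{lower_bound_egd_gen} with its right-hand side multiplied by $\frac12$.

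No argument can do better, because the inequality as stated is false. Take $n=d=m=1$, $K=4$, and $A=4\pm\varepsilon$ with probability $\frac12$ each. Then $\mathbb{E}[Q]=0$, hence $\gamma_{Z,G,1}=\phi_4$. Moreover $P_Z-\phi_4=\frac{\varepsilon^2}{8}\phi_4^{(4)}+O(\varepsilon^4)$ in $L^1$, where $\phi_4^{(4)}$ is the fourth derivative of the $\mathcal{N}(0,4)$ density. So $d_{TV}(P_Z,\gamma_{Z,G,1})=\frac{\varepsilon^2}{256}\,\mathbb{E}|H_4(N)|+O(\varepsilon^4)\approx 0.011\,\varepsilon^2$, with $N\sim\mathcal{N}(0,1)$ and $\mathbb{E}|H_4(N)|\approx 2.80$. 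The right-hand side of \eqref{lower_bound_egd_gen}, however, equals $\frac{e^{-2}\varepsilon^2}{8}-\frac{\varepsilon^3}{16}\approx 0.017\,\varepsilon^2$ for small $\varepsilon$.

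The paper's own proof makes the same slip in \eqref{lower_d_tv}, which omits the $\frac12$ from the definition \eqref{dTV_def}. The halved bound is what should be stated. It still yields the $n^{-m}$ lower bound of Theorem~\ref{edg_nn}, only with a smaller constant.
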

}

The second main result is a direct application of Theorem \ref{edg_TV_generic} to the case when $Z$ is a fully connected neural network as in \eqref{def_NN} under Assumptions \ref{bW} and \ref{hip_s}. In fact, it has been proved by \cite{Han_Gas} (see also \cite{Neal96,MHRTG,LBNSPS}) that the neural network converges in law to a Gaussian process as the inner widths grow to infinite and \cite{Trev,BT24,CP25,FHMNP,Torr23,BFF24} provided quantitative results for this Central Limit Theorem. In particular, we focus on the following result by \cite{CP25}.
\begin{theorem}(Theorem 5, Remark 8 in \cite{CP25})\label{fin_th_n}
    Let Assumptions \ref{bW} and \ref{hip_s} hold. Fix a set of distinct inputs $\mathcal{X}:=\{x^{(1)},\dots,x^{(d)}\}\subseteq \mathbb{R}^{n_0}\setminus \{0\}$.

    Define the matrix $K^{(L+1)}\in\mathbb{R}^{d\times d}$ in a recursive way: for every $i,j\in\{1,\dots,d\}$
    \begin{equation}\label{lim_cov}
    K_{i,j}^{(\ell)}:=\begin{cases}
        C_b+C_W\mathbb{E}\Big[\sigma(G_1^{(\ell-1)}(x^{(i)}))\sigma(G_1^{(\ell-1)}(x^{(j)}))\Big] &\text{if $\ell\ge 2$}\\
        \\
        C_b+\frac{C_W}{n_0}\sum_{k=1}^{n_0}x^{(i)}_{k}x^{(j)}_{k}&\text{if $\ell= 1$},
    \end{cases}
    \end{equation}
    where $(G_i^{(\ell)})_{i=1,\dots,n_\ell}$ are independent for every $\ell=1,\dots, L+1$ and
    \[
\Big(G_1^{(\ell-1)}(x^{(i)}), G_1^{(\ell-1)}(x^{(j)})\Big)\sim\mathcal{N}_2\Bigg(0,
\begin{pmatrix}
K_{i,i}^{(\ell-1)} & K_{i,j}^{(\ell-1)}\\
K_{i,j}^{(\ell-1)} & K_{j,j}^{(\ell-1)}
\end{pmatrix}
\Bigg).
\]
Assume that the matrices defined in \eqref{lim_cov}, $\{K^{(\ell)}\}_{\ell=1,\dots,L+1}$, are invertible on $\mathcal{X}$. Then, if there exists $n\in\mathbb{N}$ such that 
{
\begin{equation}\label{bounds_n}
cn\le n_1, \dots n_{L}\le Cn
\end{equation}}
for some $c,C>0$ constants, and recalling the definition in \eqref{dTV_def}, one has that
\begin{equation}\label{e:tvmain}
d_{TV}\Big(z^{(L+1)}(\mathcal{X}),G^{(L+1)}(\mathcal{X})\Big)\le \frac{D}{n}  
\end{equation}
where $D$ is a positive constant that does not depend on $n, n_1,\dots, n_{L}$ and we denote $z^{(L+1)}(\mathcal{X})$ as in \eqref{vec_z} for every $i=1,\dots,n_{L+1}$ 

and similarly we denote
\[
G^{(L+1)}(\mathcal{X}):=\left(G_1^{(L+1)}(x^{(1)}),\dots,G_1^{(L+1)}(x^{(d)}),\dots,G_{n_{L+1}}^{(L+1)}(x^{(1)}),\dots,G_{n_{L+1}}^{(L+1)}(x^{(d)})\right).
\]
\end{theorem}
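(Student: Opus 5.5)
The plan is to argue by induction on the depth, using the conditional Gaussian structure of Lemma \ref{gaus_stut_NN} at each layer. By Lemma \ref{gaus_stut_NN}(ii), conditionally on $\mathcal{F}_L$ the output $z^{(L+1)}(\mathcal{X})$ is Gaussian with covariance $(A^{(L+1)})^{\oplus n_{L+1}}$. The comparison target $G^{(L+1)}(\mathcal{X})$ is Gaussian with covariance $(K^{(L+1)})^{\oplus n_{L+1}}$, and $K^{(L+1)}$ is invertible by assumption.

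The first step is a second-order comparison of Gaussians. For a test function $h$ with $\|h\|_\infty\le1$, I would write $\mathbb{E}[h(z^{(L+1)})]-\mathbb{E}[h(G^{(L+1)})]=\mathbb{E}[\Psi_h(A^{(L+1)})]-\Psi_h(K^{(L+1)})$, where $\Psi_h(M):=\int h\,\phi_{M^{\oplus n_{L+1}}}$. Near $K$, the map $\Psi_h$ is smooth in $M$. Its first and second derivatives are bounded uniformly in $h$ by constants depending on $\lambda_{\min}(K)$, $d$ and $n_{L+1}$, via the Hermite-type formula for $\partial_M\phi_M$. On the event that $A$ is far from $K$, I would use the crude bound $2$ together with Markov's inequality. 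Taylor expansion to second order then gives
\[
d_{TV}\le C\bigl(\|\mathbb{E}[A^{(L+1)}]-K^{(L+1)}\|_{HS}+\mathbb{E}\|A^{(L+1)}-K^{(L+1)}\|_{HS}^2\bigr).
\]
This is exactly why the rate is $1/n$ rather than $1/\sqrt n$: only the bias enters linearly, while the fluctuations enter quadratically.

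The second step bounds the quadratic term. We have $\mathbb{E}\|A-K\|^2=\mathrm{tr}\,\mathrm{Var}(A)+\|\mathbb{E}A-K\|^2$. By \eqref{cov_z}, $A^{(L+1)}_{i,j}-C_b$ is $C_W$ times an empirical average over $k\le n_L$ of $\sigma(z^{(L)}_k(x^{(i)}))\sigma(z^{(L)}_k(x^{(j)}))$. By Lemma \ref{gaus_stut_NN}(i), these summands are conditionally i.i.d. given $\mathcal{F}_{L-1}$. The law of total variance therefore gives a conditional-variance term of order $1/n_L$, using polynomial growth of $\sigma$ and uniform-in-$n$ moment bounds on $z^{(L)}$ (proved inductively from Gaussian moments). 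It also gives a term $\mathrm{Var}(\mathbb{E}[\cdot\,|\,\mathcal{F}_{L-1}])$, which is a smooth function of $A^{(L)}$. Its variance is controlled by $\mathbb{E}\|A^{(L)}-K^{(L)}\|^2=O(1/n)$ by the inductive hypothesis, after a Lipschitz estimate of $M\mapsto\mathbb{E}[\sigma(N_i)\sigma(N_j)]$ for $N\sim\mathcal{N}(0,M)$ near $K^{(L)}$. Condition \eqref{bounds_n} turns every $1/n_\ell$ into $O(1/n)$.

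The third step, which I expect to be the main obstacle, is the bias estimate $\|\mathbb{E}A^{(L+1)}-K^{(L+1)}\|=O(1/n)$. We have $\mathbb{E}A^{(L+1)}_{i,j}-K^{(L+1)}_{i,j}=C_W\bigl(\mathbb{E}[f(z^{(L)}_1)]-\mathbb{E}[f(G^{(L)}_1)]\bigr)$ with $f(u,v)=\sigma(u)\sigma(v)$. This is not a bounded test function, so the total variation hypothesis at layer $L$ cannot be used directly. I would instead condition on $\mathcal{F}_{L-1}$ and write it as $\mathbb{E}[F(A^{(L)})]-F(K^{(L)})$, where $F(M)=\mathbb{E}f(N_M)$. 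I would then Taylor expand $F$ to second order, which reproduces the same linear-bias-plus-quadratic-fluctuation structure one layer down and closes the induction. The delicate point is the smoothness of $F$ in $M$ when $\sigma$ is only $C^{r-1}$ with piecewise-linear $(r-1)$-th derivative and polynomially bounded $r$-th derivative. Here I would move the $M$-derivatives onto the Gaussian density by Gaussian integration by parts (using $\partial_{M_{ab}}\phi_M=\frac12\partial_a\partial_b\phi_M$), so that at most two derivatives hit $f$. Polynomial growth then keeps everything integrable. A localization argument will be needed to handle the region where $A^{(L)}$ is nearly singular. The base case $\ell=1$ is exact, since $A^{(1)}$ is deterministic and equal to $K^{(1)}$. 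Assembling the three steps yields \eqref{e:tvmain} with $D$ independent of the widths.
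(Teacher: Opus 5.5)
Your architecture matches the cited result. The paper imports this theorem from \cite{CP25}, whose proof combines a conditional-Gaussian comparison with the variance and bias estimates of \cite{Han_Gas} recorded in Remark~\ref{upp_bound_var_nn}. Your Step~1 is that comparison: the lowest-order analogue of the Taylor-in-$t$ argument on $E=\{\|A-K\|_{op}\le\lambda(K)/2\}$ in Appendix~\ref{proof_th_edg_TV_gen}. Steps~2--3 replace Hanin's estimates by an elementary layer-by-layer induction, which is a legitimate substitution.

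\textbf{The gap.} As written, the induction does not close. The only decay you propagate is $\mathbb{E}\|A^{(\ell)}-K^{(\ell)}\|_{HS}^2=O(1/n)$, plus width-uniform moment bounds. Assumption~\ref{hip_s} allows $\sigma$ of superlinear polynomial growth, and then $F(M)=\mathbb{E}[\sigma(N_i)\sigma(N_j)]$ grows faster than linearly in $\|M\|$.
\begin{itemize}
\item Your Lipschitz estimate ``near $K^{(L)}$'' only controls $\mathbb{E}\bigl[|F(A^{(L)})-F(K^{(L)})|^2\mathbf{1}_{\{\|A^{(L)}-K^{(L)}\|\le\delta\}}\bigr]$.
\item On the complement, Chebyshev gives probability $O(1/n)$, but the integrand is polynomially large. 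H\"older with uniform moments then yields only $O(n^{-1/s})$ for some $s>1$.
\item The loss is real. Take a variable equal to $\log n$ with probability $(n\log^2 n)^{-1}$ and $0$ otherwise. Its second moment is $1/n$ and all its moments are bounded in $n$, yet its fourth moment is $(\log n)^2/n$.
\item The far-region part of the Taylor remainder in Step~3 fails for the same reason.
\end{itemize}
For $\sigma$ of at most linear growth (ReLU, $\tanh$), invertibility of $K^{(L)}$ gives $|F(M)-F(K^{(L)})|\le C\|M-K^{(L)}\|$ for all PSD $M$. In that case your second-moment induction does close.

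\textbf{The fix.} Carry all even moments: assume $\mathbb{E}\|A^{(\ell)}-K^{(\ell)}\|_{HS}^{2p}\le C_p n^{-p}$ for every $p$, together with $\|\mathbb{E}A^{(\ell)}-K^{(\ell)}\|_{HS}\le C/n$. This hypothesis propagates:
\begin{itemize}
\item Given $\mathcal{F}_{\ell-1}$, the difference $A^{(\ell+1)}-\mathbb{E}[A^{(\ell+1)}\mid\mathcal{F}_{\ell-1}]$ is a centered average of $n_\ell$ conditionally i.i.d.\ terms. Marcinkiewicz--Zygmund then gives order $n^{-p}$ for its $2p$-th moment.
\item The other part, $C_W\bigl(F(A^{(\ell)})-F(K^{(\ell)})\bigr)$, satisfies the global bound $|F(M)-F(K^{(\ell)})|\le C(1+\|M\|^{q})\|M-K^{(\ell)}\|$ on PSD matrices. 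Cauchy--Schwarz against the hypothesis at order $4p$ gives order $n^{-p}$.
\end{itemize}
These are exactly the all-$p$ bounds of Remark~\ref{upp_bound_var_nn}.

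\textbf{A smaller point.} The ``delicate point'' you flag in Step~3 is not an obstacle here. Since $K^{(L)}$ is invertible, $M\mapsto\int f\,\phi_M$ is $C^\infty$ on $\{\|M-K^{(L)}\|_{op}\le\lambda(K^{(L)})/2\}$ for any polynomially bounded $f$, with all derivatives falling on $\phi_M$. So you need no integration by parts onto $\sigma$ and no regularity beyond polynomial growth. The complement of that set is handled by the higher moments above.
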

\begin{remark}
{
    As noted in Remark 10 of \cite{CP25}, the assumption that the matrices $\{K^{(\ell)}\}_{\ell}$ are invertible is not restrictive. Indeed, by Theorems 6 and 7 in \cite{CC24}, this is equivalent to imposing certain general conditions on the inputs $\mathcal{X}$, assuming that the activation function is continuous and non-polynomial. More in details, it is enough to assume that the inputs are all distinct if the variance of the bias, $C_b$, is different from zero, and, if $C_b\ne 0$, then it is sufficient to impose that the inputs are pairwise non proportional.
    }
\end{remark}
\begin{remark}
    Theorem \ref{fin_th_n} implies the convergence in law of $z^{(L+1)}(\mathcal{X})$ to $G^{(L+1)}(\mathcal{X})$ as the inner width $n$ diverge thanks to Proposition C.3.1 in \cite{NP12}.
\end{remark}
\begin{remark}
    Theorem 5 in \cite{CP25} is actually more general than Theorem \ref{fin_th_n}, since it also covers the gradients of the neural network with respect to its input. We do not include the gradients here to keep the notation simpler, but the results of Theorem \ref{edg_nn} can be extended to them under the same assumptions as in \cite{CP25}.
\end{remark}

Thanks to Lemma \ref{gaus_stut_NN}, the neural network $z^{(L+1)}(\mathcal{X})$ is conditionally Gaussian with conditional covariance matrix $(A^{(L+1)})^{\oplus n_{L+1}}$ (defined in \eqref{cov_z}) and hence we can apply Theorem \ref{edg_TV_generic} and, together with the study on the moments done in \cite{Han_Gas} (see Remark \ref{upp_bound_var_nn}), we can improve the quantitative Central Limit Theorem \ref{fin_th_n} including the non-Gaussian perturbations to the infinite-width limit. The proof of the following Theorem is presented more in detail in Section \ref{proof_th_edg_TV}.
\begin{theorem}\label{edg_nn}
Under the assumptions and notations of Theorem \ref{fin_th_n}, defining $Z:=z^{(L+1)}(\mathcal{X})$ with law $P_Z$, $G:=G^{(L+1)}(\mathcal{X})$, $K:=K^{(L+1)}$, $A:=A^{(L+1)}$ and recalling the definition of the measure $\gamma_{Z,G,m}$ in \eqref{edg_measure} for $m\ge 1$, we obtain that
{

\begin{equation}\label{bound_dTV_nn_no_exp}
   d_{TV}(P_Z,\gamma_{Z,G,m})\le \frac{C_1}{n^m},
\end{equation}
}
where $C_1\ge 0$ is a constant independent of the inner width $n$.

In the case when $\mathbb{E}[A]=K$, $L\ge 1$ and $\sigma$ is not constant, then, for every $i\in\{1,\dots,d\}$, we also obtain that 
\begin{equation}\label{lower_bound_egd_nn}
 d_{TV}(P_Z,\gamma_{Z,G,m})\ge \frac{C_2}{n^m},
\end{equation}
where $C_2> 0$ is a constant independent of the width $n$.
\end{theorem}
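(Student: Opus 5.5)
The proof is a direct application of Theorem \ref{edg_TV_generic} with $Z=z^{(L+1)}(\mathcal{X})$, $A=A^{(L+1)}$ and $K=K^{(L+1)}$. Lemma \ref{gaus_stut_NN} gives that $Z$ is conditionally Gaussian with respect to $\mathcal{F}_L$, with conditional covariance $A^{\oplus n_{L+1}}$. Hence the whole argument reduces to moment estimates for the random matrix $A^{(L+1)}-K^{(L+1)}$ as the widths grow as in \eqref{bounds_n}.

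\emph{Upper bound.} I first check the integrability hypothesis $\mathbb{E}[\|A\|_{op}^{4m}]<\infty$. Since $\sigma$ is polynomially bounded (Assumption \ref{hip_s}, integrating the bound on the $r$-th derivative), the $z^{(L)}_k$ have all moments. This can be seen either by induction over layers using conditional Gaussianity, or directly from \cite{Han_Gas}. By \eqref{bound_dTV_gen_no_exp} it then suffices to show
\[
\mathbb{E}\bigl[\|A-K\|_{HS}^{4m}\bigr]\le \frac{C}{n^{2m}}.
\]
Because the dimension $d$ is fixed, it is enough to bound $\mathbb{E}[(A_{i,j}-K_{i,j})^{4m}]\le Cn^{-2m}$ entrywise. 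I write $A_{i,j}-K_{i,j}=(A_{i,j}-\mathbb{E}A_{i,j})+(\mathbb{E}A_{i,j}-K_{i,j})$. The bias term is $O(1/n)$ by the cumulant and moment analysis of \cite{Han_Gas} (Remark \ref{upp_bound_var_nn}). For the centered term I use that $A_{i,j}$ is an empirical average of $n_L$ terms that are conditionally i.i.d.\ given $\mathcal{F}_{L-1}$. A Rosenthal/Burkholder-type inequality then gives a conditional $4m$-th moment of order $n^{-2m}$, plus the fluctuation of the conditional mean, which depends on $A^{(L)}$. This sets up an induction on $\ell$: assume $\mathbb{E}[\|A^{(\ell)}-K^{(\ell)}\|^{p}]\le C_p n^{-p/2}$ for all $p$. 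Transferring this bound through $\sigma$ needs a Lipschitz/Taylor-type control of $\Sigma\mapsto \mathbb{E}[\sigma(N_\Sigma)\sigma(N_\Sigma)^\top]$ with polynomial growth, which is where the regularity in Assumption \ref{hip_s} (via Gaussian integration by parts) enters. Equivalently, one can cite the high-order central moment bounds $\mathbb{E}[(A_{ij}-K_{ij})^{2p}]=O(n^{-p})$ from \cite{Han_Gas}. Plugging $p=2m$ in yields \eqref{bound_dTV_nn_no_exp}.

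\emph{Lower bound.} Assuming $\mathbb{E}[A]=K$, I apply \eqref{lower_bound_egd_gen} with $j=i$. The second term is bounded by the moment estimate above: $\mathbb{E}|A_{ii}-K_{ii}|^{2m+1}\le Cn^{-(2m+1)/2}=o(n^{-m})$. The main obstacle is a matching lower bound
\[
\mathbb{E}\bigl[(A_{ii}-K_{ii})^{2m}\bigr]\ge c\,n^{-m}.
\]
By Jensen it suffices to show $\mathrm{Var}(A_{ii})\ge c/n$, since $\mathbb{E}[X^{2m}]\ge (\mathbb{E}X^2)^m$. By the law of total variance,
\[
\mathrm{Var}(A_{ii})\ge \mathbb{E}\bigl[\mathrm{Var}(A_{ii}\mid\mathcal{F}_{L-1})\bigr]=\frac{C_W^2}{n_L}\,\mathbb{E}\bigl[\mathrm{Var}(\sigma(z^{(L)}_1(x^{(i)}))^2\mid \mathcal{F}_{L-1})\bigr].
\]
Conditionally on $\mathcal{F}_{L-1}$, $z^{(L)}_1(x^{(i)})$ is centered Gaussian with variance $A^{(L)}_{ii}\ge C_b$. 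Since $\sigma$ is not constant, $\sigma^2$ is non-constant on the support of that Gaussian whenever its variance is positive. So the conditional variance is strictly positive on the event $\{A^{(L)}_{ii}>\delta\}$. This event has probability bounded below uniformly in $n$ because $A^{(L)}_{ii}\to K^{(L)}_{ii}>0$ in $L^2$. The expectation is therefore bounded below, using Fatou/continuity in the variance parameter. This uses $L\ge1$ and $n_L\le Cn$, and gives $\mathrm{Var}(A_{ii})\ge c/n$. Combining, for $n$ large the right-hand side of \eqref{lower_bound_egd_gen} is at least $c' n^{-m}$. For the finitely many small $n$, positivity of the distance (the laws differ) lets me adjust $C_2$.
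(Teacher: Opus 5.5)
Your proposal is correct and follows essentially the same route as the paper. The upper bound comes from \eqref{bound_dTV_gen_no_exp} together with the central-moment and bias estimates of Remark \ref{upp_bound_var_nn}. The lower bound comes from \eqref{lower_bound_egd_gen} via Jensen, a variance lower bound of order $1/n_L$ from the conditional i.i.d.\ structure of the last hidden layer, H\"older for the $(2m+1)$-th absolute moment, and positivity of the distance for the finitely many small widths. Your law-of-total-variance step, $\mathrm{Var}(A_{ii})\ge \frac{C_W^2}{n_L}\mathbb{E}[\mathrm{Var}(\sigma^2(z_1^{(L)}(x^{(i)}))\mid\mathcal{F}_{L-1})]$, is actually a cleaner version of the corresponding step in the proof of Lemma \ref{low_bound_var_nn}: the paper writes that step as an equality, but it only holds as an inequality, since the cross terms are nonnegative covariances rather than zero.
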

\begin{remark}
{
    Taking $m>1$, from the upper bound \eqref{bound_dTV_nn_no_exp} we observe that we obtain an improvement of Theorem \ref{fin_th_n}, where the neural network at initialization is approximated by its infinite-width Gaussian limit.
    }
\end{remark}
\begin{remark}
   The choice of considering the Edgeworth expansion of order $4m-1$ is inspired by the results in \cite{MPS25}. In that work, the authors study the total variation distance between the law of a real-valued random variable belonging to a Wiener chaos (see also \cite{NP12}), with unit variance, and the signed measure whose Radon--Nikodym density is given by its Edgeworth expansion with respect to a standard Gaussian random variable, at arbitrary order $4m-1$ for $m \in \mathbb{N}$.
As observed in Lemma \ref{gaus_stut_NN}, conditionally on the weights up to the layer $L$, the output of the neural network evaluated at a fixed input $x\in\mathbb{R}^{n_0}$ and taking real values (i.e. assuming $n_{L+1}=1$) is Gaussian, and therefore, conditionally, it belongs to the first Wiener chaos. As a consequence, by applying the results of \cite{MPS25} conditionally, together with the cumulant bounds established in \cite{Han_Gas}, one obtains an upper bound on the total variation distance between the law of the normalized network output
\[
\big(\mathbb{E}[A^{(L+1)}]\big)^{-1/2} \, z_1^{(L+1)}(x)
\]
and its corresponding Edgeworth expansion, defined as in \cite{MPS25}, of order $(1/\sqrt{n})^{m+1}$.
Therefore, Theorem~\ref{edg_nn} provides an improvement over the general results of \cite{MPS25} in this specific neural network setting whenever $m>1$. Note that, although the Edgeworth expansion in \cite{MPS25} appears different from the one in \eqref{edg_measure} (after the appropriate substitutions of $K$, $A$, and $n$), they are in fact equivalent, thanks to the result \eqref{eq_2_lemma_int_her} for the Hermite polynomials.

\end{remark}
\begin{remark}
   The proof of Theorem \ref{edg_nn} relies on three main ingredients: the conditional Gaussianity of the neural network (Lemma \ref{gaus_stut_NN}), Theorem \ref{edg_TV_generic}, and the estimates of \cite{Han_Gas} (see Remark \ref{upp_bound_var_nn} and bound \eqref{upp_bound_diff_exp}). 
Using the recent results of \cite{C26}, this framework can be extended beyond the fully Gaussian setting. In particular, it is sufficient to assume that the biases $(b_{i}^{(\ell)})_{i,\ell}$ satisfy Assumption \ref{bW}, that
\[
W_{i,j}^{(L+1)} \sim \mathcal{N}(0,1)
\quad \text{for all } i=1,\dots,n_{L+1},\; j=1,\dots,n_L,
\]
and that the weights
\[
(W_{i,j}^{(\ell)})_{i=1,\dots,n_\ell;\; j=1,\dots,n_{\ell-1}}^{\ell=1,\dots,L}
\]
are independent and identically distributed (not necessarily Gaussian) with finite moments of all orders (see \cite{C26} for more general assumptions).
 Moreover, assume that the activation function $\sigma$ is Lipschitz continuous and that the limiting covariance matrices $(K^{(\ell)})_\ell$ are invertible. Then, combining Lemma 10, Lemma 4, and Remark 14 of \cite{C26} with Theorem \ref{edg_TV_generic}, the total variation distance between the neural network output and the corresponding Edgeworth expansion \eqref{edg_measure} (after adequate substitutions of $A,K,n$) is of order
\[
O\!\left(\frac{1}{{n}^m}\right).
\]

To obtain a matching lower bound in total variation, assume that
\begin{equation}\label{cond_var_ge0}
\operatorname{Var}\!\left(\sigma\!\left(G_1^{(L)}(x^{(j)})\right)\right) > 0,
\end{equation}
where $G^{(L)}$ denotes the infinite-width Gaussian limit with covariance matrix defined as in \cite{Han22} (note that the assumptions of Theorem \ref{edg_nn} in this case are not enough to have \eqref{cond_var_ge0} since from \cite{Han22} it follows that $G^{(1)}(x^{(j)})=z^{(1)}(x^{(j)})$, which in general is not a Gaussian random variable if the weights are not Gaussians.). Then, using Theorem \ref{edg_TV_generic} together with Lemma 8, Lemma 7, and Remark 14 of \cite{C26}, there exists a constant $C>0$, independent of the inner width $n$, such that for $n$ sufficiently large, the total variation distance between the neural network and its Edgeworth expansion, taken with respect to the centered Gaussian vector whose covariance equals the expectation of the conditional covariance matrix of the network, is lower bounded by
\[
 \frac{C}{n^m},
\]
for $C>0$ constant independent of the inner width $n$.
{
We remark that the results of~\cite{C26}, and in particular Section~6.6 therein 
devoted to Gaussian initialization, could in principle be combined with 
Theorem~\ref{edg_TV_generic} to obtain a counterpart of our analysis in the 
regime where the depth $L$ and the width $n$ are simultaneously allowed to diverge. 
We do not pursue this extension here, since the analysis would considerably lengthen the paper and we leave it as a direction for future research.
}
\end{remark}

\begin{remark}\label{remark_picture}
{
Taking Figure 1 in \cite{NO24} as a reference, we compare the approximate density of a shallow real neural network (under the assumptions of Example \ref{es_NN_edg}) with four densities: the infinite-width Gaussian limit \eqref{gas_dens_es}, the first Edgeworth expansion \eqref{para_gauss_1_es}, an intermediate approximation \eqref{para_gauss_2_es}, and the second Edgeworth expansion \eqref{para_gauss_3_es}. As shown in Figure \ref{fig:edg_1dim}, when the inner width is small, the first two Edgeworth-based approximations mostly outperform the infinite-width Gaussian limit. For very large inner widths, however, the second Edgeworth expansion \eqref{para_gauss_3_es} yields in general the best approximation. The deterioration of this latter expansion at small widths can be attributed to the oscillatory nature of the higher-order Hermite polynomials appearing in the series, whose contributions become dominant before the expansion has had a chance to converge. This behavior is consistent with the well-known limitations of Edgeworth expansions at moderate sample sizes (see e.g.\ \cite{H92}).
}
\end{remark}
\begin{remark}
Theorem \ref{edg_nn} shows that the rate $n^{-m}$ is {optimal}: when the Edgeworth expansion is done with respect to a Gaussian vector with covariance given by the expectation of the conditional covariance matrix of the neural network, we prove a matching lower bound of order $n^{-m}$.

To our knowledge, this is the first work establishing quantitative and high-order approximation bounds for multidimensional neural network outputs via Edgeworth expansions, together with matching lower bounds.  
\end{remark}

\begin{figure}[ht] 
    \centering
    \includegraphics[width=0.9\textwidth]{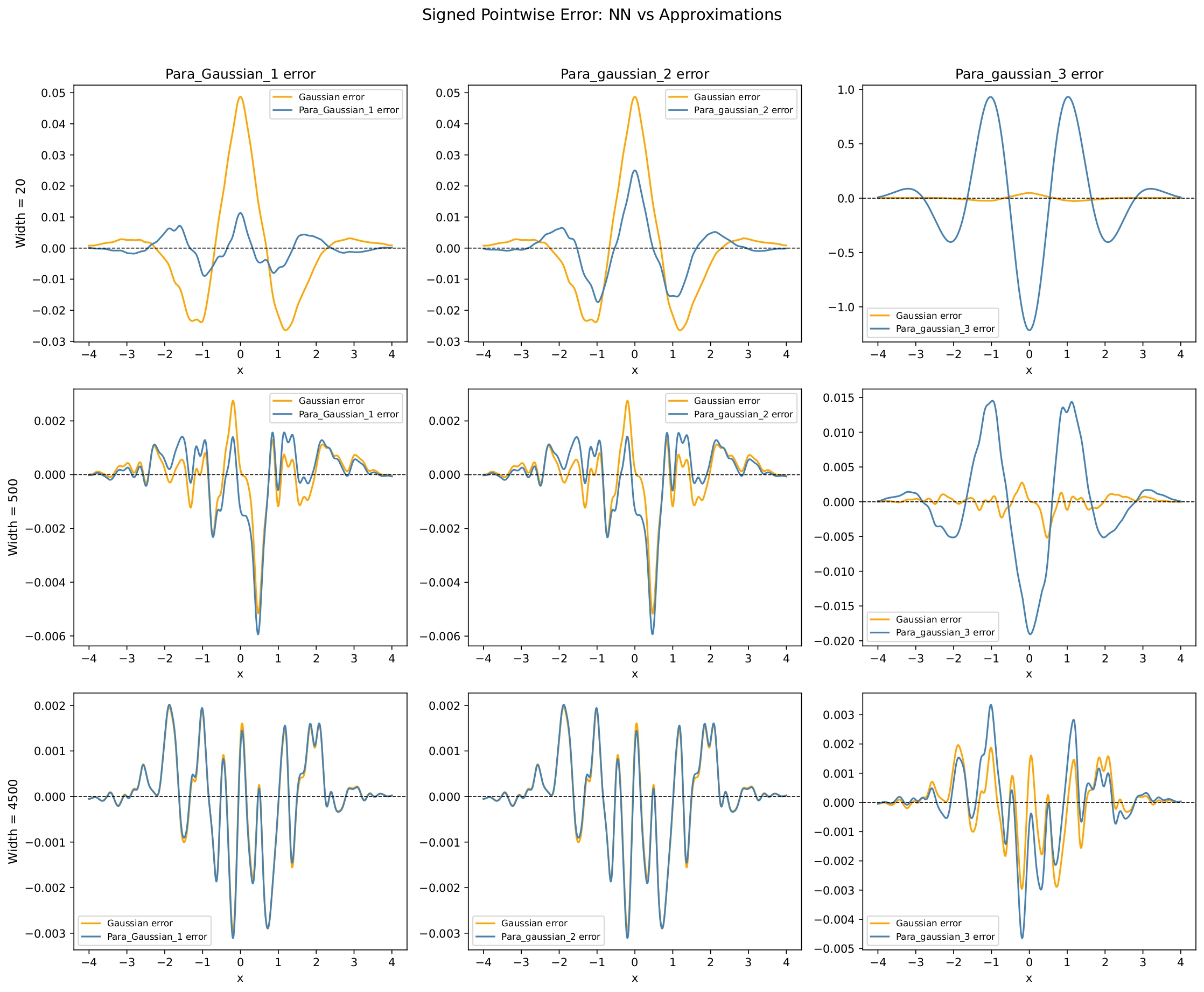} 
    \caption{
    { The figure displays the signed pointwise error between the Monte Carlo 
kernel density estimate~(KDE) of the output distribution of a shallow 
neural network, as defined in Example~\ref{es_NN_edg}, denoted 
$\rho_{\text{NN}}$, and each of the following approximations: the 
Gaussian approximation $\rho_{\text{Gaussian}}$ given 
by~\eqref{gas_dens_es}, and the higher-order Edgeworth (Para-Gaussian) 
approximations \texttt{Para\_Gaussian\_1}, \texttt{Para\_Gaussian\_2}, 
and \texttt{Para\_Gaussian\_3}, given respectively 
by~\eqref{para_gauss_1_es}, \eqref{para_gauss_2_es}, 
and~\eqref{para_gauss_3_es}. Results are shown for inner widths $n = 20$, 
$n = 500$, and $n = 4500$.
More precisely, in each panel the orange curve represents the error 
$\rho_{\text{NN}} - \rho_{\text{Gaussian}}$, while the blue curve 
represents $\rho_{\text{NN}} - \texttt{Para\_Gaussian\_1}$ in the first 
column, $\rho_{\text{NN}} - \texttt{Para\_Gaussian\_2}$ in the second 
column, and $\rho_{\text{NN}} - \texttt{Para\_Gaussian\_3}$ in the third 
column. As noted in Remark~\ref{remark_picture}, the poor behaviour of the 
third-order approximation at small inner widths is expected, and is due 
to the nature of the Edgeworth expansion itself.}}
    \label{fig:edg_1dim} 
\end{figure}

\section{Bayesian supervised learning with neural networks}\label{sec:bayesian}

Consider a supervised learning problem with training dataset
\begin{equation}\label{data}
\mathcal{D} := \{(x^{(i)}, y^{(i)})\}_{i=1}^d \subseteq \left(\mathbb{R}^{n_0}\setminus\{0\}\right) \times \mathbb{R}^{n_{L+1}}.
\end{equation}
Assume that the labels are generated by a continuous target function evaluated at the corresponding inputs. The aim is to approximate this function using a neural network with parameters
\begin{equation}\label{par_theta}
\Theta := \{ b_i^{(\ell)},\, W_{i,j}^{(\ell)} \}_{i,j,\ell},
\end{equation}
see, for instance,~\cite{Cybenko}.

We adopt a Bayesian perspective on this problem (see, e.g.,~\cite{GPML,Trev,Fort22,HBNPS,CP25}). Following \cite{HBNPS,Trev,CP25}, we assume that the likelihood depends on the parameters only through the network output, namely it is of the form
\[
\mathcal{L}\bigl(z^{(L+1)}(\mathcal{X}\mid\Theta)\bigr),
\]
where $\mathcal{L} : \mathbb{R}^{ n_{L+1} d} \to \mathbb{R}_+$ is a non-negative continuous function, $
\mathcal{X} := \{x^{(1)}, \dots, x^{(d)}\}$ and we write $z^{(L+1)}(\mathcal{X}\mid\Theta)$ to stress the dependence of the neural network on the parameters $\Theta$.

A prior distribution is specified on the parameter vector $\Theta$, which induces a prior probability measure on the neural network, viewed as a random variable taking values in an appropriate functional space.
By Bayes' Theorem, the posterior measure $\mu_{\mid \mathcal{D}}$ is then defined as
\begin{equation}\label{post_nn}
\mathrm{d}\mu_{\mid \mathcal{D}}(z)
:= \frac{\mathcal{L}(z)}{\mathbb{E}\bigl[\mathcal{L}\bigl(z^{(L+1)}(\mathcal{X}\mid\Theta_0)\bigr)\bigr]}\,
\mathrm{d}\mu(z),
\end{equation}
where $\Theta_0$ denotes the random parameter vector at initialization, distributed according to the prior.

The posterior is then used to make predictions for the value of the unknown function at a new input $x^* \in \mathbb{R}^{n_0}$. This defines a predictive probability measure on $\mathbb{R}^{n_{L+1}}$ given by
\begin{equation}\label{pred_distr_nn}
B \in \mathcal{B}(\mathbb{R}^{n_{L+1}}) \longmapsto\frac{1}{\mathbb{E}\bigl[\mathcal{L}\bigl(z^{(L+1)}(\mathcal{X}\mid\Theta_0)\bigr)\bigr]}
\mathbb{E}\Bigl[
\mathbf{1}_B\bigl(z^{(L+1)}(x^* \mid \Theta_0)\bigr)\,
\mathcal{L}\bigl(z^{(L+1)}(\mathcal{X} \mid \Theta_0)\bigr)
\Bigr],
\end{equation}
where $\mathbf{1}_B$ denotes the indicator function of the Borel set $B$.

Recently, in~\cite{HBNPS,Trev,CP25}, the problem of the convergence in law of the posterior distribution of a fully connected neural network under Gaussian initialization to that of its Gaussian limit (defined in Theorem~\ref{fin_th_n}) has been studied, in the regime where the hidden-layer widths diverge to infinity. In particular,~\cite{Trev,CP25} established a convergence rate of order $1/\mathrm{width}$ for several probability distances, under suitable assumptions on the likelihood function. 

In the following Theorem, we show that using as a prior the Edgeworth expansion defined in~\eqref{edg_measure} (with $Z,G,K,A$ as in Theorem \ref{edg_nn}) instead of the Gaussian infinite-width limit yields an improved convergence rate and therefore a more accurate approximation of the posterior laws in total variation distance.

\begin{theorem}\label{bayes_th_edg}
    Assume the condition on the inner widths \eqref{bounds_n} and suppose that $n>>1$. 
    Define $\mu$ as the law of the neural network at initialization $z^{(L+1)}(\mathcal{X}\mid\Theta_0)$, where $\Theta_0$ is given by \eqref{par_theta} under Assumption \ref{bW}. Define the corresponding posterior measure $\mu_{|\mathcal{D}}$ as in \eqref{post_nn} and define 
    \begin{equation}\label{approx_post_distr}
    \gamma_{|\mathcal{D}}:=\frac{\mathcal{L}(x)}{\int_{\mathbb{R}^{ n_{L+1} d}}\mathcal{L}(x)d\gamma_{Z,G,m}(x)}d\gamma_{Z,G,m}(x),
    \end{equation}
    where  $\mathcal{L} : \mathbb{R}^{ n_{L+1} d} \to \mathbb{R}_+$ is a non-negative bounded continuous function and $\gamma_{Z,G,m}$ is defined in \eqref{edg_measure} with $Z,G,K,A$ as in Theorem \ref{edg_nn}.
    If $\mathbb{E}\left[\mathcal{L}\left(G^{(L+1)}(\mathcal{X}\right)\right]>0$ and the matrices defined in \eqref{lim_cov}, $\{K^{(\ell)}\}_{\ell=1,\dots,L+1}$, are invertible on $\mathcal{X}$,
    then there exists a constant $D\ge 0$ independent of $n$ such that
    {
    \[
    d_{TV}(\mu_{|\mathcal{D}},\gamma_{|\mathcal{D}})\le \frac{D}{n^m}.
    \]
    }
  
\end{theorem}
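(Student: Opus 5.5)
The argument is a perturbation estimate for normalized measures, fed by Theorem \ref{edg_nn}. Write $\mu$ for $P_Z$ and $\gamma:=\gamma_{Z,G,m}$. Set $\mathcal{Z}_\mu:=\int\mathcal{L}\,d\mu=\mathbb{E}[\mathcal{L}(z^{(L+1)}(\mathcal{X}\mid\Theta_0))]$ and $\mathcal{Z}_\gamma:=\int\mathcal{L}\,d\gamma$. For any Borel $h$ with $\|h\|_\infty\le 1$ we decompose
\[
\int h\,d\mu_{|\mathcal{D}}-\int h\,d\gamma_{|\mathcal{D}}
=\frac{1}{\mathcal{Z}_\mu}\Big(\int h\mathcal{L}\,d\mu-\int h\mathcal{L}\,d\gamma\Big)
+\int h\mathcal{L}\,d\gamma\,\Big(\frac{1}{\mathcal{Z}_\mu}-\frac{1}{\mathcal{Z}_\gamma}\Big).
\]
Since $h\mathcal{L}$ is bounded by $\|\mathcal{L}\|_\infty$, the definition \eqref{dTV_def} applied to signed measures gives
\[
\Big|\int h\mathcal{L}\,d\mu-\int h\mathcal{L}\,d\gamma\Big|\le 2\|\mathcal{L}\|_\infty d_{TV}(\mu,\gamma).
\]
The same bound holds for $|\mathcal{Z}_\mu-\mathcal{Z}_\gamma|$. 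By \eqref{bound_dTV_nn_no_exp} in Theorem \ref{edg_nn}, both quantities are at most $2\|\mathcal{L}\|_\infty C_1 n^{-m}$.

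\textbf{Denominators.} It remains to bound $1/\mathcal{Z}_\mu$ and $1/\mathcal{Z}_\gamma$ from above, uniformly in $n$ large. By Theorem \ref{fin_th_n}, which applies because the $K^{(\ell)}$ are invertible,
\[
|\mathcal{Z}_\mu-\mathbb{E}[\mathcal{L}(G^{(L+1)}(\mathcal{X}))]|\le 2\|\mathcal{L}\|_\infty D n^{-1}.
\]
Set $c_0:=\mathbb{E}[\mathcal{L}(G^{(L+1)}(\mathcal{X}))]>0$. For $n\ge n_0$ we then have $\mathcal{Z}_\mu\ge c_0/2$. Combining with the previous step, $|\mathcal{Z}_\gamma-\mathcal{Z}_\mu|\le 2\|\mathcal{L}\|_\infty C_1 n^{-m}$, so $\mathcal{Z}_\gamma\ge c_0/4$ for $n$ large. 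This is where the hypothesis $n\gg1$ enters. In particular $\gamma_{|\mathcal{D}}$ is well defined, even though $\gamma$ is only a signed measure.

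\textbf{Total variation of $\gamma$.} In the second term we also need $|\int h\mathcal{L}\,d\gamma|\le\|\mathcal{L}\|_\infty|\gamma|(\mathbb{R}^{n_{L+1}d})$, bounded uniformly in $n$. Recall that the total variation norm of a finite signed measure $\nu$ is $\|\nu\|:=\sup_{\|h\|_\infty\le1}|\int h\,d\nu|$. By the triangle inequality $|\gamma|(\mathbb{R}^{n_{L+1}d})\le 1+2d_{TV}(\mu,\gamma)\le 1+2C_1$. Alternatively, one can use the finiteness remark after \eqref{edg_measure}, together with the uniform moment bounds on $Q$ coming from \cite{Han_Gas}.

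\textbf{Conclusion.} Putting everything together, each term is bounded by a constant times $n^{-m}$, and $D$ depends on $\|\mathcal{L}\|_\infty$, $c_0$, $C_1$. Taking the supremum over $h$ and halving gives $d_{TV}(\mu_{|\mathcal{D}},\gamma_{|\mathcal{D}})\le D n^{-m}$ for all large $n$. The finitely many small $n$ are irrelevant under the standing assumption $n\gg1$.

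The main obstacle is the lower bound on $\mathcal{Z}_\gamma$. Because $\gamma$ is signed, $\int\mathcal{L}\,d\gamma$ could a priori be small or negative, and one must pass through $\mathcal{Z}_\mu$ and the Gaussian limit to get positivity.
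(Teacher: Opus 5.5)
Your proposal is correct and follows essentially the same route as the paper. The paper adapts the normalized-measure perturbation argument of Theorem 6 in \cite{CP25}, feeds in the bound of Theorem \ref{edg_nn}, and keeps $\int\mathcal{L}\,d\gamma_{Z,G,m}$ away from zero by a reverse triangle inequality. The only minor difference is that you derive both the lower bound on $\mathcal{Z}_\gamma$ and the bound on $|\gamma|$ purely from the total variation estimates of Theorems \ref{fin_th_n} and \ref{edg_nn}, instead of from the explicit bounds on the Hermite correction terms in the proof of Theorem \ref{edg_TV_generic}; this is, if anything, slightly cleaner.
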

\begin{proof}
The argument follows the same strategy as the proof of Theorem~6 in~\cite{CP25}, combined with Theorem~\ref{edg_nn} and with suitable bounds on the measures $\gamma_{Z,G,m}$, already established in the proof of Theorem~\ref{edg_TV_generic} (see Section~\ref{proof_th_edg_TV_gen}).

We also rely on the following observation: if
\[
\mathbb{E}\left[\mathcal{L}\left(G^{(L+1)}(\mathcal{X})\right)\right] > 0,
\]
then
\[
\left| \int_{\mathbb{R}^{n_{L+1}d}} \mathcal{L}(x)\, \mathrm{d}\gamma_{Z,G,m}(x) \right| > 0.
\]
This conclusion follows from the bounding techniques developed in the proof of Theorem~\ref{edg_TV_generic}, together with the reverse triangle inequality for the absolute value.
\end{proof}

\subsection{{The case of the Gaussian Likelihood function}}
We now show that, in the case of a Gaussian likelihood, the posterior distribution $\gamma_{|\mathcal{D}}$, defined in \eqref{approx_post_distr} and under the same assumptions of Theorem \ref{bayes_th_edg}, admits an explicit expression.

For simplicity, assume $n_{L+1}=1$ and consider a Gaussian likelihood of the form
\[
\mathcal{L}\bigl(z^{(L+1)}(\mathcal{X}\mid\Theta)\bigr),
\qquad
\mathcal{L}(z):=\exp\!\left(-\frac{1}{2}\sum_{i=1}^d |z_i-y^{(i)}|^2\right),
\quad z\in\mathbb{R}^d,
\]
recalling that $\{y^{(i)}\}_{i=1,\dots,d}\subseteq \mathbb{R}$ denote the labels (see \eqref{data}).
Then, following the approach of~\cite{GPML}, the approximating posterior distribution $\gamma_{|\mathcal{D}}$ defined in \eqref{approx_post_distr} admits the explicit expression
\begin{multline}\label{approx_post_distr_gaus}
z=(z_j)_{j=1,\dots,d}\longmapsto
\frac{1}{\mathcal{T}}\Bigg(1+\sum_{k=1}^{2m-1}\frac{1}{k!\,2^k}
\sum_{J\in S_d^{(2k)}}\sum_{\alpha\in\mathcal{A}_J}
\mathbb{E}\!\left[Q_{\alpha_1,\alpha_2}\cdots Q_{\alpha_{2k-1},\alpha_{2k}}\right]\\
\cdot
\left(H_{j_1}(u_1)\cdots H_{j_d}(u_d)\right)\Big|_{u=(\sqrt{K^{(L+1)}})^{-1}z}
\Bigg)
\exp\!\left(-\frac{1}{2}\bigl\langle z-\mu,\Sigma(z-\mu)\bigr\rangle\right),
\end{multline}
where $\mathcal{T}$ is a normalizing constant ensuring that the total mass is one and $\mu$ and $\Sigma$ are given by
\[
\mu=\bigl((K^{(L+1)})^{-1}+I_d\bigr)^{-1}Y,
\qquad
\Sigma=(K^{(L+1)})^{-1}+I_d,
\]
where $Y=(y^{(1)},\dots,y^{(d)})\in\mathbb{R}^d$ and $I_d\in\mathbb{R}^{d\times d}$ denotes the identity matrix.

In particular, the normalizing constant $\mathcal{T}$ can be written explicitly as
\begin{multline}\label{norm_const_bayes}
\mathcal{T}=\frac{1}{(2\pi)^{d/2}\sqrt{\det(\Sigma)}}\Bigg(1+\sum_{k=1}^{2m-1}\frac{1}{k!\,2^k}
\sum_{J\in S_d^{(2k)}}\sum_{\alpha\in\mathcal{A}_J}
\mathbb{E}\!\left[Q_{\alpha_1,\alpha_2}\cdots Q_{\alpha_{2k-1},\alpha_{2k}}\right]\\
\cdot
\sum_{r_1=j_1-2\lfloor j_1/2\rfloor}^{j_1}\cdots
\sum_{r_d=j_d-2\lfloor j_d/2\rfloor}^{j_d}1_{\{r_1+\dots+r_d=\text{even}\}}
\binom{j_1}{r_1}\cdots\binom{j_d}{r_d}
\bigl((\sqrt{K}^{-1}\mu)_1\bigr)^{r_1}\cdots
\bigl((\sqrt{K}^{-1}\mu)_d\bigr)^{r_d}\\
\cdot
\dfrac{(j_1-r_1)!\dots(j_d-r_d)!}{2^{(r_1+\dots+r_d)/2}((r_1+\dots+r_d)/2)!}\displaystyle\sum_{\alpha\in\mathcal{A}_{r_1+\dots+r_d}}
R_{\alpha_1,\alpha_2}\cdots R_{\alpha_{r_1+\dots+r_d-1},\alpha_{r_1+\dots+r_d}}
\Bigg),
\end{multline}
where
\[
R:=(I_d+K^{(L+1)})^{-1}-I_d.
\]

Identity~\eqref{norm_const_bayes} immediately follows from the next lemma, whose proof is given in Appendix~\ref{proof_norm_const}.

\begin{lemma}\label{prop_herm_with_expectation}
Let $J=(j_1,\dots,j_d)\in S_d^{(2k)}$. Then
\begin{multline}\label{eq_1_lemma_int_her}
\mathbb{E}\!\left[
H_{j_1}\!\left((\sqrt{K}^{-1}\tilde{G})_1\right)\cdots
H_{j_d}\!\left((\sqrt{K}^{-1}\tilde{G})_d\right)
\right]\\
=\sum_{r_1=j_1-2\lfloor j_1/2\rfloor}^{j_1}\cdots
\sum_{r_d=j_d-2\lfloor j_d/2\rfloor}^{j_d}
\binom{j_1}{r_1}\cdots\binom{j_d}{r_d}
\bigl((\sqrt{K}^{-1}\mu)_1\bigr)^{r_1}\cdots
\bigl((\sqrt{K}^{-1}\mu)_d\bigr)^{r_d}\\
\cdot
\mathbb{E}\!\left[H_{j_1-r_1}(\tilde{Z}_1)\cdots H_{j_d-r_d}(\tilde{Z}_d)\right],
\end{multline}
where $\tilde{G}\sim\mathcal{N}_d(\mu,\Sigma)$ and
$\tilde{Z}\sim\mathcal{N}_d\!\left(0,\sqrt{K}^{-1}\Sigma\sqrt{K}^{-1}\right)$.
Moreover, for any $s=(s_1,\dots,s_d)\in S_d^{(v)}$,
\begin{equation}\label{eq_2_lemma_int_her}
\mathbb{E}\!\left[H_{s_1}(\tilde{Z}_1)\cdots H_{s_d}(\tilde{Z}_d)\right]=
\begin{cases}
\dfrac{s_1!\dots s_d!}{2^{v/2}(v/2)!}\displaystyle\sum_{\alpha\in\mathcal{A}_s}
R_{\alpha_1,\alpha_2}\cdots R_{\alpha_{v-1},\alpha_v},
& \text{if $v$ is even},\\[1ex]
0, & \text{if $v$ is odd},
\end{cases}
\end{equation}
with $R=\sqrt{K}^{-1}\Sigma\sqrt{K}^{-1}-I_d$.
\end{lemma}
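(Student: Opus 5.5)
The plan is to prove \eqref{eq_2_lemma_int_her} first, via generating functions, and then deduce \eqref{eq_1_lemma_int_her} from the translation formula for Hermite polynomials.

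\emph{Step 1: proof of \eqref{eq_2_lemma_int_her}.} Start from the classical generating function
\[
\sum_{j\ge 0}\frac{t^j}{j!}H_j(x)=e^{tx-t^2/2},
\]
which follows directly from \eqref{herm_def}. Taking a product over coordinates gives, for $t\in\mathbb{R}^d$,
\[
\sum_{s\in(\mathbb{N}\cup\{0\})^d}\frac{t_1^{s_1}\cdots t_d^{s_d}}{s_1!\cdots s_d!}H_{s_1}(x_1)\cdots H_{s_d}(x_d)=e^{\langle t,x\rangle-|t|^2/2}.
\]
Evaluate this at $x=\tilde Z$, whose covariance is $I_d+R$, and take expectations. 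The exchange of sum and expectation is justified because all Gaussian exponential moments are finite, so everything is absolutely convergent. This yields
\[
e^{\frac12\langle t,(I_d+R)t\rangle-\frac12|t|^2}=e^{\frac12\langle t,Rt\rangle}=\sum_{k\ge0}\frac{1}{2^k k!}\langle t,Rt\rangle^k .
\]
Next expand
\[
\langle t,Rt\rangle^k=\sum_{\alpha\in\{1,\dots,d\}^{2k}}R_{\alpha_1,\alpha_2}\cdots R_{\alpha_{2k-1},\alpha_{2k}}\,t_{\alpha_1}\cdots t_{\alpha_{2k}}
\]
and group the $\alpha$'s by their multiplicity vector $s$. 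By \eqref{def_A_j}, the monomial $t^s$ arises exactly from $\alpha\in\mathcal{A}_s$. Comparing the coefficient of $t^s$ on both sides (both are real-analytic in $t$) gives the even case with $v=2k$. Only even total degrees appear on the right-hand side, so the coefficient vanishes when $v$ is odd.

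\emph{Step 2: proof of \eqref{eq_1_lemma_int_her}.} Write
\[
\sqrt K^{-1}\tilde G \overset{d}{=} a+\tilde Z, \qquad a:=\sqrt K^{-1}\mu,
\]
which is legitimate since $\sqrt K^{-1}(\tilde G-\mu)$ has covariance $\sqrt K^{-1}\Sigma\sqrt K^{-1}$. Then apply coordinatewise the translation identity
\[
H_j(y+a)=\sum_{r=0}^{j}\binom{j}{r}a^rH_{j-r}(y).
\]
This identity is itself obtained from the generating function, via $e^{t(y+a)-t^2/2}=e^{ta}e^{ty-t^2/2}$ and comparison of coefficients. Multiplying over $i=1,\dots,d$ and taking expectations produces a finite multi-sum of products of the form $\binom{j_i}{r_i}a_i^{r_i}$ times $\mathbb{E}[\prod_i H_{j_i-r_i}(\tilde Z_i)]$, which is \eqref{eq_1_lemma_int_her}.

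\emph{Main obstacle.} The delicate point is matching the summation range in \eqref{eq_1_lemma_int_her} with the one produced by the translation identity. By Step 1, a term vanishes exactly when $\sum_i(j_i-r_i)$ is odd, and terms violating this parity constraint can be dropped. I would check carefully that the stated lower limits $j_i-2\lfloor j_i/2\rfloor$ remove only such vanishing terms. If they do not, the identity holds with the full range $r_i=0,\dots,j_i$ together with the global parity indicator, as in \eqref{norm_const_bayes}. Finally, substituting \eqref{eq_2_lemma_int_her} into \eqref{eq_1_lemma_int_her} with $v=\sum_i(j_i-r_i)$ gives the formula for $\mathcal T$.
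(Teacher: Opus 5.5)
Your argument is correct, but the check you postpone in your last paragraph fails: the lower limits $j_i-2\lfloor j_i/2\rfloor$ in \eqref{eq_1_lemma_int_her} do \emph{not} discard only vanishing terms.

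Take $d=2$, $K=I_2$, $J=(1,1)$. Read literally, the right-hand side keeps only $r_1=r_2=1$ and equals $\mu_1\mu_2$. The left-hand side is $\mathbb{E}[\tilde G_1\tilde G_2]=\mu_1\mu_2+\Sigma_{1,2}=\mu_1\mu_2+R_{1,2}$. More generally, whenever some $j_i$ is odd, the terms with $r_i=0$ and $r_1+\dots+r_d$ even are dropped, although they are generically nonzero. Your Step 1 shows that a term vanishes \emph{if} $\sum_l(j_l-r_l)$ is odd, not ``exactly when''.

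So the identity that actually holds is the one your Step 2 proves, with each $r_i$ running over $\{0,\dots,j_i\}$. You may optionally add the indicator $1_{\{r_1+\dots+r_d\ \text{even}\}}$, since $j_1+\dots+j_d=2k$. This full range is also what the paper's own expansion-and-regrouping argument yields once the sums over $m_i$ and $r_i$ are interchanged correctly. It is also the range the paper uses in its worked example for $\bar J=\bar\eta(1,1,0,\dots,0)$ right after the lemma. The lower limits in \eqref{norm_const_bayes} need the same correction.

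For \eqref{eq_1_lemma_int_her} your route coincides in substance with the paper's. The paper expands $H_{j_i}$ through its explicit coefficients, expands $(a_i+\tilde Z_i)^{j_i-2m_i}$ binomially and regroups. That is a hands-on proof of the translation identity $H_j(y+a)=\sum_r\binom{j}{r}a^rH_{j-r}(y)$, which you read off the generating function instead.

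For \eqref{eq_2_lemma_int_her} you genuinely differ. The paper inducts on $v$:
\begin{itemize}
\item it writes $H_{s_1}(x)=xH_{s_1-1}(x)-(s_1-1)H_{s_1-2}(x)$;
\item it applies Gaussian integration by parts to $\mathbb{E}[\tilde Z_1F(\tilde Z)]$, where the covariance $I_d+R$ leaves $R_{1,1}$ after cancellation and gives $R_{1,i}$ for $i\ge2$;
\item it reassembles the pairings into $\sum_{\alpha\in\mathcal{A}_s}$ by counting permutations with $\sigma(1)=1$.
\end{itemize}
Your identity $\mathbb{E}[e^{\langle t,\tilde Z\rangle-|t|^2/2}]=e^{\frac12\langle t,Rt\rangle}$, followed by coefficient extraction, delivers the constant $s_1!\cdots s_d!/(2^kk!)$, the index set $\mathcal{A}_s$ and the odd case all at once, with no pairing combinatorics.

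The price is analytic: the interchange of sum and expectation, which you should make explicit. From the explicit coefficients of $H_j$ one gets $\sum_{j\ge0}\frac{|t|^j}{j!}|H_j(x)|\le e^{|t||x|+t^2/2}$. Hence $\prod_i e^{|t_i||\tilde Z_i|+t_i^2/2}$ is an integrable dominating function, and equality of two everywhere-convergent power series then gives equality of coefficients. The paper's induction, by contrast, is purely finite and needs no convergence argument, at the cost of a more delicate combinatorial step.
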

{
\begin{example}
Recalling the results from Example \ref{ex_edg_dens} for $n=1$,  
\[
S_{d}^{(2)}=\left\{\eta\left(2,0,\dots,0\right)\right\}_{\eta\in\Pi_{d}}\cup\left\{\eta\left(1,1,0\dots,0\right)\right\}_{\eta\in\Pi_{d}},
\] 
denoting with $\Pi_{d}$ the sets of all the permutations of $d$ elements and writing 

\noindent
$\eta(a_1,\dots,a_{d}):=(a_{\eta(1)},\dots a_{\eta(d)})$ for every vector $(a_1,\dots,a_{d})\in\mathbb{R}^{d}$ and $\eta \in \Pi_{d}$.

Hence, taking $\eta\in\Pi_d$ and
\[
J=\eta\left(2,0,\dots,0\right)\in S_{d}^{(2)}
\]
in Lemma \ref{prop_herm_with_expectation}, we obtain, keeping the notations of the Lemma, that 
\begin{multline*}
\mathbb{E}\!\left[
H_{j_1}\!\left((\sqrt{K}^{-1}\tilde{G})_1\right)\cdots
H_{j_d}\!\left((\sqrt{K}^{-1}\tilde{G})_d\right)
\right]
=\sum_{r=0}^{2}
\binom{2}{r}
\bigl((\sqrt{K}^{-1}\mu)_{\eta(1)}\bigr)^{r}
\mathbb{E}\!\left[H_{2-r}(\tilde{Z}_{\eta(1)})\right]\\
=\mathbb{E}\!\left[H_{2}(\tilde{Z}_{\eta(1)})\right]+
\bigl((\sqrt{K}^{-1}\mu)_{\eta(1)}\bigr)^{2}
=\displaystyle\sum_{\alpha\in\mathcal{A}_J} R_{\alpha_1,\alpha_2}+\bigl((\sqrt{K}^{-1}\mu)_{\eta(1)}\bigr)^{2}\\
=R_{\eta(1),\eta(1)}+\bigl((\sqrt{K}^{-1}\mu)_{\eta(1)}\bigr)^{2},
\end{multline*}
since in this case $\mathcal{A}_J=\{(\eta(1),\eta(1))\}$.

If instead we consider $\bar{J}=\bar{\eta}(1,1,0,\dots,0)\in S^{(2)}_d$ for a certain $\bar{\eta}\in\Pi_d$, then
\begin{multline*}
\mathbb{E}\!\left[
H_{\bar{j}_1}\!\left((\sqrt{K}^{-1}\tilde{G})_1\right)\cdots
H_{\bar{j}_d}\!\left((\sqrt{K}^{-1}\tilde{G})_d\right)
\right]\\
=\sum_{r_1=0}^{1}
\sum_{r_2=0}^{1}
\binom{1}{r_1}\binom{1}{r_2}
\bigl((\sqrt{K}^{-1}\mu)_{\bar{\eta}(1)}\bigr)^{r_1}
\bigl((\sqrt{K}^{-1}\mu)_{\bar{\eta}(2)}\bigr)^{r_2}
\mathbb{E}\!\left[H_{1-r_1}(\tilde{Z}_{\bar{\eta}(1)}) H_{1-r_2}(\tilde{Z}_{\bar{\eta}(2)})\right]\\
=\mathbb{E}\!\left[H_{1}(\tilde{Z}_{\bar{\eta}(1)}) H_{1}(\tilde{Z}_{\bar{\eta}(2)})\right]+(\sqrt{K}^{-1}\mu)_{\bar{\eta}(2)}\mathbb{E}\!\left[H_{1}(\tilde{Z}_{\bar{\eta}(1)})\right]+(\sqrt{K}^{-1}\mu)_{\bar{\eta}(1)}\mathbb{E}\!\left[H_{1}(\tilde{Z}_{\bar{\eta}(2)})\right]\\
+(\sqrt{K}^{-1}\mu)_{\bar{\eta}(1)}
(\sqrt{K}^{-1}\mu)_{\bar{\eta}(2)}
=R_{\bar{\eta}(1),\bar{\eta}(2)}
+(\sqrt{K}^{-1}\mu)_{\bar{\eta}(1)}
(\sqrt{K}^{-1}\mu)_{\bar{\eta}(2)},
\end{multline*}
using that $H_1(x)=x$.
\end{example}

\begin{example}
    In the particular case of $n=d=1$, then, as seen in Example \ref{es_edg_11}, we have that for every $k\in\mathbb{N}$
    \[
    S_1^{(2k)}=\{(2k)\}\quad\text{and}\quad \mathcal{A}_J=\{(1,\dots,1)\}\subseteq \mathbb{N}^{2k}\quad\text{for $J\in S_1^{(2k)}$}.
    \]
    Therefore, for $J=(2k)$,
    \begin{multline*}  \mathbb{E}\!\left[H_{2k}\!\left((\sqrt{K}^{-1}\tilde{G})\right)\right]=\sum_{r=0}^{2k}\binom{2k}{r}\bigl(\sqrt{K}^{-1}\mu\bigr)^{r}\mathbb{E}\!\left[H_{2k-r}(\tilde{Z})\right]\\
    =\sum_{r=0}^{k}\binom{2k}{2r}\bigl(\sqrt{K}^{-1}\mu\bigr)^{2r}\mathbb{E}\!\left[H_{2k-2r}(\tilde{Z})\right]=\sum_{r=0}^{k}\binom{2k}{2r}\frac{(2k-2r)!}{2^{k-r}(k-r)!}(R_{1,1})^{k-r}\bigl(\sqrt{K}^{-1}\mu\bigr)^{2r},
    \end{multline*}
    recalling that in this case $R=R_{1,1}\in\mathbb{R}$.
\end{example}
}

\appendix
\section{Proof of Theorem \ref{edg_TV_generic}}\label{proof_th_edg_TV_gen}

We consider a random vector $Z$ with values in $\mathbb{R}^{n d}$ and with Gaussian distribution conditionally on a $\sigma$-field $\mathcal{F}$, zero expectation and conditional covariance matrix given by $A^{\oplus n}$ (see Definition \ref{cond_Ga_def}), which denotes a block matrix with $n$ diagonal blocks all equal to a matrix $A\in\mathbb{R}^{d\times d}$. We denote the law of $Z$ by $P_Z$.

We also define $G\sim\mathcal{N}_{n d}(0,K^{\oplus n})$ with $K$ invertible and assume, without loosing of generality, that $G$ is independent of $A$.

In the remainder of the paper, for any positive definite matrix $C$ we will write $\phi_C$ to denote the density of a Gaussian vector with zero expectation and the covariance matrix given by $C$.

\subsection{Proof of the upper bound \eqref{bound_dTV_gen_no_exp} on the Total Variation distance }
Fix a function $h:\mathbb{R}^{n d}\to\mathbb{R}$ with $|h(x)|\le 1$ for every $x\in\mathbb{R}^{n d}$ as in Definition \ref{dTV_def}. We want to study
\[
\left|\int_{\mathbb{R}^{n d}}h(x)dP_Z(x)-\int_{\mathbb{R}^{n d}}h(x)d\gamma_{Z,G,m}(x)\right|,
\]
where $\gamma_{Z,G,m}$ is the measure with density defined in \eqref{edg_measure}. To do this, following the idea in \cite{Trev}, we introduce the event (recalling that $A$ is a random matrix)
\begin{equation}\label{def_ev_E}
E:=\left\{\|A-K\|_{op}\le\frac{\lambda(K)}{2}\right\},
\end{equation}
where $\|\cdot\|_{op}$ denotes the operator norm of a matrix and $\lambda(K)$ denotes the minimum eigenvalue of $K$, which is invertible by assumption.

\begin{remark}\label{inv_A_in_E}
    As observed in \cite{Trev,CP25}, if $\|A-K\|_{\text{op}}\le \frac{\lambda(K)}{2}$ then $A$ is invertible because for every $x\in\mathbb{R}^d$ with $\|x\|=1$ one has that
\[
x^TAx=x^T(A-K)x+x^TKx\ge \lambda(K)-\|A-K\|_{\text{op}}\ge \frac{\lambda(K)}{2}>0.
\]
\end{remark}
Thanks to Remark \ref{inv_A_in_E} and Lemma \ref{gaus_stut_NN}, in the event $E$ defined in \ref{def_ev_E} we have that, conditioning on the $\sigma$-field $\mathcal{F}$, the random vector $Z$ has a density 
\[
x\in\mathbb{R}^{nd}\mapsto \prod_{i=1}^n\phi_A(\tilde{x}_{i}),
\]
where $\tilde{x}_i=\left(x_{d(i-1)+1},\dots,x_{d(i-1)+d}\right)$ for every $i=1,\dots,n$.
Hence 
\begin{multline}\label{divido_E}
\left|\int_{\mathbb{R}^{n d}}h(x)dP_Z(x)-\int_{\mathbb{R}^{n d}}h(x)d\gamma_{Z,G,m}(x)\right|\\
\le \left|\mathbb{E}\left[\int_{\mathbb{R}^{n d}}h(x)1_E\prod_{i=1}^n\phi_A(\tilde{x}_{i})dx-\int_{\mathbb{R}^{nd}}h(x)d\gamma_{Z,G,m}(x)\right]\right|
+\left|\mathbb{E}\left[h(Z)1_{E^C}\right]\right|.
\end{multline}

Writing explicitly the definition of $\gamma_{Z,G,m}$ we obtain that
\begin{multline*}
\left|\mathbb{E}\left[\int_{\mathbb{R}^{n d}}h(x)1_E\prod_{i=1}^n\phi_A(\tilde{x}_{i})dx-\int_{\mathbb{R}^{nd}}h(x)d\gamma_{Z,G,m}(x)\right]\right|
=\Bigg|\mathbb{E}\Bigg[\int_{\mathbb{R}^{n d}}h(x)1_E\prod_{i=1}^n\phi_A(\tilde{x}_{i})dx\\
-\int_{\mathbb{R}^{n d}}h(x)\Bigg(
\prod_{i=1}^{n}\phi_K(\tilde{x}_{i})+\sum_{k=1}^{2m-1}\frac{1}{k!2^k}\sum_{J\in S_{n d}^{(2k)}}\sum_{\alpha\in \mathcal{A}_J}\mathbb{E}\left[(Q^{\oplus n})_{\alpha_1,\alpha_2}\dots(Q^{\oplus n})_{\alpha_{2k-1},\alpha_{2k}}\right]\cdot\\
    \cdot \prod_{i=0}^{n-1} \left(H_{j_{di+1}}\left(u^{(i)}_1\right)\dots H_{j_{di+d}}\left(u^{(i)}_{d}\right)\right)_{|_{u^{(i)}=\sqrt{K}^{-1}\tilde{x}_{i+1}}}\phi_K\left(\tilde{x}_{i+1}\right)
\Bigg)dx\Bigg]\Bigg|,
\end{multline*}
where $Q:=\sqrt{K}^{-1}(A-K)\sqrt{K}^{-1}$, $H_j$ for $j\in\mathbb{N}\cup\{0\}$ are the Hermite polynomials defined in \eqref{herm_def} and $S_{n d}^{(k)}$ and $\mathcal{A}_J$ are defined respectively in \eqref{def_S_k} and \eqref{def_A_j}.

Dividing the expectations in the definition of $\gamma_{Z,G,m}$ in the event $E$ and in its complementary $E^C$, we obtain that
\begin{multline}\label{eq_dopo_lemma_coeff_edg}
\left|\mathbb{E}\left[\int_{\mathbb{R}^{n d}}h(x)1_E\prod_{i=1}^n\phi_A(\tilde{x}_{i})dx-\int_{\mathbb{R}^{nd}}h(x)d\gamma_{Z,G,m}(x)\right]\right| 
   \le \Bigg|\mathbb{E}\Bigg[\int_{\mathbb{R}^{n d}}h(x)1_E\prod_{i=1}^n\phi_A(\tilde{x}_{i})dx\\
-\int_{\mathbb{R}^{n d}}h(x) 1_E\Bigg(
\prod_{i=1}^{n}\phi_K(\tilde{x}_{i})+\sum_{k=1}^{2m-1}\frac{1}{k!2^k}\sum_{J\in S_{n d}^{(2k)}}\sum_{\alpha\in \mathcal{A}_J}(Q^{\oplus n})_{\alpha_1,\alpha_2}\dots(Q^{\oplus n})_{\alpha_{2k-1},\alpha_{2k}}\cdot\\
    \cdot \prod_{i=0}^{n-1} \left(H_{j_{di+1}}\left(u^{(i)}_1\right)\dots H_{j_{di+d}}\left(u^{(i)}_{d}\right)\right)_{|_{u^{(i)}=\sqrt{K}^{-1}\tilde{x}_{i+1}}}\phi_K\left(\tilde{x}_{i+1}\right)
\Bigg)dx\Bigg]\Bigg|\\
+\Bigg|\mathbb{E}\Bigg[\int_{\mathbb{R}^{n d}}h(x)1_{E^C}\Bigg(
\prod_{i=1}^{n}\phi_K(\tilde{x}_{i})+\sum_{k=1}^{2m-1}\frac{1}{k!2^k}\sum_{J\in S_{n d}^{(2k)}}\sum_{\alpha\in \mathcal{A}_J}(Q^{\oplus n})_{\alpha_1,\alpha_2}\dots(Q^{\oplus n})_{\alpha_{2k-1},\alpha_{2k}}\cdot\\
    \cdot \prod_{i=0}^{n-1} \left(H_{j_{di+1}}\left(u^{(i)}_1\right)\dots H_{j_{di+d}}\left(u^{(i)}_{d}\right)\right)_{|_{u^{(i)}=\sqrt{K}^{-1}\tilde{x}_{i+1}}}\phi_K\left(\tilde{x}_{i+1}\right)
\Bigg)dx\Bigg]\Bigg|.
\end{multline}

We now use the following Lemma, which can be extrapolated from the proof of Proposition 4 in \cite{CP25} and that for completeness has been proved in Subsection \ref{sec_proof_lemma_der_phi_K_in_0}.
\begin{lemma}\label{lemma_der_phi_K_in_0}
For any $k\in\mathbb{N}\cup\{0\}$ and $t\in[0,1]$, denote 
\[
\Gamma_t:=tA+(1-t)K\quad\text{and}\quad \tilde{Q}_t:=\sqrt{\Gamma_t}^{-1}(A-K)\sqrt{\Gamma_t}^{-1}
\]
and recall that $\phi_C$ denotes the density of a Gaussian vector with zero expectation and covariance matrix given by $C$. Then, in the event where the matrix $A$ is positive definite, one has that
\begin{multline*}
\frac{\partial^k}{\partial t^k}\phi_{\Gamma_t^{\oplus n}}(x)=\frac{1}{2^k}\sum_{J\in S_{n d}^{(2k)}}\sum_{\alpha\in \mathcal{A}_J}(\tilde{Q}_t^{\oplus n})_{\alpha_1,\alpha_2}\dots (\tilde{Q}_t^{\oplus n})_{\alpha_{2k-1},\alpha_{2k}}\cdot\\
\cdot\prod_{i=0}^{n-1}\left(H_{j_{di+1}}(u^{(i)}_1)\dots H_{j_{di+d}}(u^{(i)}_d)\right)_{|_{u^{(i)}=\sqrt{\Gamma_t}^{-1}\tilde{x}_{i+1}}}\phi_{\Gamma_t^{\oplus n}}(x)
\end{multline*}
for any $x\in\mathbb{R}^{nd}$ and denoting $\tilde{x}_i=\left(x_{d(i-1)+1},\dots,x_{d(i-1)+d}\right)$ for every $i=1,\dots,n$.
\end{lemma}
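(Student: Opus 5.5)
The plan is to reduce to the case $n=1$ and then induct on $k$, using the one-dimensional Hermite identity together with the standard formula for derivatives of a Gaussian density with respect to its covariance.

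\textbf{Reduction to a single block.} Since $\Gamma_t^{\oplus n}$ is block diagonal, the density factorizes:
\[
\phi_{\Gamma_t^{\oplus n}}(x)=\prod_{i=1}^n\phi_{\Gamma_t}(\tilde x_i).
\]
For $t\in[0,1]$ on the event where $A$ is positive definite, $\Gamma_t$ is positive definite, so $\sqrt{\Gamma_t}$ and all objects are well defined. One could differentiate each factor and use the Leibniz rule, but the cleaner route is to treat $\Gamma_t^{\oplus n}$ directly as a single $nd\times nd$ covariance $\Sigma_t:=\Gamma_t^{\oplus n}$. Then $\Sigma_t$ is affine in $t$ with $\dot\Sigma_t=(A-K)^{\oplus n}$, and $\sqrt{\Sigma_t}=\sqrt{\Gamma_t}^{\oplus n}$. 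The block structure of the Hermite product in the statement is then just the coordinatewise Hermite product in the variable $u=\sqrt{\Sigma_t}^{-1}x$. So it suffices to prove, for a general $p$-dimensional affine path $\Sigma_t=\Sigma_0+tD$ of positive definite matrices,
\[
\partial_t^k\phi_{\Sigma_t}(x)=\frac1{2^k}\sum_{J\in S_p^{(2k)}}\sum_{\alpha\in\mathcal A_J}(\tilde Q_t)_{\alpha_1\alpha_2}\cdots(\tilde Q_t)_{\alpha_{2k-1}\alpha_{2k}}\,\mathbf H_J(u)\,\phi_{\Sigma_t}(x),
\]
where $\tilde Q_t=\sqrt{\Sigma_t}^{-1}D\sqrt{\Sigma_t}^{-1}$ and $\mathbf H_J(u)=\prod_s H_{j_s}(u_s)$.

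\textbf{Fourier representation.} Write
\[
\phi_{\Sigma_t}(x)=(2\pi)^{-p}\int e^{-i\langle y,x\rangle}e^{-\frac12\langle y,\Sigma_ty\rangle}\,dy.
\]
Because $\Sigma_t$ is affine, differentiating $k$ times under the integral (justified by Gaussian decay) gives
\[
\partial_t^k\phi_{\Sigma_t}(x)=(2\pi)^{-p}\int\Bigl(-\tfrac12\langle y,Dy\rangle\Bigr)^k e^{-i\langle y,x\rangle-\frac12\langle y,\Sigma_ty\rangle}\,dy.
\]
Expanding the $k$-th power of the quadratic form yields $(-1/2)^k\sum_{\alpha\in\{1..p\}^{2k}}D_{\alpha_1\alpha_2}\cdots D_{\alpha_{2k-1}\alpha_{2k}}\,y_{\alpha_1}\cdots y_{\alpha_{2k}}$. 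Each monomial $y_{\alpha_1}\cdots y_{\alpha_{2k}}$ equals $(i\partial_{x})^{\alpha}$ acting on the Fourier integral, and $(-1/2)^k(i)^{2k}=1/2^k$. Hence
\[
\partial_t^k\phi_{\Sigma_t}=\frac1{2^k}\sum_\alpha D_{\alpha_1\alpha_2}\cdots D_{\alpha_{2k-1}\alpha_{2k}}\;\partial_{x_{\alpha_1}}\cdots\partial_{x_{\alpha_{2k}}}\phi_{\Sigma_t}(x).
\]

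\textbf{Change of variables.} Set $x=\sqrt{\Sigma_t}u$, so that $\phi_{\Sigma_t}(x)=\det(\Sigma_t)^{-1/2}\prod_s\phi_1(u_s)$ and $\nabla_x=\sqrt{\Sigma_t}^{-1}\nabla_u$. Substituting, the contraction of $D^{\otimes k}$ with $\nabla_x^{\otimes 2k}$ becomes the contraction of $\tilde Q_t^{\otimes k}$ with $\nabla_u^{\otimes 2k}$. This is the point where the hypothesis that $\sqrt{\Sigma_t}$ is symmetric is needed, to move it across each pair of indices. By \eqref{herm_def}, $\partial_{u_{\alpha_1}}\cdots\partial_{u_{\alpha_{2k}}}\prod_s\phi_1(u_s)=\mathbf H_J(u)\prod_s\phi_1(u_s)$, where $J$ counts multiplicities of $\alpha$. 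The sign $(-1)^{2k}=1$ causes no trouble.

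\textbf{Conclusion and main obstacle.} Grouping $\alpha$ according to $J$ reproduces exactly $\sum_{J\in S^{(2k)}_p}\sum_{\alpha\in\mathcal A_J}$, which proves the displayed formula. Specializing to $\Sigma_t=\Gamma_t^{\oplus n}$ gives the lemma, since $\tilde Q_t^{\oplus n}$ is the rescaled block derivative. The main obstacle, though minor, is bookkeeping. One must check carefully that the change of variables converts $D$ into $\tilde Q_t$ with the symmetric square root on both sides. One must also verify that the block-diagonal structure makes the off-block entries of $\tilde Q_t^{\oplus n}$ vanish, so that only multi-indices $\alpha$ compatible with the blocks contribute, consistent with the formula as stated.
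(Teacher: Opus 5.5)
Your proposal is correct and follows essentially the same route as the paper. Both arguments first establish $\partial_t^k\phi_{\Gamma_t^{\oplus n}}=\frac{1}{2^k}\langle((A-K)^{\oplus n})^{\otimes k},\nabla^{2k}\phi_{\Gamma_t^{\oplus n}}\rangle$, then rescale by $(\sqrt{\Gamma_t}^{-1})^{\oplus n}$ to turn $(A-K)^{\oplus n}$ into $\tilde Q_t^{\oplus n}$ and the derivatives of $\phi_{I_{nd}}$ into Hermite products, and finally group the indices $\alpha$ by their multiplicity vector $J$; the only difference is that you prove the first identity self-containedly via the Fourier representation, whereas the paper imports it from Remark 28 and Proposition 4 of \cite{CP25}.
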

Applying Lemma \ref{lemma_der_phi_K_in_0} to \eqref{eq_dopo_lemma_coeff_edg}, defining $\Gamma_t$ as in Lemma \ref{lemma_der_phi_K_in_0} and writing  $G\stackrel{Law}{=}(\sqrt{K})^{\oplus n}N$, where $N\sim\mathcal{N}_{n d}(0,I_{nd})$ is independent of $A$ and $I_{nd}$ denotes the identity matrix of dimension $nd\times nd$, we obtain that
\begin{multline}\label{prima_taylor}
 \Bigg|\mathbb{E}\left[\int_{\mathbb{R}^{n d}}h(x)1_E\prod_{i=1}^n\phi_A(\tilde{x}_{i})-\int_{\mathbb{R}^{n d}}h(x)d\gamma_{Z,G,m}(x)\right]\Bigg|\\
 \le\Bigg| \mathbb{E}\Bigg[\int_{\mathbb{R}^{n d}}h(x)1_E\Bigg(\prod_{i=1}^n\phi_A(\tilde{x}_{i})
-\prod_{i=1}^n\phi_K(\tilde{x}_{i})
-\sum_{k=1}^{2m-1}\frac{1}{k!}\frac{\partial^k}{\partial t^k}\phi_{\Gamma_t^{\oplus n}}(x)_{|_{t=0}}\Bigg)\Bigg]\Bigg|\\
+\Bigg|\mathbb{E}\Bigg[h\left((\sqrt{K})^{\oplus n}N\right)1_{E^C}\Bigg(1+\sum_{k=1}^{2m-1}\sum_{J\in S_{n d}^{(2k)}}\frac{1}{2^kk!}\sum_{\alpha\in \mathcal{A}_J}(Q^{\oplus n})_{\alpha_1,\alpha_1}\dots (Q^{\oplus n})_{\alpha_{2k-1},\alpha_{2k}}\cdot\\
\cdot \prod_{i=0}^{n-1}\left(H_{j_{di+1}}(N_{di+1})\dots H_{j_{di+d}}(N_{di+d})\right)\Bigg]\Bigg|.
\end{multline}

Doing a Taylor expansion of $t\mapsto \prod_{i=1}^n\phi_{\Gamma_t}(\tilde{x}_{i})=\phi_{\Gamma_t^{\oplus n}}(x)$ around $t=0$ in the event $E$, we have that for every $x\in\mathbb{R}^{n d}$
\begin{equation}\label{taylor_phi_t}
\prod_{i=1}^n\phi_A(\tilde{x}_{i})=\prod_{i=1}^n\phi_K(\tilde{x}_{i})+\sum_{k=1}^{2m-1}\frac{1}{k!}\frac{\partial ^k}{\partial t^k}\phi_{\Gamma_t^{\oplus n}}(x)_{|_{t=0}}+R_{2m-1}(x,A,K),
\end{equation}
where the remainder $R_{2m-1}(x,A,K)$ can be explicitly written (see e.g. \cite{Apostol67}) as
\begin{multline}\label{resto}
R_{2m-1}(x,A,K)=\frac{1}{(2m-1)!}\int_{0}^1(1-t)^{2m-1}\frac{\partial^{2m}}{\partial t^{2m}}\left(\phi_{\Gamma_t^{\oplus n}}(x)\right)dt\\
=\frac{1}{2^{2m}(2m-1)!}\int_0^1(1-t)^{2m-1}\sum_{J\in S_{nd}^{(4m)}}\sum_{\alpha\in \mathcal{A}_J}(\tilde{Q}_t^{\oplus n})_{\alpha_1,\alpha_2}\dots (\tilde{Q}_t^{\oplus n})_{\alpha_{4m-1},\alpha_{4m}}\cdot\\
\cdot\prod_{i=0}^{n-1}\left(H_{j_{di+1}}(u^{(i)}_1)\dots H_{j_{di+d}}(u^{(i)}_d)\right)_{|_{u^{(i)}=\sqrt{\Gamma_t}^{-1}\tilde{x}_{i+1}}}\phi_{\Gamma_t^{\oplus n}}(x)dt,
\end{multline}
using Lemma \ref{lemma_der_phi_K_in_0} in the last identity and calling $ \tilde{Q}_t:=\sqrt{\Gamma_t}^{-1}(A-K)\sqrt{\Gamma_t}^{-1}$.

From \eqref{divido_E},\eqref{prima_taylor} and \eqref{taylor_phi_t}, computing the supremum over $h\in\mathcal{M}_1$, it follows that
\begin{multline*}
d_{\text{TV}}\left(P_Z,\gamma_{Z,G,m}\right)\le \frac{1}{2}\sup_{\substack{h:\mathbb{R}^{nd}\to\mathbb{R}\\\|h\|_{\infty}\le 1}}\Bigg\{\left|\mathbb{E}\left[h(Z)1_{E^C}\right]\right|+\Bigg|\mathbb{E}\Bigg[\int_{\mathbb{R}^{n d}}h(x)1_ER_{2m-1}({x},A,K)dx\Bigg]\Bigg|\\
+\left|\mathbb{E}\left[h(G)1_{E^C}\right]\right|
+\Bigg|\sum_{k=1}^{2m-1}\sum_{J\in S_{n d}^{(2k)}}\frac{1}{2^kk!}\sum_{\alpha\in \mathcal{A}_J}\mathbb{E}\left[(Q^{\oplus n})_{\alpha_1,\alpha_2}\dots (Q^{\oplus n})_{\alpha_{2k-1},\alpha_{2k}}1_{E^C}\right]\cdot\\
\cdot \mathbb{E}\left[h\left((\sqrt{K})^{\oplus n}N\right)\prod_{i=0}^{n-1}H_{j_{di+1}}(N_{di+1})\dots H_{j_{di+d}}(N_{di+d})\right]\Bigg|\Bigg\}.
\end{multline*}

Using that $\mathbb{E}[H_p(N_i)^2]=p!$ for every $p\in\mathbb{N}$ and $i=1,\dots,d$ (see Proposition 1.4.2 in \cite{NP12}), Markov inequality and that $|h(x)|\le 1$ for every $x\in\mathbb{R}^{n d}$,
\begin{multline*}
d_{\text{TV}}\left(P_Z,\gamma_{Z,G,m}\right)
\le \frac{1}{2}\mathbb{E}\left[\int_{\mathbb{R}^{n d}}\left|R_{2m-1}(x,A,K)\right|dx1_{E}\right]
+\mathbb{P}\left(\|A-K\|_{\text{op}}>\frac{\lambda(K)}{2}\right)\\
+\sum_{k=1}^{2m-1}\sum_{J\in S_{n d}^{(2k)}}\frac{1}{2^{k+1}k!}\sum_{\alpha\in \mathcal{A}_J}\left|\mathbb{E}\left[(Q^{\oplus n})_{\alpha_1,\alpha_2}\dots (Q^{\oplus n})_{\alpha_{2k-1},\alpha_{2k}}1_{E^C}\right]\right|\cdot\\
\cdot\prod_{i=0}^{n-1}\mathbb{E}\left[|H_{j_{di+1}}(N_{di+1})|\right]\dots \mathbb{E}\left[|H_{j_{di+d}}(N_{di+d})|\right].
\end{multline*}
Recalling the notation $\Gamma_t:=tA+(1-t)K$, $\tilde{Q}_t:=\sqrt{\Gamma_t}^{-1}(A-K)\sqrt{\Gamma_t}^{-1}$, $Q:=\tilde{Q}_0$ and using the formula for the remainder \eqref{resto} together with the fact that $|O_{i,j}|\le\|O\|_{op}$ for every matrix $O\in\mathbb{R}^{n d\times n d}$ and for every $i,j\in \{1,\dots,n d\}$, we obtain that

\begin{multline}\label{uso_card_A_J}
d_{\text{TV}}\left(P_Z,\gamma_{Z,G,m}\right)\\
\le \frac{1}{2}\mathbb{E}\Bigg[1_{E}\int_{\mathbb{R}^{nd}}\Bigg|\frac{1}{2^{2m}(2m-1)!}\int_{0}^1(1-t)^{2m-1}\sum_{J\in S_{nd}^{(4m)}}\sum_{\alpha\in \mathcal{A}_J}(\tilde{Q}_t^{\oplus n})_{\alpha_1,\alpha_2}\dots (\tilde{Q}_t^{\oplus n})_{\alpha_{4m-1},\alpha_{4m}}\cdot\\
\cdot \prod_{i=0}^{n-1}\left(H_{j_{di+1}}(u^{(i)}_1)\dots H_{j_{di+d}}(u^{(i)}_d)\right)_{|_{u^{(i)}=\sqrt{\Gamma_t}^{-1}\tilde{x}_{i+1}}}\phi_{\Gamma_t^{\oplus n}}(x)dt\Bigg|dx\Bigg]\\
+\sum_{k=1}^{2m-1}\sum_{J\in S_{nd}^{(2k)}}\frac{(2k)!}{\sqrt{j_1!\dots j_{n d}!}}\frac{1}{2^{k+1}k!}\mathbb{E}\left[\|Q^{\oplus n}\|_{op}^k1_{E^C}\right]
+\frac{2^{2m}}{\lambda(K)^{2m}}\mathbb{E}\left[\|A-K\|_{\text{op}}^{2m}\right]
\end{multline}
\begin{multline}\label{uso_card_S_2k}
\le \frac{1}{{2^{2m+1}(2m-1)!}}\sum_{J\in S_{nd}^{(4m)}}\sum_{\alpha\in \mathcal{A}_J}\int_{0}^1\mathbb{E}\left[1_{E}|(\tilde{Q}_t^{\oplus n})_{\alpha_1,\alpha_2}|\dots |(\tilde{Q}_t^{\oplus n})_{\alpha_{4m-1},\alpha_{4m}}|\right]\cdot\\
\cdot \prod_{i=0}^{n-1}\mathbb{E}\left[\left| H_{j_{di+1}}(N_{di+1})\dots H_{j_{di+d}}(N_{di+d})\right|\right]dt
+\sum_{k=1}^{2m-1}\binom{2k+nd-1}{nd-1}\cdot\\
\cdot\frac{(2k)!2^{2m-k}}{2^{k+1}k!\lambda(K)^{2m-k}} \mathbb{E}\left[\|A-K\|_{\text{op}}^{4m-2k}\right]^{\frac{1}{2}}\mathbb{E}\left[\|\sqrt{K}^{-1}(A-K)\sqrt{K}^{-1}\|_{op}^{2m}\right]^{\frac{k}{2m}}\\
+\frac{2^{2m}}{\lambda(K)^{2m}}\mathbb{E}\left[\|A-K\|_{\text{op}}^{2m}\right],
\end{multline}
using respectively in inequalities \eqref{uso_card_A_J} and \eqref{uso_card_S_2k} that the cardinality of $\mathcal{A}_J$ is equal to $\frac{(2k)!}{j_1!\dots j_{nd}!}$ and that the cardinality of $S_{nd}^{(2k)}$ is given by $\binom{2k+nd-1}{nd-1}$, see e.g. \cite{Stanley11}.

Hence, using again the explicit expressions for the cardinality of $S_{nd}^{(4m)}$ and of $\mathcal{A}_J$ and bounding any element of a matrix with its norm, 
\begin{multline*}
d_{\text{TV}}\left(P_Z,\gamma_{Z,G,m}\right)\le 
 \frac{(4m)!}{{2^{2m+1}(2m-1)!}}\binom{4m+nd-1}{nd-1}\mathbb{E}\left[1_{E} \int_0^1\|\Gamma_t^{-1}\|_{\text{op}}^{2m}dt\|A-K\|_{\text{op}}^{2m}\right]\\
+\sum_{k=1}^{2m-1}\binom{2k+nd-1}{nd-1}\frac{(2k)!2^{2m}}{2^{2k+1}k!\lambda(K)^{2m}} \mathbb{E}\left[\|A-K\|_{\text{op}}^{4m}\right]^{\frac{1}{2}}
+\frac{2^{2m}}{\lambda(K)^{2m}}\mathbb{E}\left[\|A-K\|_{\text{op}}^{2m}\right]
\end{multline*}
\begin{multline}\label{expl_bound_edg_TV_gen}
\le \frac{(4m)!}{{2(2m-1)!\lambda(K)^{2m}}}\binom{4m+nd-1}{nd-1}\mathbb{E}\left[\|A-K\|_{\text{op}}^{2m}\right]\\
+\sum_{k=1}^{2m-1}\binom{2k+nd-1}{nd-1}\frac{(2k)!2^{2m}}{2^{2k+1}k!\lambda(K)^{2m}}\mathbb{E}\left[\|A-K\|_{\text{op}}^{4m}\right]^{\frac{1}{2}}
+\frac{2^{2m}}{\lambda(K)^{2m}}\mathbb{E}\left[\|A-K\|_{\text{op}}^{2m}\right],
\end{multline}
using in the last inequality that when $\|A-K\|_{op}\le \frac{\lambda(K)}{2}$ then $\|\Gamma_t^{-1}\|_{\text{op}}\le \frac{2}{\lambda(K)}$.
\subsection{Proof of the lower bound \eqref{lower_bound_egd_gen} on the Total Variation distance}
As in the proof of Proposition 5.4 in \cite{FHMNP}, consider the function $h:\mathbb{R}^{n d}\to\mathbb{R}$ defined as
\[
h(x)=\cos(x_{j}).
\]
for a fixed index $j\in\{1,\dots,d\}$.

Since $|h(x)|\le 1$ for every $x\in\mathbb{R}^{nd}$, recalling Definition \ref{dTV_def}, we have that
\begin{equation}\label{lower_d_tv}
d_{\text{TV}}\left(P_Z,\gamma_{Z,G,m}\right)\ge \left|\int_{\mathbb{R}^{n d}}\cos(x_{j})dP_Z(x)-\int_{\mathbb{R}^{nd}}\cos(x_{j})d\gamma_{Z,G,m}(x)\right|.
\end{equation}
Since
\[
\cos(x_{j})=\frac{e^{ix_{j}}+e^{-ix_{j}}}{2},
\]
we have that
\begin{equation}\label{exp_cos_gauss}
\int_{\mathbb{R}^{n d}}\cos(x_{j})dP_Z(x)=\frac{1}{2}\left(\mathbb{E}\left[e^{iZ_{j}}\right]+\mathbb{E}\left[e^{-iZ_{j}}\right]\right)=\mathbb{E}\left[e^{-\frac{A_{j,j}}{2}}\right],
\end{equation}
since $(Z_{1},\dots,Z_{d})$ is conditionally Gaussian with conditional covariance matrix $A$.

Instead, recalling the definition of $\gamma_{Z,G,m}$ in \eqref{edg_measure} 
we have that, denoting 

\noindent
$Q:=\sqrt{K}^{-1}\left(A-K\right)\sqrt{K}^{-1}$, $\tilde{x}_i:=\left(x_{d(i-1)+1},\dots, x_{d(i-1)+d}\right)$ for every $i=1,\dots,n$ and $S_{nd}^{(2k)}$ and $\mathcal{A}_J$ defined as in \eqref{def_S_k} and \eqref{def_A_j} respectively,
\begin{multline*}
\int_{\mathbb{R}^{n d}}\cos(x_{j})d\gamma_{Z,G.m}(x)\\
=\int_{\mathbb{R}^{n d}}\cos(x_{j})\Bigg(
\prod_{i=1}^{n}\phi_K(\tilde{x}_i)
+\sum_{k=2}^{2m-1}\frac{1}{k!\,2^k}
\sum_{J\in S_{nd}^{(2k)}}\sum_{\alpha\in \mathcal{A}_J}
\mathbb{E}\!\left[(Q^{\oplus n})_{\alpha_1,\alpha_2}\cdots
(Q^{\oplus n})_{\alpha_{2k-1},\alpha_{2k}}\right]\cdot\\
\cdot \prod_{i=0}^{n-1}
\left(
H_{j_{di+1}}\!\left(u^{(i)}_1\right)\cdots
H_{j_{di+d}}\!\left(u^{(i)}_d\right)
\right)\Big|_{\,u^{(i)}=\sqrt{K}^{-1}\tilde{x}_{i+1}}\,
\phi_K\!\left(\tilde{x}_{i+1}\right)
\Bigg)dx,
\end{multline*}
since by the assumption that $\mathbb{E}[A]=K$, one has that
\[
\sum_{J\in S_{nd}^{(2)}}\sum_{\alpha\in \mathcal{A}_J}
\mathbb{E}\!\left[(Q^{\oplus n})_{\alpha_1,\alpha_2}\right]=0.
\]

Using Lemma \ref{lemma_der_phi_K_in_0}, we have that
\[ 
\int_{\mathbb{R}^{n d}}\cos(x_{j})d\gamma_{Z,G.m}(x)=\int_{\mathbb{R}^d}\cos(x_{j})\phi_K(\tilde{x}_{1})d\tilde{x}_{1}+\sum_{k=2}^{2m-1}\frac{1}{k!}
\mathbb{E}\left[\int_{\mathbb{R}^{nd}}\cos(x_{j})\frac{\partial^k}{\partial t^k}\phi_{\Gamma_t^{\oplus n}}(x)_{|_{t=0}}dx\right].
\]
 Applying \eqref{der_t_der_x_follow} and arguing as in \eqref{exp_cos_gauss} for the first term, we obtain

\[
\int_{\mathbb{R}^{n d}}\cos(x_{j})d\gamma_{Z,G.m}(x)=e^{-\frac{K_{j,j}}{2}}+\sum_{k=2}^{2m-1}\frac{1}{k!2^k}\mathbb{E}\left[\int_{\mathbb{R}^{nd}}\cos(x_{j})\langle (A^{\oplus n}-K^{\oplus n})^{\otimes k},\nabla^{2k}\phi_{K^{\oplus n}}(x)\rangle dx\right]
\]
\begin{multline*}
=e^{-\frac{K_{j,j}}{2}}+\sum_{k=2}^{2m-1}\frac{1}{k!2^k}\sum_{J\in S_{nd}^{(2k)}}\sum_{\alpha\in\mathcal{A}_J}\int_{\mathbb{R}^{nd}}\cos(x_{j})\mathbb{E}\left[(A^{\oplus n}-K^{\oplus n})_{\alpha_1,\alpha_2}\dots(A^{\oplus n}-K^{\oplus n})_{\alpha_{2k-1},\alpha_{2k}}\right]\cdot\\
\cdot \frac{\partial^{2k}}{\partial x_1^{j_1}\dots\partial x_{nd}^{j_{nd}}}\phi_{K^{\oplus n}}(x)\rangle dx
\end{multline*}
\[
=e^{-\frac{K_{j,j}}{2}}+\sum_{k=2}^{2m-1}\frac{1}{2^kk!}\mathbb{E}\left[(A_{j,j}-K_{j,j})^k\right] \int_{\mathbb{R}^{ d}}\cos(x_{j})\frac{\partial^{2k}}{\partial x_{j}^{2k}}\phi_K(\tilde{x}_{1})d\tilde{x}_{1}
\]
\begin{equation}\label{exp_cos_gamma}
=e^{-\frac{K_{j,j}}{2}}+\sum_{k=2}^{2m-1}\frac{(-1)^k}{2^kk!}\mathbb{E}\left[(A_{j,j}-K_{j,j})^k\right]e^{-\frac{K_{j,j}}{2}},
\end{equation}
doing an integration by parts to obtain the last equality.

Observe now that performing a Taylor expansion of the function $x\in\mathbb{R}\mapsto e^{-\frac{x}{2}}$ around $x_0=K_{j,j}$ and evaluating it in $x=A_{j,j}$, one has that
\begin{equation}\label{taylor_exp_char_gauss_C_1}
e^{-\frac{A_{j,j}}{2}}=e^{-\frac{K_{j,j}}{2}}+\sum_{k=1}^{2m}\frac{(-1)^k\left(A_{j,j}-K_{j,j}\right)^k}{2^kk!}e^{-\frac{K_{j,j}}{2}}+\tilde{R}_{2m}(A_{j,j}),
\end{equation}
where for any $x\in\mathbb{R}$, as in \cite{Apostol67},
\[
\tilde{R}_{2m}(x)=-\frac{1}{(2m)!}\int_0^1(1-t)^{2m}\frac{\left(x-K_{j,j}\right)^{2m+1}}{2^{2m+1}}e^{-\frac{tx+(1-t)K_{j,j}}{2}}dt
\]
is the remainder term.
Substituting \eqref{exp_cos_gauss} and \eqref{exp_cos_gamma} in \eqref{lower_d_tv}, recalling that $\mathbb{E}[A]=K$, we obtain that
\begin{multline*}
d_{\text{TV}}\left(P_Z,\gamma_{Z,G,m}\right)\ge \left|e^{-\frac{A_{j,j}}{2}}-e^{-\frac{K_{j,j}}{2}}-\sum_{k=2}^{2m-1}\frac{(-1)^k}{2^kk!}\mathbb{E}\left[\left(A_{j,j}-K_{j,j}\right)^k\right]e^{-\frac{K_{j,j}}{2}}\right|\\
=\left|\frac{\mathbb{E}\left[\left(A_{j,j}-K_{j,j}\right)^{2m}\right]e^{-\frac{K_{j,j}}{2}}}{2^{2m}(2m)!}+\mathbb{E}\left[\tilde{R}_{2m}(A_{j,j})\right]\right|,
\end{multline*}
using equality \eqref{taylor_exp_char_gauss_C_1} in the last identity.

Hence
\begin{multline}\label{low_bound_d_tv}
d_{\text{TV}}\left(P_Z,\gamma_{Z,G,m}\right)\\
\ge\left|\frac{\mathbb{E}\left[\left(A_{j,j}-K_{j,j}\right)^{2m}\right]e^{-\frac{K_{j,j}}{2}}}{2^{2m}(2m)!}-\mathbb{E}\left[\frac{1}{(2m)!}\int_0^1(1-t)^{2m}\frac{\left(A_{j,j}-K_{j,j}\right)^{2m+1}}{2^{2m+1}}e^{-\frac{tA_{j,j}+(1-t)K_{j,j}}{2}}dt\right]\right|\\
\ge \frac{\mathbb{E}\left[\left(A_{j,j}-K_{j,j}\right)^{2m}\right]e^{-\frac{K_{j,j}}{2}}}{2^{2m}(2m)!}-\frac{1}{(2m)!}\left|\mathbb{E}\left[\int_0^1(1-t)^{2m}\frac{\left(A_{j,j}-K_{j,j}\right)^{2m+1}}{2^{2m+1}}e^{-\frac{tA_{j,j}+(1-t)K_{j,j}}{2}}dt\right]\right|\\
\ge \frac{\mathbb{E}\left[\left(A_{j,j}-K_{j,j}\right)^{2m}\right]e^{-\frac{K_{j,j}}{2}}}{2^{2m}(2m)!}-\frac{\mathbb{E}\left[\left|A_{j,j}-K_{j,j}\right|^{2m+1}\right]}{2^{2m+1}(2m)!}.
\end{multline}

\section{Proof of Theorem \ref{edg_nn}}\label{proof_th_edg_TV}
\subsection{Upper bound on the Total Variation distance}

The proof of the upper bound \eqref{bound_dTV_nn_no_exp} is a direct application of the general bounds \eqref{bound_dTV_gen_no_exp} together with the following Remark.

\begin{remark}\label{upp_bound_var_nn}
    Thanks to Theorem 10 in \cite{CP25}, which is a direct consequence of Theorem 7.3, Proposition 7.4 and Lemma 7.5 in \cite{Han_Gas}, one has that for every $i,j\in\{1,\dots, d\}$ and $p\in\mathbb{N}$,
    \[
    \mathbb{E}\left[\left(A^{(L+1)}_{i,j}-\mathbb{E}\left[A^{(L+1)}_{i,j}\right]\right)^{2p}\right]\le \frac{D_2(p)}{n^p},
    \]
    where $D_2(p)>0$ is a constant independent of $n,n_1,\dots, n_L$.

    Moreover, as in Proposition 2 from \cite{CP25} (an application of Proposition 10.3 from  \cite{Han_Gas}), one has that  for every $i,j\in\{1,\dots, d\}$
    \begin{equation}\label{upp_bound_diff_exp}
\Big|\mathbb{E}[A_{i,j}^{(L+1)}]-K_{i,j}^{(L+1)}\Big|\le \frac{D_3}{n},
\end{equation}
 where $D_3>0$ is a constant independent of $n$.
\end{remark}
\subsection{Lower bound on the Total Variation distance}
The proof of the lower bound \eqref{lower_bound_egd_nn} uses the following Lemma, which is proved in Subsection \ref{sec_proof_lemma_low_bound_mom_nn}.
\begin{lemma}\label{low_bound_var_nn}
    In the case of the neural network evaluated in one input $x^{(j)}\in\mathbb{R}^{n_0}\setminus \{0\}$ with $n_1\asymp n_2\asymp\dots\asymp n_L\asymp n$, one has that for every $p\in\mathbb{N}$
    \[
    \mathbb{E}\left[\left(A^{(L+1)}_{j,j}-\mathbb{E}\left[A^{(L+1)}_{j,j}\right]\right)^{2p}\right]\ge\frac{D_4(p)}{n^{{p}}},
    \]
    where $D_4(p)>0$ is a constant independent of $n,n_1,\dots,n_L$.
    \end{lemma}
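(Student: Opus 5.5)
The plan is to reduce the $2p$-th central moment to its variance and then bound the variance from below by exploiting the structure of the last hidden layer. By Jensen's (or Lyapunov's) inequality,
\[
\mathbb{E}\Big[\big(A_{j,j}^{(L+1)}-\mathbb{E}[A_{j,j}^{(L+1)}]\big)^{2p}\Big]\ge \operatorname{Var}\big(A_{j,j}^{(L+1)}\big)^{p},
\]
so it suffices to prove $\operatorname{Var}(A^{(L+1)}_{j,j})\ge c/n$ for some $c>0$ independent of $n$.

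Write $A^{(L+1)}_{j,j}=C_b+\frac{C_W}{n_L}\sum_{k=1}^{n_L}\sigma(z_k^{(L)}(x^{(j)}))^2$ and condition on $\mathcal{F}_{L-1}$. By Lemma \ref{gaus_stut_NN}, conditionally on $\mathcal{F}_{L-1}$ the variables $z_k^{(L)}(x^{(j)})$, $k=1,\dots,n_L$, are i.i.d.\ $\mathcal{N}(0,A^{(L)}_{j,j})$. The law of total variance then gives
\[
\operatorname{Var}(A^{(L+1)}_{j,j})\ge \mathbb{E}\big[\operatorname{Var}(A^{(L+1)}_{j,j}\mid\mathcal{F}_{L-1})\big]=\frac{C_W^2}{n_L}\,\mathbb{E}\big[v(A^{(L)}_{j,j})\big],
\]
where $v(s):=\operatorname{Var}(\sigma(\sqrt{s}N)^2)$ with $N\sim\mathcal{N}(0,1)$. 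Since $n_L\le Cn$, it remains to show that $\mathbb{E}[v(A^{(L)}_{j,j})]$ is bounded below by a positive constant independent of $n$.

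\textbf{This is the main obstacle.} I would argue in three steps.
\begin{enumerate}
\item Show $v(s)>0$ for every $s>0$. If $\sigma(\sqrt{s}N)^2$ were a.s.\ constant, then $|\sigma|$ would be a.e.\ constant on $\mathbb{R}$, hence constant by continuity. Continuity forces $\sigma$ itself to be constant (a continuous function with constant absolute value cannot switch sign without passing through zero), contradicting Assumption \ref{hip_s}.
\item Show that $v$ is continuous on $(0,\infty)$. This follows from the polynomial growth of $\sigma$ (a consequence of Assumption \ref{hip_s}) together with dominated convergence.
\item Show that $A^{(L)}_{j,j}$ concentrates around $K^{(L)}_{j,j}>0$, which holds by the invertibility of $K^{(L)}$. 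By Remark \ref{upp_bound_var_nn} applied at layer $L$ (or at layer $1$, where $A^{(1)}$ is deterministic, in the case $L=1$), $A^{(L)}_{j,j}\to K^{(L)}_{j,j}$ in $L^2$. Hence $\mathbb{P}\big(|A^{(L)}_{j,j}-K^{(L)}_{j,j}|\le K^{(L)}_{j,j}/2\big)\ge 1/2$ for $n$ large, and on this event $v(A^{(L)}_{j,j})\ge \min_{s\in[K^{(L)}_{j,j}/2,\,3K^{(L)}_{j,j}/2]}v(s)>0$.
\end{enumerate}
Combining these, $\mathbb{E}[v(A^{(L)}_{j,j})]\ge c_0>0$ for all sufficiently large $n$. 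The finitely many small values of $n$ are covered by the same positivity argument, since $A^{(L)}_{j,j}>0$ with positive probability and $v>0$ there; adjusting the constant absorbs them.

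This yields $\operatorname{Var}(A^{(L+1)}_{j,j})\ge C_W^2c_0/(Cn)$. Raising to the power $p$ gives the claim with $D_4(p)=(C_W^2c_0/C)^p$.
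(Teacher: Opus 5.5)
Your proof is correct. After the shared first step (Jensen, reducing everything to $\operatorname{Var}(A^{(L+1)}_{j,j})^p$), it takes a different route from the paper.

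\textbf{The paper's route.} The paper bounds the variance of the empirical average $\frac{C_W}{n_L}\sum_{i}\sigma^2(z_i^{(L)}(x^{(j)}))$ by $1/n_L$ times the \emph{unconditional} variance of a single summand. It then imports Proposition 10.3 of \cite{Han_Gas} to get $\operatorname{Var}(\sigma^2(z_1^{(L)}(x^{(j)})))\to\operatorname{Var}(\sigma^2(G_1^{(L)}(x^{(j)})))>0$. It justifies the first step by calling the summands i.i.d., but they are i.i.d.\ only conditionally on $\mathcal{F}_{L-1}$. What is written there as an equality is really an inequality. It holds because the cross-covariances equal $\operatorname{Var}(\mathbb{E}[\sigma^2(z_1^{(L)}(x^{(j)}))\mid\mathcal{F}_{L-1}])\ge 0$.

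\textbf{Your route.} The law-of-total-variance decomposition avoids this issue. The identity $\mathbb{E}[\operatorname{Var}(A^{(L+1)}_{j,j}\mid\mathcal{F}_{L-1})]=\frac{C_W^2}{n_L}\mathbb{E}[v(A^{(L)}_{j,j})]$ is an exact consequence of conditional independence. You then replace the external convergence result with two ingredients:
\begin{itemize}
\item elementary properties of $v$, namely positivity and continuity on $(0,\infty)$;
\item the $L^2$-concentration of $A^{(L)}_{j,j}$ around $K^{(L)}_{j,j}$. This follows from Remark \ref{upp_bound_var_nn} and \eqref{upp_bound_diff_exp} applied one layer down, i.e.\ from inputs the paper already uses.
\end{itemize}
Your version is slightly longer but more self-contained. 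It also genuinely handles small $n$, which the paper only asserts by restating the inequality.

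\textbf{Details to make explicit in the small-$n$ step.}
\begin{itemize}
\item \eqref{bounds_n} leaves only finitely many admissible width vectors $(n_1,\dots,n_L)$ for each $n<N$, so taking a minimum is legitimate.
\item $\mathbb{P}(A^{(L)}_{j,j}>0)>0$, which you need because $v(0)=0$. This is immediate if $C_b>0$. If $C_b=0$, it follows by induction over layers, using $x^{(j)}\neq 0$, $C_W>0$, and that $\{\sigma\neq 0\}$ is a nonempty open set.
\end{itemize}
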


    Applying Lemma \ref{low_bound_var_nn} and Remark \ref{upp_bound_var_nn} to the lower bound \eqref{lower_bound_egd_gen}, one has that for every $i=1,\dots,d$
    \begin{multline*}
d_{\text{TV}}\left(P_Z,\gamma_{Z,G,m}\right)
\ge\frac{1}{2^{2m}(2m)!}\Bigg({\mathbb{E}\left[\left({A^{(L+1)}_{j,j}}-{\mathbb{E}\left[A^{(L+1)}_{j,j}\right]}\right)^{2m}\right]e^{-\frac{{K^{(L+1)}_{j,j}}}{2}}}\\
-\frac{1}{2}\mathbb{E}\left[\left|{A^{(L+1)}_{j,j}}-{\mathbb{E}\left[A^{(L+1)}_{j,j}\right]}\right|^{2m+1}\right]\Bigg)\\
\ge \frac{1}{2^{2m}(2m)!}\left(\frac{D_4(m)e^{-\frac{{K^{(L+1)}_{j,j}}}{2}}}{n^m}-\frac{1}{2}\frac{\left(D_2\left(m+1\right)\right)^{\frac{2m+1}{2m+2}}}{n^{m+\frac{1}{2}}}\right).
    \end{multline*}
   
    Hence
    \begin{equation}\label{d_tv_low_n}
    d_{\text{TV}}\left(P_Z,\gamma_{Z,G,m}\right)\ge \frac{1}{n^m2^{2m}(2m)!}\left({D_4(m)e^{-\frac{K_{j,j}}{2}}}-\frac{1}{2}\frac{\left(D_2\left(m+1\right)\right)^{\frac{2m+1}{2m+2}}}{n^{\frac{1}{2}}}\right)
    \ge \frac{C}{n^m},
    \end{equation}
    where $C>0$ is a constant independent of $n$ and the last inequality holds for $n>N\in\mathbb{N}$ such that
    \[
    {D_4(m)e^{-\frac{K_{j,j}}{2}}}-\frac{1}{2}\frac{\left(D_2\left(m+1\right)\right)^{\frac{2m+1}{2m+2}}}{N^{\frac{1}{2}}}>0.
    \]
{
If instead $n<N$, one has that
\[
  d_{\text{TV}}\left(P_{Z_n},\gamma_{Z_n,G,m}\right)\ge \min_{k<N} d_{\text{TV}}\left(P_{Z_k},\gamma_{Z_k,G,m}\right)>\frac{\min_{k<N} d_{\text{TV}}\left(P_{Z_k},\gamma_{Z_k,G,m}\right)}{n},
\]
where we denoted the neural network with $Z_n$ to stress the order of inner width $n\in\mathbb{N}$. Moreover, for $L\ge 1$ and $\sigma$ not constant, 
\[
\min_{k<N} d_{\text{TV}}\left(P_{Z_k},\gamma_{Z_k,G,m}\right)>0,
\]
thus concluding the proof.
}

\section{Proof of technical Lemmas}\label{add_lemma}

\subsection{Proof of Lemma \ref{prop_herm_with_expectation}}\label{proof_norm_const}
\subsubsection{Proof of identity \eqref{eq_1_lemma_int_her}}
We are going to use the following explicit expression (proved in \cite{AS65}) for the Hermite polynomials defined in \eqref{herm_def}: for every $n\in\mathbb{N}$ and $x\in\mathbb{R}$
\[
H_n(x)=n!\sum_{m=0}^{\lfloor \frac{n}{2}\rfloor}\frac{(-1)^mx^{n-2m}}{m!(n-2m)!2^m}.
\]
Hence
\begin{multline*}
\mathbb{E}\left[H_{j_1}\left(\left(\sqrt{K}^{-1}\tilde{G}\right)_1\right)\dots H_{j_d}\left(\left(\sqrt{K}^{-1}\tilde{G}\right)_d\right)\right]\\
=j_1!\dots j_d!\sum_{m_1=0}^{\lfloor \frac{j_1}{2}\rfloor}\dots \sum_{m_d=0}^{\lfloor \frac{j_d}{2}\rfloor}\frac{(-1)^{m_1+\dots+m_d}}{m_1!\dots m_d!(j_1-2m_1)!\dots (j_d-2m_d)!2^{m_1+\dots+m_d}}\cdot\\
\cdot \mathbb{E}\left[\left((\sqrt{K}^{-1}\tilde{G})_1\right)^{j_1-2m_1}\dots \left((\sqrt{K}^{-1}\tilde{G})_d\right)^{j_d-2m_d}\right].
\end{multline*}
Writing $\sqrt{K}^{-1}\tilde{G}\stackrel{\text{law}}{=}\tilde{Z}+\sqrt{K}^{-1}\mu$, with $\tilde{Z}\sim\mathcal{N}_d\left(0,\sqrt{K}^{-1}\Sigma\sqrt{K}^{-1}\right)$, and using the binomial formula,
\begin{multline*}
    \mathbb{E}\left[H_{j_1}\left(\left(\sqrt{K}^{-1}\tilde{G}\right)_1\right)\dots H_{j_d}\left(\left(\sqrt{K}^{-1}\tilde{G}\right)_d\right)\right]\\
=j_1!\dots j_d!\sum_{m_1=0}^{\lfloor \frac{j_1}{2}\rfloor}\dots \sum_{m_d=0}^{\lfloor \frac{j_d}{2}\rfloor}\frac{(-1)^{m_1+\dots+m_d}}{m_1!\dots m_d!(j_1-2m_1)!\dots (j_d-2m_d)!2^{m_1+\dots+m_d}}\cdot\\
\cdot\sum_{r_1=0}^{j_1-2m_1}\dots\sum_{r_d=0}^{j_d-2m_d}\binom{j_1-2m_1}{r_1} \dots\binom{j_d-2m_d}{r_d}\left((\sqrt{K}^{-1}\mu)_1\right)^{r_1}\dots \left((\sqrt{K}^{-1}\mu)_d\right)^{r_d}\cdot\\
\cdot\mathbb{E}\left[(\tilde{Z}_1)^{j_1-2m_1-r_1}\dots (\tilde{Z}_d)^{j_d-2m_d-r_d}\right]
\end{multline*}
\begin{multline*}
    =\sum_{r_1=j_1-2\lfloor\frac{j_1}{2}\rfloor}^{j_1}\dots \sum_{r_d=j_d-2\lfloor\frac{j_d}{2}\rfloor}^{j_d}\binom{j_1}{r_1}\dots\binom{j_d}{r_d}\left((\sqrt{K}^{-1}\mu)_1\right)^{r_1}\dots \left((\sqrt{K}^{-1}\mu)_d\right)^{r_d} \cdot\\
\cdot\mathbb{E}\left[H_{j_1-r_1}(\tilde{Z}_1)\dots H_{j_d-r_d}(\tilde{Z}_d)\right],
\end{multline*}
which prove the first identity \eqref{eq_1_lemma_int_her}.

\subsubsection{Proof of identity \eqref{eq_2_lemma_int_her}}

We prove identity \eqref{eq_2_lemma_int_her} assuming $v=2k$ for $k\in\mathbb{N}$ (the proof of the case $v=2k+1$ follows analogously). The strategy is to do an induction over $k\in\mathbb{N}$ and to use that for every $n\in\mathbb{N}$ and $x\in\mathbb{R}$ it holds that
\begin{equation}\label{der_her}
H_{n+1}^{'}(x)=(n+1)H_n(x)
\end{equation}
and that
\begin{equation}\label{her_rico}
H_{n+1}(x)=xH_n(x)-H^{'}_n(x).
\end{equation}
When $k=1$, $s=(s_1,s_2,\dots,s_d)$ with $s_1+s_2+\dots+s_d=2$, which means that the $(s_i)_i$ are all zero apart for at most two elements. We assume without loss of generality that $s_3=s_4=\dots=s_d=0$. Recalling that $H_1(x)=x$ and $H_0(x)=1$, when $s_1=2$ (analogously when $s_2=2$) one has that
\[ 
\frac{1}{2}\mathbb{E}\left[H_2\left(\tilde{Z}_1\right)\right]=\frac{1}{2}\mathbb{E}\left[\tilde{Z}_1 H_1\left(\tilde{Z}_1\right)-1\right]
=\frac{1}{2}\left(\left(\sqrt{K}^{-1}\Sigma\sqrt{K}^{-1}\right)_{1,1}-1\right)=\frac{1}{2}\sum_{\alpha\in \mathcal{A}_{(2,0,\dots,0)}}R_{\alpha_1,\alpha_2}.
\]
For $s_1=s_2=1$ instead
\[
\mathbb{E}\left[H_1\left(\tilde{Z}_1\right)H_1\left(\tilde{Z}_2\right)\right]=\left(\sqrt{K}^{-1}\Sigma\sqrt{K}^{-1}\right)_{1,2}=\frac{1}{2}\sum_{\alpha\in \mathcal{A}_{(1,1)}}R_{\alpha_1,\alpha_2},
\]
concluding the proof of \eqref{eq_2_lemma_int_her} for $k=1$ (equivalently for $v=2)$.

We assume now that equation  \eqref{eq_2_lemma_int_her} holds for $v=2(k-1)$ and we prove it for $v=2k$ with $k>1$. Assuming that $s_1\ge 1$ (otherwise the proof follow analogously since at least one element in $s$ is not zero) and using again identities \eqref{der_her} and \eqref{her_rico} we obtain that
\begin{multline}\label{aft_prop_herm_der_iter}
\mathbb{E}\left[H_{s_1}\left(\tilde{Z}_1\right)\dots H_{s_d}\left(\tilde{Z}_d\right)\right]
=\mathbb{E}\left[\tilde{Z}_1H_{s_1-1}\left(\tilde{Z}_1\right)H_{s_2}\left(\tilde{Z}_2\right)\dots H_{s_d}\left(\tilde{Z}_d\right)\right]\\
-(s_1-1)\mathbb{E}\left[H_{s_1-2}\left(\tilde{Z}_1\right)H_{s_2}\left(\tilde{Z}_2\right)\dots H_{s_d}\left(\tilde{Z}_d\right)\right].
\end{multline}
Writing $\tilde{Z}\stackrel{\text{law}}{=}\sqrt{K^{-1}\Sigma}N$, with $N\sim\mathcal{N}_d(0,I_d)$, recalling that $\Sigma$ and $K$ are symmetric matrices, and using the Gaussian integration by parts (Lemma 3.1.2 in \cite{NP12}) it follows that
\begin{multline*}
\mathbb{E}\left[\tilde{Z}_1H_{s_1-1}\left(\tilde{Z}_1\right)H_{s_2}\left(\tilde{Z}_2\right)\dots H_{s_d}\left(\tilde{Z}_d\right)\right]\\
=\sum_{r=1}^d\left(\sqrt{K^{-1}\Sigma}\right)_{1,r}\mathbb{E}\left[N_rH_{s_1-1}\left((\sqrt{K^{-1}\Sigma}N)_1\right)H_{s_2}\left((\sqrt{K^{-1}\Sigma}N)_2\right)\dots H_{s_d}\left((\sqrt{K^{-1}\Sigma}N)_d\right)\right]\\
=\sum_{r=1}^d\left(\sqrt{K^{-1}\Sigma}\right)_{1,r}\left(\sqrt{K^{-1}\Sigma}\right)_{1,r}\cdot\\
\cdot\mathbb{E}\left[H^{'}_{s_1-1}\left((\sqrt{K^{-1}\Sigma}N)_1\right)H_{s_2}\left((\sqrt{K^{-1}\Sigma}N)_2\right)\dots H_{s_d}\left((\sqrt{K^{-1}\Sigma}N)_d\right)\right]\\
+\sum_{i=2}^d\sum_{r=1}^d\left(\sqrt{K^{-1}\Sigma}\right)_{1,r}\left(\sqrt{K^{-1}\Sigma}\right)_{i,r}\cdot\\
\cdot \mathbb{E}\left[H_{s_1-1}\left((\sqrt{K^{-1}\Sigma}N)_1\right)H^{'}_{s_i}\left((\sqrt{K^{-1}\Sigma}N)_i\right)\prod_{u=2,u\ne i}^dH_{s_u}\left((\sqrt{K^{-1}\Sigma}N)_u\right)\right]
\end{multline*}
\begin{multline*}
=(s_1-1)\left({\sqrt{K^{-1}}\Sigma\sqrt{K^{-1}}}\right)_{1,1}\cdot\\
\cdot\mathbb{E}\left[H_{s_1-2}\left((\sqrt{K^{-1}\Sigma}N)_1\right)H_{s_2}\left((\sqrt{K^{-1}\Sigma}N)_2\right)\dots H_{s_d}\left((\sqrt{K^{-1}\Sigma}N)_d\right)\right]\\
+\sum_{i=2}^ds_i\left(\sqrt{K^{-1}}{\Sigma}\sqrt{K^{-1}}\right)_{1,i}\cdot\\
\cdot \mathbb{E}\left[H_{s_1-1}\left((\sqrt{K^{-1}\Sigma}N)_1\right)H_{s_i-1}\left((\sqrt{K^{-1}\Sigma}N)_i\right)\prod_{u=2,u\ne i}^dH_{s_u}\left((\sqrt{K^{-1}\Sigma}N)_u\right)\right].
\end{multline*}
Using now the inductive assumption for $v=2(k-1)$, we obtain that
\begin{multline*}
    \mathbb{E}\left[\tilde{Z}_1H_{s_1-1}\left(\tilde{Z}_1\right)H_{s_2}\left(\tilde{Z}_2\right)\dots H_{s_d}\left(\tilde{Z}_d\right)\right]\\
    =\left(\sqrt{K}^{-1}{\Sigma}\sqrt{K}^{-1}\right)_{1,1}\frac{(s_1-1)!\prod_{u=2}^ds_u!}{2^{k-1}(k-1)!}\sum_{\alpha\in \mathcal{A}_{(s_1-2,s_2,\dots,s_d)}}R_{\alpha_1,\alpha_2}\dots R_{\alpha_{2k-3},\alpha_{2k-2}}\\
+\sum_{i=2}^d\left(\sqrt{K}^{-1}{\Sigma}\sqrt{K}^{-1}\right)_{1,i}\frac{(s_1-1)!\prod_{u=2}^ds_u!}{2^{k-1}(k-1)!}\sum_{\beta\in \mathcal{A}_{(s_1-1,s_2\dots s_i-1,\dots,s_d)}}R_{\beta_1,\beta_2}\dots R_{\beta_{2k-3},\beta_{2k-2}}.
\end{multline*}
Recalling that $R:=\sqrt{K}^{-1}\left(\Sigma-K\right)\sqrt{K}^{-1}$ and using identity \eqref{aft_prop_herm_der_iter},
\begin{multline}\label{after_induct_hyp}
\mathbb{E}\left[H_{s_1}\left(\tilde{Z}_1\right)\dots H_{s_d}\left(\tilde{Z}_d\right)\right]
=\frac{(s_1-1)!\prod_{u=2}^ds_u!}{2^{k-1}(k-1)!}\sum_{\alpha\in \mathcal{A}_{(s_1-2,s_2,\dots,s_d)}}R_{1,1}R_{\alpha_1,\alpha_2}\dots R_{\alpha_{2k-3},\alpha_{2k-2}}\\
+\frac{(s_1-1)!\prod_{u=2}^ds_u!}{2^{k-1}(k-1)!}\sum_{i=2}^d\sum_{\beta\in \mathcal{A}_{(s_1-1,s_2\dots s_i-1,\dots,s_d)}}{R}_{1,i}R_{\beta_1,\beta_2}\dots R_{\beta_{2k-3},\beta_{2k-2}}.
\end{multline}
Observe now that
\[
R_{1,1}R_{\alpha_1,\alpha_2}\dots R_{\alpha_{2k-3},\alpha_{2k-2}}\in \mathcal{A}_{(s_1,\dots,s_d)}\quad\text{for $\alpha\in \mathcal{A}_{(s_1-2,s_2,\dots,s_d)}$}
\]
and
\[
{R}_{1,i}R_{\beta_1,\beta_2}\dots R_{\beta_{2k-3},\beta_{2k-2}}\in \mathcal{A}_{(s_1,\dots,s_d)}\quad\text{for $\beta\in \mathcal{A}_{(s_1-1,s_2\dots s_i-1,\dots,s_d)}$}.
\]
Fix now the sequence
\[
\tilde{\alpha}:=\bigl(\underbrace{1,\dots,1}_{s_1},\dots,\underbrace{d,\dots,d}_{s_d}\bigr),
\]
then every element in $\mathcal{A}_{(s_1,\dots,s_d)}$ can be seen as a permutation of $\tilde{\alpha}$.

Denoting with ${\Pi}_{2k}$ the set of permutations of $2k$ elements, from \eqref{after_induct_hyp} it follows that
\begin{multline*}
\mathbb{E}\left[H_{s_1}\left(\tilde{Z}_1\right)\dots H_{s_d}\left(\tilde{Z}_d\right)\right]\\
=\frac{1}{2^{k-1}(k-1)!}\sum_{\sigma\in \Pi_{2k}:\sigma(1)=1}R_{\tilde{\alpha}_{\sigma(1)},\tilde{\alpha}_{\sigma(2)}}\dots R_{\tilde{\alpha}_{\sigma(2k-1)},\tilde{\alpha}_{\sigma(2k)}}\\
=\frac{1}{2^{k}k!}\sum_{\sigma\in \Pi_{2k}}R_{\tilde{\alpha}_{\sigma(1)},\tilde{\alpha}_{\sigma(2)}}\dots R_{\tilde{\alpha}_{\sigma(2k-1)},\tilde{\alpha}_{\sigma(2k)}}=\frac{\prod_{u=1}s_u!}{2^{k}k!}\sum_{\alpha\in \mathcal{A}_s}R_{{\alpha}_{1},{\alpha}_{2}}\dots R_{{\alpha}_{2k-1},{\alpha}_{2k}}
\end{multline*}
where the second equality follows from the invariance of the summand under permutations of the $2k$ indices.

\subsection{Proof of Lemma \ref{lemma_der_phi_K_in_0}}\label{sec_proof_lemma_der_phi_K_in_0}
Observe first that 
\[
(\Gamma_t)^{\oplus n}=tA^{\oplus n}+(1-t)K^{\oplus n}.
\]
Then, analogously as shown in Remark 28 from \cite{CP25}, one has that 
\begin{equation}\label{der_t_der_x}
\frac{\partial}{\partial t}\phi_{tA^{\oplus n}+(1-t)K^{\oplus n}}(x)=\frac{1}{2}\operatorname{tr}\left((A^{\oplus n}-K^{\oplus n})\nabla^2\phi_{tA^{\oplus n}+(1-t)K^{\oplus n}}(x)\right),
\end{equation}
and hence, as in the proof of Proposition 4 in \cite{CP25},
\begin{equation}\label{der_t_der_x_follow}
\frac{\partial^k}{\partial t^k}\phi_{tA^{\oplus n}+(1-t)K^{\oplus n}}(x)=\frac{1}{2^k}\langle (A^{\oplus n}-K^{\oplus n})^{\otimes k},\nabla^{2k}\phi_{(\Gamma_t)^{\oplus n}}(x)\rangle\\
\end{equation}
where 
for any matrix $M\in\mathbb{R}^{(nd)\times (n d)}$, $M^{\otimes k}\in \mathbb{R}^{((n d)\times (n d))^k}$ and it is defined as
\[
\left(M^{\otimes k}\right)_{(i_1,\dots,i_{k}),(j_1,\dots,j_{k})}:=M_{i_1,j_1}\dots M_{i_{k},j_{k}}
\]
for indexes $i_1,\dots,i_{k},j_1,\dots,j_k\in \{1,\dots,nd\}$.

Therefore, using the chain rule and the fact that 
\[
\sqrt{\operatorname{det}((\Gamma_t)^{\oplus n})}\phi_{(\Gamma_t)^{\oplus n}}(x)=\phi_{I_{nd}}(\sqrt{(\Gamma_t)^{\oplus n}}^{-1}x),
\]
observing that $\sqrt{(\Gamma_t)^{\oplus n}}^{-1}=(\sqrt{\Gamma_t}^{\oplus n})^{-1}=(\sqrt{\Gamma_t}^{-1})^{\oplus n}$,
one obtains that
\[ 
\frac{\partial}{\partial t}\phi_{(\Gamma_t)^{\oplus n}}(x)=\frac{1}{2^k}\left\langle \left(\left(\sqrt{\Gamma_t}^{-1}\left(A-K\right)\sqrt{\Gamma_t}^{-1}\right)^{\oplus n}\right)^{\otimes k},\frac{\nabla^{2k}\phi_{I_{nd}}}{\phi_{I_{nd}}}\left(\left(\sqrt{\Gamma_t}^{-1}\right)^{\oplus n}x\right)\right\rangle\phi_{(\Gamma_t)^{\oplus n}}(x)
\]
Again as in proof of Proposition 4 in \cite{CP25}, introduce the notations of $S_{nd}^{(2k)}$ (see \eqref{def_S_k}) and $\mathcal{A}_J$ (see \eqref{def_A_j}) for any $J\in S_{nd}^{(2k)}$ and denote
\[
\tilde{Q}_t:=\sqrt{\Gamma_t}^{-1}\left(A-K\right)\sqrt{\Gamma_t}^{-1}.
\]

Then
    \begin{multline*}
  \frac{\partial}{\partial t}\phi_{(\Gamma_t)^{\oplus n}}(x)\\
  =\frac{1}{2^k}\sum_{i_1=1}^d\dots\sum_{i_{2k}=1}^d(\tilde{Q}_t^{\oplus n})_{i_1,i_2}\dots (\tilde{Q}_t^{\oplus n})_{i_{2k-1},i_{2k}}\left(\frac{1}{\phi_{I_{nd}}}\frac{\partial^{2k}\phi_{I_{nd}}}{\partial x_{i_1}\dots,\partial x_{i_{2k}}}\right)\left(\left(\sqrt{\Gamma_t}^{-1}\right)^{\oplus n}x\right)\phi_{(\Gamma_t)^{\oplus n}}(x)
\end{multline*}
\begin{multline*}
    =\frac{1}{2^k}\sum_{J\in S_{nd}^{(2k)}}\sum_{\alpha\in \mathcal{A}_J}(\tilde{Q}_t^{\oplus n})_{\alpha_1,\alpha_2}\dots (\tilde{Q}_t^{\oplus n})_{\alpha_{2k-1},\alpha_{2k}}\left(\frac{1}{\phi_{I_{nd}}}\frac{\partial^{2k}\phi_{I_{nd}}}{\prod_{i=0}^{n-1}\partial x_{i+1,1}^{j_{di+1}}\dots,\partial x_{i+1,d}^{j_{id+d}}}\right)\cdot\\
    \cdot\left(\left(\sqrt{\Gamma_t}^{-1}\right)^{\oplus n}x\right)\phi_{(\Gamma_t)^{\oplus n}}(x)
\end{multline*}

\begin{multline*}
    =\frac{1}{2^k}\sum_{J\in S_{nd}^{(2k)}}\sum_{\alpha\in \mathcal{A}_J}(\tilde{Q}_t^{\oplus n})_{\alpha_1,\alpha_2}\dots (\tilde{Q}_t^{\oplus n})_{\alpha_{2k-1},\alpha_{2k}}\left(\prod_{i=0}^{n-1}\left(H_{j_{di+1}}(u^{(i)}_1)\dots H_{j_{di+d}}(u^{(i)}_d)\right)_{|_{u^{(i)}=\sqrt{\Gamma_t}^{-1}\tilde{x}_{i+1}}}\right)\cdot\\
    \cdot \phi_{(\Gamma_t)^{\oplus n}}(x)
\end{multline*}
using the definition of the Hermite polynomials \eqref{herm_def}.
\subsection{Proof of Lemma \ref{low_bound_var_nn}}\label{sec_proof_lemma_low_bound_mom_nn}
Observe that, according to the formula \eqref{cov_z}, one has that 
\[
    A^{(L+1)}_{j,j}=C_b+\frac{C_W}{n_L}\sum_{i=1}^{n_L}\sigma^2\left(z_i^{(L)}(x^{(j)})\right).
    \]
    Hence
    \[
    \mathbb{E}\left[\left(A^{(L+1)}_{j,j}-\mathbb{E}\left[A^{(L+1)}_{j,j}\right]\right)^{2p}\right]= \mathbb{E}\left[\left(\frac{C_W}{n_L}\sum_{i=1}^{n_L}\left(\sigma^2\left(z_i^{(L)}(x^{(j)})\right)-\mathbb{E}\left[\sigma^2\left(z_i^{(L)}(x^{(j)})\right)\right]\right)\right)^{2p}\right]
    \]
    \[
    \ge  \mathbb{E}\left[\left(\frac{C_W}{n_L}\sum_{i=1}^{n_L}\left(\sigma^2\left(z_i^{(L)}(x^{(j)})\right)-\mathbb{E}\left[\sigma^2\left(z_i^{(L)}(x^{(j)})\right)\right]\right)\right)^{2}\right]^p
    \]
    \[
    =\left(\frac{C_W}{n_L}\mathbb{E}\left[\left(\sigma^2\left(z_1^{(L)}(x^{(j)})\right)-\mathbb{E}\left[\sigma^2\left(z_1^{(L)}(x^{(j)})\right)\right]\right)^{2}\right]\right)^p
    \]
    using that $\{z_i^{(L)}(x^{(j)})\}_{i=1,\dots,n_L}$ are independent and identically distributed after conditioning with respect to the $\sigma$-field generated by all the weights and biases up to the layer $L-1$ (see Lemma \ref{gaus_stut_NN}). 

    As a consequence of Proposition 10.3 in \cite{Han_Gas}, one has that as $n\to\infty$
    \[
    \operatorname{Var}\left(\sigma^2\left(z_1^{(L)}(x^{(j)})\right)\right)\to  \operatorname{Var}\left(\sigma^2\left(G_1^{(L)}(x^{(j)})\right)\right).
    \]
    Assuming $\sigma$ not constant and $x\ne (0,\dots,0)\in\mathbb{R}^{n_0}$ then 
    \[
    \operatorname{Var}\left(\sigma^2\left(G_1^{(L)}(x^{(j)})\right)\right)>0
    \]
    and it is independent of $n$, therefore  we have that there exist $N\in\mathcal{N}$ and a constant $D_1>0$ such that for every $n\ge N$ 
    \[
    \operatorname{Var}\left(\sigma^2\left(z_1^{(L)}(x^{(j)})\right)\right)>D_1
    \]
    and hence 
    \[
     \mathbb{E}\left[\left(A^{(L+1)}_{j,j}-\mathbb{E}\left[A^{(L+1)}_{j,j}\right]\right)^{2p}\right]>\frac{C_W^pD_1^p}{n_L^p}.
    \]
    For every $n<N$, instead, we have that
    \[
    \mathbb{E}\left[\left(A^{(L+1)}_{j,j}-\mathbb{E}\left[A^{(L+1)}_{j,j}\right]\right)^{2p}\right]>\frac{C_W^pD_1^p}{n_L^p}
    \]

\section{Edgeworth expansion}\label{app:edg_exp}
Following the approach developed in Chapter 5 in \cite{Mc87}, the Edgeworth expansion consists of an approximation of a joint distribution, which is given by a Gaussian density, multiplied by a sum of correction terms that depend on the cumulants and on the Hermite polynomials defined in \eqref{herm_def}.

More in details, call $\psi_X$ and $\psi_Y$ respectively the densities (assuming that they exist) of a random vector  $X=(X_1,\dots,X_N)$  and of a Gaussian vector $Y=(Y_1,\dots,Y_N)$ with values in $\mathbb{R}^{N}$ with $N\in\mathbb{N}$. For every $m:=(m_1,\dots,m_N)\in\mathbb{N}^N$, denote with $K_m(X)$ and $K_m(Y)$ respectively the $m$-cumulants of $X$ and $Y$, which are defined as
\begin{equation}\label{def_cum}
K_m(X)=(-1)^{m_1+\dots+m_N}\frac{\partial^{m_1} }{\partial {t_1}^{m_1}}\dots\frac{\partial^{m_N} }{\partial {t_N}^{m_N}}\log\mathbb{E}\left[e^{i\sum_{j=1}^Nt_jX_j}\right],
\end{equation}
and analogously is defined $K_m(Y)$.

From the definition of cumulants \eqref{def_cum}, it follows that the cumulant generating functions can be expressed as
\begin{equation}\label{cum_gen_func_X}
\log\mathbb{E}\left[e^{\sum_{j=1}^N t_jX_j}\right]=\sum_{|m|=1}^{\infty}\frac{K_m(X)}{k!}{t_1}^{m_1}\dots{t_N}^{m_N}
\end{equation}
and 
\begin{equation}\label{cum_gen_func_Y}
\log\mathbb{E}\left[e^{\sum_{j=1}^N t_jY_j}\right]=\sum_{|m|=1}^{\infty}\frac{K_m(Y)}{k!}{t_1}^{m_1}\dots{t_N}^{m_N}
\end{equation}
where the sums are taken over all the vectors $m=(m_1,\dots,m_N)\in\mathbb{R}^N$ with $|m|:=m_1+\dots+m_N$.

Subtracting \eqref{cum_gen_func_X} and \eqref{cum_gen_func_Y}, one obtains that 
\[
\log\mathbb{E}\left[e^{\sum_{j=1}^N t_jX_j}\right]=\log\mathbb{E}\left[e^{\sum_{j=1}^N t_jY_j}\right]+\sum_{|m|=1}^{\infty}\frac{(K_m(X)-K_m(Y))}{k!}{t_1}^{m_1}\dots{t_N}^{m_N}.
\]
Then, taking the exponential, 
\begin{multline}\label{form_mom_in_function}
\mathbb{E}\left[e^{\sum_{j=1}^N t_jX_j}\right]=\mathbb{E}\left[e^{\sum_{j=1}^N t_jY_j}\right]\exp\left(\sum_{|m|=1}^{\infty}\frac{(K_m(X)-K_m(Y))}{k!}{t_1}^{m_1}\dots{t_N}^{m_N}\right)\\
=\mathbb{E}\left[e^{\sum_{j=1}^N t_jY_j}\right]\left(\sum_{|m|=0}^{M}\frac{\eta_m}{m!}\,{t_1}^{m_1}\dots{t_N}^{m_N}+R_M\right),
\end{multline}
obtaining the last identity expanding the exponential term and calling with $\{\eta_m\}_m$ its coefficients, which are considered as formal moments, and $R_M$ the remainder.

The Edgeworth expansion of $\psi_X$ is finally given by the inversion of the approximate integral transform in \eqref{form_mom_in_function} term by term:
\begin{equation}\label{edg_standard}
\sum_{|m|=0}^{M}\frac{(-1)^{|m|}}{|m|!}\eta_m\frac{\partial^{|m|}}{\partial {x_1}^{m_1}\dots\partial {x_N}^{m_N}}\psi_Y(x).
\end{equation}

\begin{example}
    If, like in our case, $X$ is a conditionally Gaussian vector with conditional covariance matrix $A^{\oplus n}$  and $Y$ is a Gaussian vector with invertible covariance matrix $K^{\oplus n}$, where $A$ and $K$ are matrices in $\mathbb{R}^{d\times d}$, and if both $X$ and $Y$ have values in $\mathbb{R}^{nd}$, we can explicitly write the formal moments $\{\eta_m\}_m$. In fact, for every $t\in\mathbb{R}^{nd}$,
    \[
    \mathbb{E}\left[e^{\langle t,X\rangle}\right]=\mathbb{E}\left[e^{\frac{1}{2}\langle t,A^{\oplus n}t\rangle}\right]
    \]
    and
    \[
    \mathbb{E}\left[e^{\frac{1}{2}\langle t,Y\rangle}\right]=e^{\langle t,K^{\oplus n}t\rangle},
    \]
    therefore
    \[
    \mathbb{E}\left[e^{\langle t,X\rangle}\right]=\mathbb{E}\left[e^{\langle t,Y\rangle}\right]\mathbb{E}\left[e^{\frac{1}{2}\langle t,(A-K)^{\oplus n}t\rangle}\right],
    \]
    observing that $A^{\oplus n}-K^{\oplus n}=(A-K)^{\oplus n}$. We can now easily expand $e^{\frac{1}{2}\langle t,(A-K)^{\oplus n}t\rangle}$ obtaining 
    \[
     \mathbb{E}\left[e^{\langle t,X\rangle}\right]=\mathbb{E}\left[e^{\langle t,Y\rangle}\right]\left(\sum_{k=0}^M\frac{1}{2^k k!}\mathbb{E}\left[\left({\langle t,(A-K)^{\oplus n}t\rangle}\right)^k\right]+R_M\right),
     \]
     where $R_M$ is the remaining term. Hence
     \begin{multline*}
     \mathbb{E}\left[e^{\langle t,X\rangle}\right]=\mathbb{E}\left[e^{\langle t,Y\rangle}\right]\left(\sum_{k=0}^M\frac{1}{2^k k!}\mathbb{E}\left[\left(\sum_{i=1}^{n d}\sum_{j=1}^{n d}t_{i}t_{j}((A-K)^{\oplus n})_{i,j}\right)^k\right]+R_M\right)
     =\mathbb{E}\left[e^{\langle t,Y\rangle}\right]\cdot\\
     \cdot \left(\sum_{k=0}^M\frac{1}{2^k k!}\sum_{i_1=1}^{nd}\sum_{j_1=1}^{nd}\dots\sum_{i_k=1}^{nd}\sum_{j_k=1}^{nd}t_{i_1}t_{j_1}\dots t_{i_k}t_{j_k}\mathbb{E}\left[((A-K)^{\oplus n})_{i_1,j_1}\dots((A-K)^{\oplus n})_{i_k,j_k}\right]+R_M\right)\\
     =\mathbb{E}\left[e^{\langle t,Y\rangle}\right]\left(\sum_{k=0}^M\frac{1}{2^k k!}\sum_{J\in S_{nd}^{(2k)}}\sum_{\alpha\in \mathcal{A}_J}t_{1}^{j_1}\dots t_{d}^{j_{n d}}\mathbb{E}\left[((A-K)^{\oplus n})_{\alpha_1,\alpha_2}\dots((A-K)^{\oplus n})_{\alpha_{2k-1},\alpha_{2k}}\right]+R_M\right),
    \end{multline*}
    recalling the definitions of $S_{nd}^{(2k)}$ in \eqref{def_S_k} and of $\mathcal{A}_J$ for $J\in S_{n d}^{(2k)}$ in \eqref{def_A_j}. From the last equality it immediately follows that for every $k\in\mathbb{N}$ one has that 
   \[
    \eta_{m}=
     \begin{cases}
    0 & \text{if $m\in S_{nd}^{(2k+1)}$}\\
    \frac{(2k)!}{2^k k! }\sum_{\alpha\in \mathcal{A}_m}\mathbb{E}\left[((A-K)^{\oplus n})_{\alpha_1,\alpha_2}\dots((A-K)^{\oplus n})_{\alpha_{2k-1},\alpha_{2k}}\right] & \text{if $m\in S_{nd}^{(2k)}$}.
    \end{cases}
    \]
    Therefore the Edgeworth expansion in \eqref{edg_standard} of order $2M$ reads as
    \begin{multline*}
   \sum_{k=0}^{M}\frac{1}{k!2^k}\sum_{J\in S_{nd}^{(2k)}}\sum_{\alpha\in \mathcal{A}_J}\mathbb{E}\left[\left((A-K)^{\oplus n}\right)_{\alpha_1,\alpha_2}\dots\left((A-K)^{\oplus n}\right)_{\alpha_{2k-1},\alpha_{2k}}\right]\frac{\partial^{2k}}{\partial x_{\alpha_1}\dots\partial x_{\alpha_{2k}}}\psi_Y(x)
   \\
    =\sum_{k=0}^{M}\frac{1}{k!2^k}\langle\mathbb{E}\left[\left((A-K)^{\oplus n}\right)^{\otimes k}\right],\nabla^{2k}\psi_Y(x)\rangle
    \\
    =\sum_{k=0}^{M}\frac{1}{k!2^k}\left\langle\mathbb{E}\left[\left((\sqrt{K}^{-1})^{\oplus n}(A-K)^{\oplus n}(\sqrt{K}^{-1})^{\oplus n}\right)^{\otimes k}\right],\frac{\left(\nabla^{2k}\phi_{I_{nd}}\left((\sqrt{K}^{-1})^{\oplus n}x\right)\right)}{\phi_{I_{nd}}\left((\sqrt{K}^{-1})^{\oplus n}x\right)}\right\rangle\psi_Y\left(x\right),
    \end{multline*}
    calling with $\phi_{I_{nd}}$ the density of a standard Gaussian distribution in $\mathbb{R}^{nd}$. Observing that $(\sqrt{K}^{-1})^{\oplus n}(A-K)^{\oplus n}(\sqrt{K}^{-1})^{\oplus n}=\left(\sqrt{K}^{-1}(A-K)\sqrt{K}^{-1}\right)^{\oplus n}$ and recalling the definitions of Hermite polynomials in \eqref{herm_def}, of the set $S_{n d}^{(2k)}$ for $k\in\mathbb{N}$ in \eqref{def_S_k} and of $\mathcal{A}_J$ for $J=(j_1,\dots,j_{nd})\in S_{nd}^{(2k)}$ in \eqref{def_A_j}, we obtain that the Edgeworth expansion can be written as 
    \begin{multline*}
    \sum_{k=0}^{M}\frac{1}{k!2^k}\sum_{J\in S_{nd}^{(2k)}}\sum_{\alpha\in \mathcal{A}_J}\mathbb{E}\left[(Q^{\oplus n})_{\alpha_1,\alpha_2}\dots(Q^{\oplus n})_{\alpha_{2k-1},\alpha_{2k}}\right]\cdot\\
    \cdot H_{j_1}\left(\left(\left(\sqrt{K}^{-1}\right)^{\oplus n}x\right)_1\right)\dots H_{j_{nd}}\left(\left(\left(\sqrt{K}^{-1}\right)^{\oplus n}x\right)_{nd}\right)\psi_Y\left(x\right)
    \end{multline*}
    \begin{multline}\label{edg_vera}
        =\sum_{k=0}^{M}\frac{1}{k!2^k}\sum_{J\in S_{n d}^{(2k)}}\sum_{\alpha\in \mathcal{A}_J}\mathbb{E}\left[(Q^{\oplus n})_{\alpha_1,\alpha_2}\dots(Q^{\oplus n})_{\alpha_{2k-1},\alpha_{2k}}\right]\cdot\\
    \cdot \prod_{i=0}^{n-1} \left(H_{j_{di+1}}\left(u^{(i)}_1\right)\dots H_{j_{di+d}}\left(u^{(i)}_{d}\right)\right)_{|_{u^{(i)}=\sqrt{K}^{-1}\tilde{x}_{i+1}}}\phi_K\left(\tilde{x}_{i+1}\right),
    \end{multline}
    using the notation as in \eqref{edg_measure} and calling $Q:=\sqrt{K}^{-1}(A-K)\sqrt{K}^{-1}$.
   
\end{example}

\section*{Acknowledgment}
{

The author was supported by the Luxembourg National Research Fund via the grant

\noindent PRIDE/21/16747448/MATHCODA. The author is grateful to Giovanni Peccati for guidance and support throughout this work.
}

\bibliographystyle{plainurl}
\bibliography{references}

@book{AS65,
  title={Handbook of Mathematical Functions: With Formulas, Graphs, and Mathematical Tables},
  author={Abramowitz, M. and Stegun, I.A.},
  isbn={9780486612720},
  lccn={lc65012253},
  series={Applied mathematics series},
  url={https://books.google.lu/books?id=MtU8uP7XMvoC},
  year={1965},
  publisher={Dover Publications}
}

@InProceedings{Ai20,
  title = 	 {Why bigger is not always better: on finite and infinite neural networks},
  author =       {Aitchison, Laurence},
  booktitle = 	 {Proceedings of the 37th International Conference on Machine Learning},
  pages = 	 {156--164},
  year = 	 {2020},
  editor = 	 {III, Hal Daumé and Singh, Aarti},
  volume = 	 {119},
  series = 	 {Proceedings of Machine Learning Research},
  month = 	 {13--18 Jul},
  publisher =    {PMLR},
  url = 	 {https://proceedings.mlr.press/v119/aitchison20a.html}
}

@misc{A19,
      title={Finite size corrections for neural network Gaussian processes}, 
      author={J. M. Antognini},
      year={2019},
      eprint={1908.10030},
      archivePrefix={arXiv},
      primaryClass={cs.LG},
      url={https://arxiv.org/abs/1908.10030}, 
}

@article{Torr23,
  title={Normal approximation of random gaussian neural networks},
  author={Apollonio, Nicola and De Canditiis, Daniela and Franzina, Giovanni and Stolfi, Paola and Torrisi, Giovanni Luca},
  journal={Stochastic Systems},
  volume={15},
  number={1},
  pages={88--110},
  year={2025},
  publisher={INFORMS}
}

@book{Apostol67,
  author    = {T. M. Apostol},
  title     = {Calculus, Volume 1: One-Variable Calculus, with an Introduction to Linear Algebra},
  publisher = {John Wiley \& Sons},
  year      = {1967},
  isbn      = {0-471-00005-1}
}

@article{PAPGGR23,
  title={A statistical mechanics framework for Bayesian deep neural networks beyond the infinite-width limit},
  author={Pacelli, R. and Ariosto, S. and Pastore, M. and Ginelli, F. and Gherardi, M. and Rotondo, P.},
  journal={Nature Machine Intelligence},
  volume={5},
  number={12},
  pages={1497--1507},
  year={2023},
  publisher={Nature Publishing Group UK London}
}

@misc{repo_edg_sim,
  author    = {Celli, L.},
  title     = {Edgworth Expansion for FCNNs (Simulations)},
  year      = {2026},
  doi       = {10.5281/zenodo.19737987},
  url       = {https://doi.org/10.5281/zenodo.19737987}
}

@misc{C26,
      title={Wide neural networks with general weights: convergence rate and explicit dependence on the hyper-parameters}, 
      author={L. Celli},
      year={2026},
      eprint={2601.21539},
      archivePrefix={arXiv},
      primaryClass={math.PR},
      url={https://arxiv.org/abs/2601.21539}, 
}

@misc{CP25,
      title={Entropic bounds for conditionally Gaussian vectors and applications to neural networks}, 
      author={L. Celli and G. Peccati},
      year={2025},
      eprint={2504.08335},
      archivePrefix={arXiv},
      primaryClass={math.PR},
      url={https://arxiv.org/abs/2504.08335}, 
}

@Article{Cybenko,
 Author = {Cybenko, G.},
 Title = {Approximation by superpositions of a sigmoidal function},
 FJournal = {MCSS. Mathematics of Control, Signals, and Systems},
 Journal = {Math. Control Signals Syst.},
 ISSN = {0932-4194},
 Volume = {2},
 Number = {4},
 Pages = {303--314},
 Year = {1989},
 Language = {English},
 DOI = {10.1007/BF02551274},
 zbMATH = {4114557},
 Zbl = {0679.94019}
}

@book{Ko06,
  title={Series Approximation Methods in Statistics},
  author={Kolassa, J.E.},
  isbn={9780387322278},
  lccn={97020556},
  series={Lecture Notes in Statistics},
  url={https://books.google.lu/books?id=aLVY_gVEomgC},
  year={2006},
  publisher={Springer New York}
}

@misc{BR25,
      title={Finite-Dimensional Gaussian Approximation for Deep Neural Networks: Universality in Random Weights}, 
      author={K. Balasubramanian and N. Ross},
      year={2025},
      eprint={2507.12686},
      archivePrefix={arXiv},
      primaryClass={stat.ML},
      url={https://arxiv.org/abs/2507.12686}, 
}

@article{BT24,
  author = {A. Basteri and D. Trevisan},
  year = {2024},
  title = {Quantitative Gaussian approximation of randomly initialized deep neural networks},
  journal = {Mach. Learn.},
  volume = {113},
  pages = {6373--6393}
}

@book{BR86,
  title={Normal Approximation and Asymptotic Expansions},
  author={Bhattacharya, R.N. and Rao, R.R.},
  isbn={9780898719444},
  series={Classics in Applied Mathematics},
  url={https://books.google.lu/books?id=H1lOIVHcRDEC},
  year={1986},
  publisher={Society for Industrial and Applied Mathematics}
}

@inproceedings{BFF24,
  author = {A. Bordino and S. Favaro and S. Fortini},
  year = {2024},
  title = {Non-asymptotic approximations of Gaussian neural networks via second-order Poincaré inequalities},
  booktitle = {Proceedings of Machine Learning Research (AABI24)}
}

@article{CC24,
author = {Carvalho, L. and Costa, J. L. and Mour\~{a}o, J. and Oliveira, G.},
title = {The Positivity of the Neural Tangent Kernel},
journal = {SIAM Journal on Mathematics of Data Science},
volume = {7},
number = {2},
pages = {495-515},
year = {2025},
doi = {10.1137/24M1659534},

URL = { https://doi.org/10.1137/24M1659534}
    }

@article{FHMNP,
  title={Quantitative clts in deep neural networks},
  author={Favaro, S. and Hanin, B. and Marinucci, D. and Nourdin, I. and Peccati, G.},
  journal={Probability Theory and Related Fields},
  volume={191},
  number={3},
  pages={933--977},
  year={2025},
  publisher={Springer}
}

@article{Fort22,
 arxiv = {https://arxiv.org/abs/2105.06868},
 author = {Fortuin, V.},
 journal = {International Statistical Review},
 number = {3},
 pages = {563--591},
 publisher = {Wiley Online Library},
 title = {Priors in Bayesian deep learning: A review},
 url = {https://onlinelibrary.wiley.com/doi/full/10.1111/insr.12502},
 volume = {90},
 year = {2022}
}

@book{H92,
  author    = {Hall, P.},
  title     = {The Bootstrap and Edgeworth Expansion},
  year      = {1992},
  publisher = {Springer},
  address   = {New York},
  isbn      = {978-0-387-97847-9}
}

@article{Han22,
  author = {B. Hanin},
  year = {2023},
  title = {Random neural networks in the infinite width limit as Gaussian processes},
  journal = {Ann. Appl. Probab.},
  volume = {33},
  number = {6A},
  pages = {4798--4819}
}

@article{Han_Gas,
  author = {B. Hanin},
  year = {2024},
  title = {Random Fully Connected Neural Networks as Perturbatively Solvable Hierarchies},
  journal = {Journal of Machine Learning Research}
}

@article{HBNPS,
  author       = {J. Hron and
                  Y. Bahri and
                  R. Novak and
                  J. Pennington and
                  J. Sohl{-}Dickstein},
  title        = {Exact Posterior Distributions of Wide Bayesian Neural Networks},
  journal      = {CoRR},
  volume       = {abs/2006.10541},
  year         = {2020},
  url          = {https://arxiv.org/abs/2006.10541},
  eprinttype    = {arXiv},
  eprint       = {2006.10541},
}

@inproceedings{Klu22,
  author = {A. Klukowski},
  year = {2022},
  title = {Rate of convergence of polynomial networks to Gaussian processes},
  booktitle = {Conference on Learning Theory, Proceedings of Machine Learning Research},
  pages = {701--722}
}

@inproceedings{LBNSPS,
  author = {J. Lee and Y. Bahri and R. Novak and S. Schoenholz and J. Pennington and J. Sohl-Dickstein},
  year = {2018},
  title = {Deep neural networks as Gaussian processes},
  booktitle = {International Conference on Learning Representation}
}

@article{Lu23,
  title = {Bayesian inference with finitely wide neural networks},
  author = {Lu, C.-K.},
  journal = {Phys. Rev. E},
  volume = {108},
  issue = {1},
  pages = {014311},
  numpages = {8},
  year = {2023},
  month = {Jul},
  publisher = {American Physical Society},
  doi = {10.1103/PhysRevE.108.014311},
  url = {https://link.aps.org/doi/10.1103/PhysRevE.108.014311}
}

@misc{MPS25,
      title={Edgeworth expansion on Wiener chaos}, 
      author={P. Mansanarez and G. Poly and Y. Swan},
      year={2025},
      eprint={2510.14002},
      archivePrefix={arXiv},
      primaryClass={math.PR},
      url={https://arxiv.org/abs/2510.14002}, 
}

@inproceedings{MHRTG,
  author = {A. Matthews and J. Hron and M. Rowland and R. Turner and Z. Ghahramani},
  year = {2018},
  title = {Gaussian process behaviour in wide deep neural networks},
  booktitle = {International Conference on Learning Representation}
}

@book{Mc87,
  author    = {McCullagh, P.},
  title     = {Tensor Methods in Statistics},
  year      = {1987},
  series    = {Monographs on Statistics and Applied Probability},
  publisher = {Chapman and Hall/CRC},
  doi       = {10.1201/9781351077118},
  url       = {https://doi.org/10.1201/9781351077118}
}

@article{NDSR20,
  title={Predicting the outputs of finite deep neural networks trained with noisy gradients.},
  author={G. Naveh and O. B. David and H. Sompolinsky and Z. Ringel},
  journal={Physical review. E},
  year={2020},
  volume={104 6-1},
  pages={
          064301
        },
  url={https://api.semanticscholar.org/CorpusID:238226559}
}

@book{Neal96,
  author = {R. Neal},
  year = {1996},
  title = {Bayesian learning for neural networks},
  volume = {118},
  publisher = {Springer}
}

@inproceedings{NO24,
title={Improving the Gaussian Approximation in Neural Networks: Para-Gaussians and Edgeworth Expansions},
author={M. Nica and J. Ortmann},
booktitle={NeurIPS 2024 Workshop on Mathematics of Modern Machine Learning},
year={2024},
url={https://openreview.net/forum?id=92q7WV4od7}
}

@book{NP12, place={Cambridge}, series={Cambridge Tracts in Mathematics}, title={Normal Approximations with Malliavin Calculus: From Stein’s Method to Universality}, publisher={Cambridge University Press}, author={Nourdin, Ivan and Peccati, Giovanni}, year={2012}, collection={Cambridge Tracts in Mathematics}}

@book{PT11,
  title={Wiener Chaos: Moments, Cumulants and Diagrams: A survey with Computer Implementation},
  author={Peccati, G. and Taqqu, M.S.},
  isbn={9788847016798},
  lccn={2010929482},
  series={Bocconi \& Springer Series},
  url={https://books.google.lu/books?id=qizrXkh1LrkC},
  year={2011},
  publisher={Springer Milan}
}

@article{PC21,
  title={The limitations of large width in neural networks: A deep Gaussian process perspective},
  author={Pleiss, G. and Cunningham, J. P.},
  journal={Advances in Neural Information Processing Systems},
  volume={34},
  pages={3349--3363},
  year={2021}
}

@book{GPML,
Title={Gaussian Processes for Machine Learning},
author={C. E. Rasmussen, C. K. I. Williams},
year={2006},
publisher={MIT Press,
ISBN	026218253X}
}

@InProceedings{SWG14,
  title = 	 {{Student-t Processes as Alternatives to Gaussian Processes}},
  author = 	 {Shah, A. and Wilson, A. and Ghahramani, Z.},
  booktitle = 	 {Proceedings of the Seventeenth International Conference on Artificial Intelligence and Statistics},
  pages = 	 {877--885},
  year = 	 {2014},
  editor = 	 {Kaski, Samuel and Corander, Jukka},
  volume = 	 {33},
  series = 	 {Proceedings of Machine Learning Research},
  address = 	 {Reykjavik, Iceland},
  month = 	 {22--25 Apr},
  publisher =    {PMLR},
  url = 	 {https://proceedings.mlr.press/v33/shah14.html}
}

@book{Stanley11,
place={Cambridge}, 
edition={2},
series={Cambridge Studies in Advanced Mathematics},
title={Enumerative Combinatorics},
publisher={Cambridge University Press},
author={Stanley, R. P.},
year={2011},
collection={Cambridge Studies in Advanced Mathematics}}

@misc{Trev,
  author = {D. Trevisan},
  year = {2023},
  title = {Wide deep neural networks with Gaussian weights are very close to Gaussian processes},
  note = {arXiv:2312.06092 [math.ST]}
}

@book{We13,
  title={Asymptotic Expansions for General Statistical Models},
  author={Wefelmeyer, W. and Pfanzagl, J.},
  isbn={9781461564799},
  series={Lecture Notes in Statistics},
  url={https://books.google.lu/books?id=L14FCAAAQBAJ},
  year={2013},
  publisher={Springer New York}
}
\end{document}